\documentclass[11pt]{article}
\pdfoutput=1
\usepackage{arxiv}

\usepackage{hyperref}
\usepackage{url}
\usepackage{graphicx} 
\usepackage{amsmath,amssymb,amsthm,amsfonts, algorithmicx, algpseudocode, algorithm}
\usepackage{bm} 
\usepackage{mathrsfs}
\usepackage{textcomp}
\usepackage{tcolorbox}
\usepackage{float,xcolor}
\usepackage{indentfirst,color}
\usepackage{fancyhdr}
\usepackage{epstopdf}
\usepackage{hyperref}
\usepackage{amsopn}
\usepackage{amstext}
\usepackage{amscd}
\usepackage{latexsym} 
\usepackage{natbib}
\usepackage{chemfig}
\usepackage{subcaption}
\usepackage{tikz}
\usepackage{enumitem}
\usepackage{authblk}
\usetikzlibrary{arrows.meta, positioning, shapes, calc}
\allowdisplaybreaks

\newtheorem{thm}{Theorem}[section]

\newtheorem{coro}{Corollary}[section]

\newtheorem{prop}{Proposition}[section]

\theoremstyle{definition}
\newtheorem{defn}{Definition}[section]
\newtheorem{lemma}{Lemma}[section]
\newtheorem{rmk}{Remark}[section]

\newtheorem{asp}{Assumption}

\title{Differentially Private Two-Stage Gradient Descent for Instrumental Variable Regression}


\author{\bf Haodong Liang}
\author{\bf Yanhao Jin}
\author{\bf Krishnakumar Balasubramanian}
\author{\bf Lifeng Lai}

\affil{University of California, Davis \\ 
\texttt{\{hdliang, yahjin, kbala, lflai\}@ucdavis.edu}}


\begin{document}
\maketitle

\begin{abstract}
	We study \emph{instrumental variable regression} (IVaR) under \emph{differential privacy} constraints. 
Classical IVaR methods (like two-stage least squares regression) rely on solving moment equations that directly use sensitive covariates and instruments, creating significant risks of privacy leakage and posing challenges in designing algorithms that are both statistically efficient and differentially private.
We propose a \emph{noisy two-stage gradient descent} algorithm that ensures $\rho$-zero-concentrated
differential privacy by injecting carefully calibrated noise into the gradient updates. 
Our analysis establishes finite-sample convergence rates for the proposed method, showing that the algorithm achieves consistency while preserving privacy. 
In particular, we derive precise bounds quantifying the trade-off among optimization, privacy, and sampling error. To the best of our knowledge, this is the first work to provide both privacy guarantees and provable convergence rates for instrumental variable regression in linear models. 
We further validate our theoretical findings with experiments on both synthetic and real datasets, demonstrating that our method offers practical accuracy-privacy trade-offs. 
\end{abstract}
\section{Introduction}
Instrumental variable regression (IVaR) is a foundational tool in causal inference, designed to recover structural parameters when standard estimators fail due to endogeneity. In many observational settings, covariates are influenced by unobserved confounders, causing naive methods (such as the ordinary least squares (OLS) in the context of linear regression) to produce biased and inconsistent estimates. IVaR circumvents this by leveraging \emph{instruments}, which are variables that are predictive of the endogenous regressors but independent of hidden confounders, to enable consistent estimation of causal effects \citep{hausman2001mismeasured,wooldridge2010econometric,10.1257/jep.15.4.69}. This perspective is increasingly important in machine learning, for example in recommendation systems where user exposure is confounded by prior preferences~\citep{si2022model}, or in reinforcement learning where actions and rewards are jointly influenced by unobserved context~\citep{xu2023instrumental}. In such settings, IVaR provides a principled way to disentangle causal effects from spurious correlations, enabling more reliable decision making.

However, many applications of IVaR involve sensitive data, such as individual health records, financial transactions, or user interactions, where protecting privacy is of paramount importance. 
In such settings, releasing model estimates or even intermediate statistics can leak information about individuals in the dataset. 
Differential privacy (DP) \citep{dwork2006calibrating} provides a mathematically rigorous framework to ensure that an algorithm’s output does not reveal sensitive information about any single data point. 
Despite the importance of IVaR in causal inference, to the best of our knowledge, there are \emph{no prior works} addressing the problem of performing IVaR under differential privacy. 
This gap motivates the central question of this paper: 
\begin{center}
\emph{Can we design differentially private algorithms for instrumental variable models \\that achieve statistically efficient convergence rates?} 
\end{center}

Our work focuses on answering this question in the context of linear IVaR models. To situate our contributions, we briefly review existing work on DP methods for OLS regression, with additional discussion in Section~\ref{sec:relwork}. 
Several predominant approaches have emerged in the literature: 
(i)~perturbion methods, where the empirical covariance and cross-covariance matrices are privatized before solving the normal equations;  
(ii)~consensus-based methods, including propose-test-release and exponential mechanism approaches, which directly privatize the estimator through carefully designed randomized output rules; and  
(iii)~gradient perturbation methods, where iterative optimization algorithms are made private by clipping gradients and injecting calibrated Gaussian noise.  
While all three approaches ensure differential privacy, gradient perturbation combined with clipping has been shown to yield the sharpest statistical rates in OLS regression, particularly in high-dimensional and finite-sample regimes \citep{bassily2014private, brown2024private}. 

Given the centrality of IVaR in causal inference, it is natural to explore whether the aforementioned techniques can be adapted to this setting. Unlike OLS, however, IVaR is based on moment conditions involving both covariates and instruments, making it less straightforward to design private algorithms. In particular, sufficient-statistics perturbation and consensus-based methods have not been explored, and their adaptation is non-trivial due to the inherent ill-posedness of IVaR under weak instruments and the sensitivity of the moment equations. These challenges stem from the fact that the closed-form 2SLS estimator depends on nested matrix multiplications and inversions, whose sensitivities are difficult to characterize directly. To address this, we design a two-stage gradient-descent-based algorithm that enables injecting noise at the iteration level in a principled manner. This structure allows us to rigorously control sensitivity, calibrate the noise, and derive non-asymptotic utility guarantees.
 
Specifically, we make the following \textbf{contributions} in this work:\vspace{-0.1in}
\begin{itemize}[noitemsep, leftmargin=-0.001in]
    \item We introduce \texttt{DP-2S-GD} (Algorithm \ref{alg: DP-2S-GD-II}), the first differentially private algorithm for instrumental variable regression, built on noisy gradient descent with gradient clipping.
    \item We establish finite-sample non-asymptotic error bound for \texttt{DP-2S-GD} (Theorem \ref{thm: main result II}), explicitly characterizing the trade-off between optimization, privacy, and sampling error. The main technical challenge is to carefully control the interaction between privacy-induced noise and the contraction of the gradient dynamics across iterations, with the privacy guarantee ensured by Proposition~\ref{lem: privacy II - main}. 
    \item We validate our theoretical analysis with experiments on synthetic and real-world datasets, demonstrating practical accuracy-privacy trade-offs (Section \ref{sec: experiments}). 
\end{itemize}\vspace{-0.1in}
By developing differentially private algorithm for IVaR and establishing its theoretical bound, we enable researchers to perform robust causal analyses without compromising the confidentiality of individuals in the dataset.

\subsection{Related work}\label{sec:relwork}

\textbf{Differential Privacy for Regression.} One can group private regression methods into the following broad families. (1) Output/objective perturbation (private empirical risk minimization (ERM)): add noise to the final estimator (output perturbation) or inject a random linear/quadratic term into a strongly convex loss before optimizing (objective perturbation); these one-shot mechanisms give $(\varepsilon, \delta)$-DP guarantees and excess-risk bounds for convex ERM (\cite{chaudhuri2011differentially}; \cite{kifer2012private}; \cite{bassily2014private}). Recent refinements, e.g. \cite{redberg2023improving}, leverage subsampling and tighter accounting to improve accuracy. (2) Sufficient-statistics (matrix) perturbation: release noisy surrogates of ($\mathbf{X}^{\top}\mathbf{X}, \mathbf{X}^{\top}\mathbf{y}$) (or related second-moment structures) and then solve the (regularized) normal equations; this route enables OLS-specific inference but can suffer under ill-conditioning because noise is injected at the Gram-matrix level (\cite{dwork2014analyze}; \cite{sheffet2017differentially}). \cite{tsfadia2022friendlycore} proposes a subsample-and-aggregate framework that can, in principle, be adapted to regression settings by privately estimating the relevant sufficient statistics on carefully selected data subsets. Further developments in this direction include~\cite{bernstein2019differentially} and ~\cite{ferrando2024private}. (3) Exponential mechanism: privately selects an output by randomly choosing among candidates with probabilities that grow exponentially with their quality score, with parameters controlling how strongly it favors the higher-scoring options. This mechanism is frequently applied in constructing algorithm to privately select a regression model from a pool of non-private OLS fits on subsets of the data (\cite{ramsay2021differentially}, \cite{cumings2022differentially}, \cite{amin2022easy}). (4) Gradient perturbation (DP-(S)GD): clip per-example (mini-batch or full) gradients and add Gaussian noise at each step, tracking privacy with bounded log moment generating function of privacy loss random variable \cite{wang2019subsampled}, R\'enyi DP, and subsampled-RDP-which yields tight composition for many small releases and scales well to large $n, p$ without forming $\mathbf{X}^{\top} \mathbf{X}$. (\cite{abadi2016deep}; \cite{bun2016concentrated}; \cite{mironov2017renyi}; \cite{wang2019subsampled}). 

We favor gradient perturbation for multi-stage estimators like IVaR because it (i) composes tightly across many noisy steps using modern privacy accountants, (ii) avoids spectrum-dependent blow-ups from noising $\mathbf{X}^{\top} \mathbf{X}$ (\cite{sheffet2017differentially}) and (iii) yields strong convergence rates while fitting standard training pipelines (including using minibatches, streaming, early stopping) and enabling modular, stage-wise design, which is preferable for practice (\cite{bassily2014private}, \cite{abadi2016deep}). Although there exists DP techniques for estimating gram matrices that avoid blow-ups, e.g., via carefully calibrated noise or regularization (\cite{brown2023fast}, \cite{kamath2019privately}), the purely sufficient statistics-based pipelines require larger sample sizes (polynomial to condition number) than gradient-based approaches to reach a comparable accuracy in high dimensions \citep{{brown2024insufficient}}. That said, we note that the convergence of private first-order gradient methods still depend heavily on the condition number \cite{varshney2022nearly, liu2023near}.

\textbf{Instrumental Variable Regression (IVaR)} has been extensively studied in econometrics~\citep{10.1257/jep.15.4.69,angrist2009mostly}. Classical methods such as two-stage least squares (2SLS) admit closed-form solutions but face limitations in modern applications: they do not scale well to high-dimensional or streaming data, cannot easily incorporate regularization, and are restricted to linear models. This has motivated optimization-based approaches, including convex–concave formulations of nonlinear IV~\citep{muandet2020dualinstrumentalvariableregression}, stochastic optimization methods for scalable and online estimation~\citep{della2023stochastic,chen2024stochasticoptimizationalgorithmsinstrumental,peixoto2024nonparametric}, and bi-level gradient descent algorithms with convergence guarantees~\citep{liang2025transformers}. Extensions to nonlinear IV include kernel-based methods~\citep{singh2019kernel} and DeepIV~\citep{hartford2017deep}. Despite these advances, prior work assumes unrestricted access to the data and does not provide end-to-end differential privacy guarantees, which are increasingly critical in sensitive domains such as healthcare, finance, and online platforms. To our knowledge, no existing method offers DP guarantees with finite-sample convergence rates for linear IV/2SLS that explicitly account for instrument strength, sample size, dimension, and iteration complexity.

\textit{Notations:}\quad Throughout this paper, unless otherwise specified, we use lower-case letters to denote random variable or individual data samples, and upper-case letters to denote datasets, i.e. collections of samples. Bolded letters represent vectors and matrices, whereas unbolded letters represent scalars. 

\section{Preliminaries}
\subsection{Privacy notions}
We first review widely used notions of privacy in the literature. Two datasets $D$ and $D'$ are said to be \emph{neighbors} if they differ in exactly one entry. The concept of neighboring datasets allows us to formally quantify the level of differential privacy. The two most common notions are $(\varepsilon,\delta)$-differential privacy and zero-concentrated differential privacy (zCDP).
\begin{defn}[$(\varepsilon,\delta)$-Differential Privacy \citep{dwork2006calibrating}]
A randomized mechanism $M$ satisfies $(\varepsilon,\delta)$-differential privacy if for all neighboring datasets $D,D'$ and all measurable sets $S$,
we have $\Pr[M(D) \in S] \leq e^{\varepsilon} \Pr[M(D') \in S] + \delta.$ Here $\varepsilon \geq 0$ controls the multiplicative privacy loss, while $\delta \in [0,1]$ allows for a small probability of arbitrary deviation.
\end{defn}

\begin{defn}[Zero-Concentrated Differential Privacy (zCDP) \citep{dwork2016concentrated,bun2016concentrated}]
A randomized mechanism $M$ satisfies $\rho$-zero-concentrated differential privacy ($\rho$-zCDP) if for all neighboring datasets $D,D'$ and all $\alpha > 1$, we have the $D_{\alpha}\!\left(M(D)\,\|\,M(D')\right) \le \rho \alpha$, where $D_{\alpha}(P\|Q)$ denotes the R\'enyi divergence (see Appendix~\ref{sec:renyidef} for the definition) of order $\alpha$ between distributions $P$ and $Q$.
\end{defn}

While $(\varepsilon,\delta)$-DP is the most widely used notion of privacy, it can be too coarse for analyzing iterative mechanisms, as composition accumulates $\varepsilon$ and $\delta$ linearly\footnote{$(\epsilon,\delta)$-DP also satisfies an advanced composition that asymptotically matches zCDP, but the composition is more cumbersome and typically less practical than the clean additive composition offered by zCDP.}. In contrast, zero-concentrated differential privacy (zCDP) characterizes privacy loss through R\'enyi divergences, which ensures that the privacy loss random variable enjoys a sub-Gaussian concentration property. This yields two key benefits: (i) \emph{tighter composition}, since zCDP parameters add under composition, and (ii) \emph{smooth conversion}, since $\rho$-zCDP implies $(\varepsilon,\delta)$-DP with $\varepsilon = \rho + 2\sqrt{\rho \log(1/\delta)}$; see~\citet[Proposition 1.3]{bun2016concentrated}. As a result, we choose zCDP for technical convenience since it provides simple additive composition rule and leaner formulas in our context where we compose a large number of identical Gaussian mechanisms across both stages in 2SLS algorithm.

\subsection{IVaR Model and Assumptions}

Endogeneity is a central challenge in linear regression. Suppose we aim to estimate the causal effect of the regressor $\mathbf{x}\in\mathbb{R}^p$ on the outcome $y\in\mathbb{R}$. However, there exists an unobserved confounder $\mathbf{u}$ that affects both $\mathbf{x}$ and $y$, thereby violating the standard exogeneity assumption that $\mathbf{x}$ is uncorrelated with the noise. As a result, the OLS estimator becomes biased and inconsistent. Instrumental variable regression (IVaR) is a widely adopted method to handle endogeneity by including $\mathbf{z}\in\mathbb{R}^q$, an instrumental variable (IV), to the model \citep{10.1257/jep.15.4.69}:
\begin{equation}\label{eq: ivar model}
    \begin{aligned}
        y=\boldsymbol{\beta}^{\top}\mathbf{x}+\epsilon_1,\quad \quad
        \mathbf{x}=\boldsymbol{\Theta}^\top\mathbf{z}+\boldsymbol{\epsilon}_2,
    \end{aligned}
\end{equation}
where the error terms $\epsilon_1$ and $\boldsymbol{\epsilon}_2$ are correlated due to the common confounder $\boldsymbol{u}$; see Figure~\ref{fig: ivarfigure} for an illustration. Given the dataset $(\mathbf{Z}, \mathbf{X}, \mathbf{Y})=\left\{\left(\mathbf{z}_i, \mathbf{x}_i, y_i\right)\right\}_{i=1}^{n}$\footnote{Throughout this paper, we assume each entry of the dataset is independently and identically distributed (i.i.d.).}, the objective of the IVaR model is to solve the following bi-level optimization problem:
\begin{equation}\label{eq: 2SLS problem}
  \hat{\boldsymbol{\beta}}=\arg\min_{\boldsymbol{\beta}\in \mathbb{R}^p}\Big\{\mathcal{L}(\boldsymbol{\beta})=\frac{1}{n} \sum_{i=1}^n\left(y_i-\boldsymbol{\beta}^\top\hat{\boldsymbol{\Theta}}^\top\mathbf{z}_i   \right)^2\Big\},
        \text{s.t. }\hat{\boldsymbol{\Theta}}=\underset{\boldsymbol{\Theta}\in\mathbb{R}^{q\times p} }{\arg \min }\Big\{\frac{1}{n} \sum_{j=1}^n\|\mathbf{x}_j- \boldsymbol{\Theta}^\top \mathbf{z}_j\|^2\Big\}.
\end{equation}
Optimization problem \ref{eq: 2SLS problem} admits a closed-form solution. A classical approach to solve \eqref{eq: 2SLS problem} is the two-stage least squares (2SLS) estimator; see Definition~\ref{def:2sls}.
\begin{defn}[\textsf{2SLS} estimator] \label{def:2sls}
	Given observational data $(\mathbf{Z},\mathbf{X},\mathbf{Y})=\{(\mathbf{z}_i,\mathbf{x}_i, y_i)\}_{i=1}^n$, the 2SLS estimator $\hat{\boldsymbol{\beta}}_{\textsf{2SLS}}$ is obtained through two consecutive OLS regressions:\vspace{-2mm}
	   \begin{itemize}[noitemsep]
		   \item [i.] \emph{First stage}: Regress $\mathbf{X}$ on $\mathbf{Z}$ to obtain $\hat{\boldsymbol{\Theta}}$ \vspace{-1mm}
		   \begin{align*}
			   \hat{\boldsymbol{\Theta}} = (\mathbf{Z}^\top\mathbf{Z})^{-1}\mathbf{Z}^\top\mathbf{X}.
		   \end{align*}\vspace{-4mm}
		   \item [ii.] \emph{Second stage}: Regress $\mathbf{Y}$ on $\hat{\mathbf{X}}:=\mathbf{Z}\hat{\boldsymbol{\Theta}}$ to obtain $\hat{\boldsymbol{\beta}}_{\textsf{2SLS}}$:\vspace{-1mm}
		   \begin{align*}
			   \hat{\boldsymbol{\beta}}_{\textsf{2SLS}} = (\hat{\boldsymbol{\Theta}}^\top\mathbf{Z}^\top\mathbf{Z}\hat{\boldsymbol{\Theta}})^{-1}\hat{\boldsymbol{\Theta}}^\top\mathbf{Z}^\top\mathbf{Y}.
		   \end{align*}
	   \end{itemize}
   \end{defn}\vspace{-2mm}

In the following sections, we will use $\hat{\boldsymbol{\beta}}$ to denote the 2SLS estimator for simplicity. We impose the following standard assumptions for IVaR model.
\begin{asp}[IVaR Assumptions]\label{asp: IV}
    A random variable $\mathbf{z}\in \mathbb{R}^q$ is a valid IV, if it satisfies:
        \begin{itemize}[noitemsep]
        \item[(i)] Fully identification: $q\geq p$ (without loss of generality, we assume data $\mathbf{Z},\mathbf{X}$ are full rank).
        \item [(ii)] Correlation to $\boldsymbol{x}$: $\textsf{Corr}(\mathbf{z},\mathbf{x})\neq\mathbf{0}$.
        \item [(iii)] Exclusion to $y$: $\textsf{Corr}(\mathbf{z},\epsilon_1)=\mathbf{0}$.
    \end{itemize}
\end{asp}
In Assumption \ref{asp: IV}, condition (i) ensures the existence of the unique solution $\hat{\boldsymbol{\beta}}$ in~\eqref{eq: 2SLS problem}, condition (ii) guarantees that the instrument explains nontrivial variation in the endogenous regressor $\mathbf{x}$, and condition (iii) ensures that the instrument affects the outcome $y$ only through $\mathbf{x}$. These conditions are crucial for eliminating endogeneity and achieving consistent estimation for $\boldsymbol{\beta}$. 
See \citet[Chapter 12]{stock2011introduction} for a detailed discussion.
\begin{figure}[t]
    \centering  
    \begin{tikzpicture}[
    node distance=18mm and 23mm,
    every node/.style={font=\small},
    latent/.style={ellipse, draw, minimum width=26mm, minimum height=9mm},
    observed/.style={rectangle, draw, minimum width=28mm, minimum height=9mm, rounded corners=2pt},
    confound/.style={ellipse, draw, dashed, minimum width=28mm, minimum height=9mm, fill=gray!15},
    >=Stealth
    ]
    \node[latent] (Z) {Instrument $\mathbf{z}$};
    \node[observed, right=of Z] (X) {Endogenous regressor $\mathbf{x}$};
    \node[observed, right=of X] (Y) {Outcome $y$};
    \node[confound, below=of $(X)!0.5!(Y)$] (U) {Unobserved confounder $\mathbf{u}$};
    \draw[->, blue, thick] (Z) -- (X) node[midway, above] {\small relevance};
    \draw[->,  blue, thick] (X) -- (Y) node[midway, above] {\small causal effect};
    \draw[->, dashed] (U) -- (X);
    \draw[->, dashed] (U) -- (Y);
    \draw[->, red, thick, dashed] 
        ($(Z.east)+(0.01cm,0)$) 
        .. controls +(-1cm,1cm) and +(-1.4cm,1.8cm) .. 
        ($(Y.west)+(0,0)$)
        node[midway, allow upside down] (crosspos) {};
    \node[font=\footnotesize\bfseries, red] at (crosspos) {X};
    \node[font=\itshape, red, above=7mm of Y, xshift=-60mm] (note)
        {Exclusion: No direct $\mathbf{z}\!\to\!y$ path};
    \end{tikzpicture}
    \caption{IVaR model: Instrument $\mathbf{z}$ is correlated with the endogenous regressor $\mathbf{x}$ and influences the outcome $y$ only indirectly through $\mathbf{x}$, while an unobserved confounder $\mathbf{u}$ affects both $\mathbf{x}$ and $y$.}
    \label{fig: ivarfigure}
\end{figure}
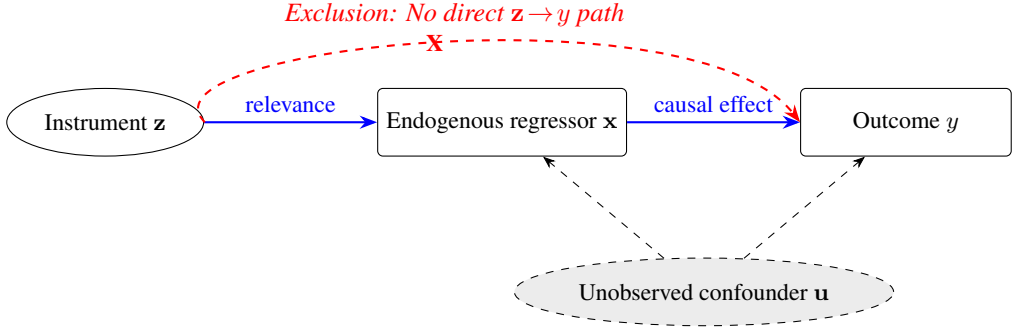
We further impose the following assumptions to establish non-asymptotic rates.
\begin{asp}\label{asp: Z}
    We assume the following conditions hold:
    \begin{itemize}[noitemsep]
        \item[(i)] $\mathbf{z}$ is a mean-zero isotropic sub-Gaussian random vector. That is, $\mathbb{E}[\mathbf{z}]=\mathbf{0}$, $\mathbb{E}[\mathbf{z}\mathbf{z}^\top]=\mathbf{I}_q$, and for some $\sigma_z>0$, $\mathbb{E}[e^{{u\langle\mathbf{z}_{i},\mathbf{v}\rangle}}]\leq \exp\{\frac{u^2\sigma_z^2\|\mathbf{v}\|^2}{2}\},\forall u\in\mathbb{R}, \mathbf{v}\in\mathbb{R}^q$.
        \item[(ii)] $\epsilon_1,\boldsymbol{\epsilon}_2$ are mean-zero sub-Gaussian. That is, $\mathbb{E}[\epsilon_1]=0, \mathbb{E}[\boldsymbol{\epsilon}_2]=\mathbf{0}$, and for some $\sigma_1,\sigma_2>0$, $\mathbb{E}[e^{u \epsilon_1}]\leq \exp\{\frac{u^2\sigma_1^2}{2}\}$, and $\mathbb{E}[e^{u \langle\boldsymbol{\epsilon}_2,\mathbf{v}\rangle}]\leq \exp\{\frac{u^2\sigma_2^2\|\mathbf{v}\|^2}{2}\},\forall u\in\mathbb{R},\mathbf{v}\in\mathbb{R}^p$.
    \end{itemize}
\end{asp}

Assumption \ref{asp: Z} provides the minimal conditions required to leverage concentration results from high-dimensional random design analysis \citep{Vershynin_2018}. Specifically, with condition (i), we have the high-probability concentration bound for the empirical covariance matrix $\frac{\mathbf{Z}^\top\mathbf{Z}}{n}$ (see Lemma \ref{lem: Z bound}). Condition (ii) further ensures high-probability concentration of the cross terms $\frac{\mathbf{Z}^\top\boldsymbol{\mathcal{E}}_1}{n}$ and $\frac{\mathbf{Z}^\top\boldsymbol{\mathcal{E}}_2}{n}$ (see Lemma \ref{lem: ZE2 bound}), where $(\boldsymbol{\mathcal{E}}_1,\boldsymbol{\mathcal{E}}_2)=\{(\epsilon_{1,i},\boldsymbol{\epsilon}_{2,i})\}_{i=1}^n$ denotes the sample realization of errors. With these conditions, we derive high-probability concentration bound for the sample covariance matrix of $\hat{\mathbf{X}}:=\mathbf{Z}\hat{\boldsymbol{\Theta}}$ (see Lemma \ref{lem: lambda_min H star}), and finally establish the non-asymptotic error bound 
$\|\hat{\boldsymbol{\beta}}-\boldsymbol{\beta}\|$ (see Lemma \ref{lem: beta hat error bound}).

Privacy in IVaR may be required at different levels depending on the application. In some cases, protecting only the causal effect $\boldsymbol{\beta}$ is sufficient, for instance when the first-stage compliance relation $\boldsymbol{\Theta}$ is public, secondary, or not sensitive. In other cases, privacy must also extend to the first-stage parameter $\boldsymbol{\Theta}$, such as when instruments involve sensitive behavioral data, proprietary mechanisms, or institutional policies. To ensure end-to-end privacy in the IVaR model, we adopt the framework of zCDP. We allocate two privacy parameters: $\rho_1$ for the first-stage parameter estimates $\{\boldsymbol{\Theta}^{(t)}\}_{t=1}^T$, and $\rho_2$ for the second-stage parameter estimates $\{\boldsymbol{\beta}^{(t)}\}_{t=1}^T$. By the composition property of zCDP, the overall procedure satisfies $(\rho_1+\rho_2)$-zCDP.  

\section{Algorithm and Theoretical Guarantees}\label{sec: Theoretical Guarantees} 
We begin with a baseline two-stage gradient descent algorithm, denoted as \texttt{2S-GD}, for solving the IVaR problem \eqref{eq: 2SLS problem}. The detailed procedure is deferred to Appendix \ref{sec:renyidef}, Algorithm~\ref{alg: 2S-GD}. The method alternates between two coupled updates at each iteration: (i) updating the first-stage projection matrix $\boldsymbol{\Theta}^{(t)}$, which maps instruments $\mathbf{Z}$ to covariates $\mathbf{X}$, and (ii) updating the second-stage regression parameter $\boldsymbol{\beta}^{(t)}$ based on the predicted covariates. This iterative procedure can be viewed as a gradient-based analogue of the classical two-stage least squares estimator.

In this section, we propose a differentially private two-stage gradient descent algorithm, termed \texttt{DP-2S-GD}, to solve the IVaR problem \eqref{eq: 2SLS problem} while ensuring rigorous privacy guarantees. The algorithm is summarized in Algorithm~\ref{alg: DP-2S-GD-II}. Compared with \texttt{2S-GD}, \texttt{DP-2S-GD} incorporates two key modifications: (i) per-sample clipping is applied to gradients in both stages to bound the sensitivity of each update, ensuring that no single datapoint can disproportionately affect the results, and (ii) Gaussian perturbations are injected into both the $\boldsymbol{\Theta}$- and $\boldsymbol{\beta}$-updates at every iteration, with noise scales calibrated to the target privacy budgets $\rho_1$ and $\rho_2$.

\begin{algorithm}[t]
    \caption{DP-2S-GD}\label{alg: DP-2S-GD-II}
    \begin{algorithmic}[1]
    \State \textbf{Input:} Data $\mathbf{Z}\in\mathbb{R}^{n\times q}$, $\mathbf{X}\in\mathbb{R}^{n\times p}$, $\mathbf{Y}\in\mathbb{R}^n$, target privacy budgets $\rho_1, \rho_2>0$, step sizes $\eta, \alpha > 0$, number of iterations $T$
    \State \textbf{Parameters:} Noise scales $\lambda_1, \lambda_2>0$, clipping thresholds $\gamma_1, \gamma_2 > 0$
    \State Initialize $\boldsymbol{\beta}^{(0)} =\mathbf{0}_p$, $\boldsymbol{\Theta}^{(0)} =\mathbf{0}_{q \times p}$
    \For{$t = 0,1,\ldots,T-1$}
        \State Draw $\boldsymbol{\Xi}^{(t)}$ with $\mathrm{vec}(\boldsymbol{\Xi}^{(t)})\sim \mathcal{N}(\mathbf{0},\lambda_{1}^{2}\mathbf{I}_{q}\otimes\mathbf{I}_{p})$
        \State Draw $\boldsymbol{\nu}^{(t)}\sim \mathcal{N}(\mathbf{0}, \lambda_{2}^2\mathbf{I}_{p})$
        \State $\boldsymbol{\Theta}^{(t+1)}=\boldsymbol{\Theta}^{(t)}-\frac{\eta}{n}\sum_{i=1}^n\text{CLIP}_{\gamma_1}\!\left(\mathbf{z}_i(\mathbf{z}_i^\top\boldsymbol{\Theta}^{(t)}-\mathbf{x}_i^\top)\right)+\eta\boldsymbol{\Xi}^{(t)}$
        \State $\boldsymbol{\beta}^{(t+1)}=\boldsymbol{\beta}^{(t)}-\frac{\alpha}{n}\sum_{i=1}^{n}\text{CLIP}_{\gamma_2}\!\left(\boldsymbol{\Theta}^{(t)\top}\mathbf{z}_i(\mathbf{z}_{i}^{\top}\boldsymbol{\Theta}^{(t)}\boldsymbol{\beta}^{(t)}-y_{i})\right)+\alpha\boldsymbol{\nu}^{(t)}$
    \EndFor        
    \State \Return $\{\boldsymbol{\Theta}^{(t)}\}_{t=1}^T,\ \{\boldsymbol{\beta}^{(t)}\}_{t=1}^T$
    \end{algorithmic}
\end{algorithm}

The privacy analysis proceeds by treating the two stages as separate Gaussian mechanisms with sensitivity controlled by clipping parameters $\gamma_1$ and $\gamma_2$. By the properties of zero-concentrated differential privacy, the choice of noise scales $\lambda_1,\lambda_2$ uniquely determines the effective privacy losses $\rho_1,\rho_2$, which compose additively across iterations. Consequently, for any pre-specified privacy budgets $(\rho_1,\rho_2)$, one can calibrate $(\lambda_1,\lambda_2)$ to ensure that \texttt{DP-2S-GD} achieves the desired privacy guarantees. We next establish formal theoretical results, including both privacy accounting and utility bounds for the resulting estimators.

\begin{prop}\label{lem: privacy II - main}
    If we set $\lambda_1 = \frac{2\gamma_{1}}{n}\sqrt{\frac{T}{\rho_1}}$ and $\lambda_2 = \frac{2\gamma_{2}}{n}\sqrt{\frac{T}{\rho_2}}$, Algorithm \ref{alg: DP-2S-GD-II} is $\rho$-zCDP, where $\rho:=\rho_1+\rho_2=\frac{2 T}{n^2}\left(\frac{\gamma_1^2}{\lambda_1^2}+\frac{\gamma_2^2}{\lambda_2^2}\right)$.
\end{prop}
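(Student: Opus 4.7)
The strategy is classical: view each iteration of Algorithm~\ref{alg: DP-2S-GD-II} as two Gaussian mechanisms applied to bounded-sensitivity queries, invoke the Gaussian mechanism identity for zCDP, and then close with the adaptive composition theorem for zCDP.

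\emph{Step 1 (identify the data-dependent queries and bound their sensitivity).} I would first isolate, at iteration $t$, the two functions of the dataset that actually touch the data:
\begin{equation*}
G_1^{(t)}(D) := \frac{1}{n}\sum_{i=1}^{n}\text{CLIP}_{\gamma_1}\!\bigl(\mathbf{z}_i(\mathbf{z}_i^\top\boldsymbol{\Theta}^{(t)}-\mathbf{x}_i^\top)\bigr),\;\;
G_2^{(t)}(D) := \frac{1}{n}\sum_{i=1}^{n}\text{CLIP}_{\gamma_2}\!\bigl(\boldsymbol{\Theta}^{(t)\top}\mathbf{z}_i(\mathbf{z}_i^\top\boldsymbol{\Theta}^{(t)}\boldsymbol{\beta}^{(t)}-y_i)\bigr).
\end{equation*}
Since every clipped summand is bounded in Frobenius (resp.\ $\ell_2$) norm by $\gamma_1$ (resp.\ $\gamma_2$), replacing a single sample in $D$ changes the sum by at most $2\gamma_1$ (resp.\ $2\gamma_2$), so the sensitivities are $\Delta_1=2\gamma_1/n$ and $\Delta_2=2\gamma_2/n$. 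The essential point is that these bounds are uniform in $\boldsymbol{\Theta}^{(t)}$ and $\boldsymbol{\beta}^{(t)}$, which will be crucial when we later appeal to \emph{adaptive} composition.

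\emph{Step 2 (per-iteration Gaussian mechanism).} Adding isotropic Gaussian noise of scale $\lambda_i$ to $G_i^{(t)}$ gives a mechanism that is $\Delta_i^2/(2\lambda_i^2)$-zCDP by the standard Gaussian mechanism result of~\citet{bun2016concentrated}. Plugging in the sensitivities yields $2\gamma_1^2/(n^2\lambda_1^2)$-zCDP for each first-stage release and $2\gamma_2^2/(n^2\lambda_2^2)$-zCDP for each second-stage release. Because the step-size multiplication by $\eta$ or $\alpha$ and the additive combination with the current iterate are \emph{post-processing} of the noisy release, they cost no additional privacy.

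\emph{Step 3 (adaptive composition and matching the budget).} Each subsequent iterate $(\boldsymbol{\Theta}^{(t+1)},\boldsymbol{\beta}^{(t+1)})$ is a deterministic function of the previously released noisy updates, so the sequence of $2T$ releases fits the adaptive composition framework of~\citet{bun2016concentrated}. Linear additivity of zCDP under composition then yields a total privacy loss of $\frac{2T}{n^{2}}\bigl(\frac{\gamma_1^2}{\lambda_1^2}+\frac{\gamma_2^2}{\lambda_2^2}\bigr)$. Splitting this sum as $\rho_1+\rho_2$ with $\rho_i := \frac{2T\gamma_i^2}{n^2\lambda_i^2}$ and inverting for $\lambda_i$ produces the prescribed noise scales, completing the proof up to an algebraic identification.

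\emph{Main obstacle.} The only non-mechanical point is legitimizing \emph{adaptive} composition: the query $G_1^{(t)}$ depends on $\boldsymbol{\Theta}^{(t)}$, which in turn depends on every prior noisy release, and $G_2^{(t)}$ depends on both current iterates. Adaptive composition requires that, conditional on the past releases, each new query has a data-independent sensitivity bound. This is exactly what uniform per-sample clipping delivers in Step~1: no matter what $\boldsymbol{\Theta}^{(t)},\boldsymbol{\beta}^{(t)}$ the adversary steers the algorithm toward, the per-step sensitivity remains $2\gamma_i/n$. Once this is in place, the rest of the argument is bookkeeping.
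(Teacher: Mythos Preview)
Your proposal is correct and follows essentially the same route as the paper: bound the per-sample sensitivity via clipping, apply the Gaussian-mechanism zCDP bound to each of the two releases per iteration, and compose linearly over the $2T$ releases. If anything, you are more explicit than the paper about the need for \emph{adaptive} composition and the role of post-processing, both of which the paper's proof leaves implicit.
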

The proof of Proposition \ref{lem: privacy II - main} is provided in Appendix \ref{sec: proof of lem: privacy II}.
\begin{rmk}
Proposition \ref{lem: privacy II - main} highlights several tradeoffs among the parameters. To preserve the same privacy levels $\rho_1, \rho_2$, the noise scales $\lambda_1, \lambda_2$ must increase with larger clipping thresholds $\gamma_1, \gamma_2$, or with larger number of iterations $T$. Conversely, a larger sample size $n$ allows for smaller noise scales while maintaining the same privacy guarantees.
\end{rmk}

\begin{thm}\label{thm: main result II}
For any fixed $\boldsymbol{\Theta}\in\mathbb{R}^{q\times p}$ and $\boldsymbol{\beta}\in\mathbb{R}^{p}$, consider the Algorithm \ref{alg: DP-2S-GD-II} with fixed step sizes satisfying 
\begin{align}\label{eq: learning rates condition}
0<\eta<\frac{2}{(1+\delta(\tau))^2}, \quad 0<\alpha<\frac{4}{2\bar{\gamma}(\tau)+\underline{\gamma}(\tau)},
\end{align}
under Assumption \ref{asp: Z}, with parameters 
\begin{equation}\label{eq: parameter settings}
   \begin{aligned}
       \lambda_1 = \frac{2\gamma_{1}}{n}\sqrt{\frac{T}{\rho_1}},\quad\lambda_2 = \frac{2\gamma_{2}}{n}\sqrt{\frac{T}{\rho_2}},\quad
    \gamma_1 = \gamma_2 = c_0\left(\sqrt{q}+\sqrt{\tau+\log(nT)}\right)^2,
   \end{aligned}
\end{equation}
and number of iterations 
\begin{align}\label{eq: T condition thm 1}
    T\lesssim \frac{\rho_1 n^{2-\epsilon}}{p(\sqrt{q}+\sqrt{\tau})^6},
\end{align}
where $\epsilon>0$ is a small constant. If
\begin{align}\label{eq: n condition}
        n\geq c_1\max\left\{pq(\tau+\log(pq))^2, \frac{\left(\sqrt{q}+\sqrt{\tau}\right)^3}{\sqrt{\min\{\rho_1,\rho_2\}}}\right\}, 
\end{align}
for any fixed $\tau$, with probability $1-c_2e^{-\tau}$, we have 
\begin{align}\label{eq: e_beta(T) bound thm2}
\begin{split}
    \|\boldsymbol{\beta}^{(T)}-\hat{\boldsymbol{\beta}}\|
    &\lesssim \kappa(\tau)^{\frac{T}{2}}+\frac{\sqrt{p}(\sqrt{q}+\sqrt{\tau})^3}{n\sqrt{\min\{\rho_1,\rho_2\}}}\sqrt{T}+\frac{\sqrt{pq}(\tau+\log(pq))}{\sqrt{n}},
    \end{split}
\end{align}
where $0<\kappa(\tau)<1$ is the contraction rate, $\delta(\tau)>0$ is a numerically small term, and $\bar{\gamma}(\tau), \underline{\gamma}(\tau)$ are the high-probability upper/lower bounds on the eigenvalues of $\frac{\hat{\boldsymbol{\Theta}}^\top\mathbf{Z}^\top\mathbf{Z}\hat{\boldsymbol{\Theta}}}{n}$. The specific definitions of $\delta(\tau), \bar{\gamma}(\tau), \underline{\gamma}(\tau),$ and $\kappa(\tau)$ are deferred to \eqref{eq: definitions of parameters}.
\end{thm}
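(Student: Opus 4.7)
The plan is to remove the clipping operators from the algorithm on a high-probability event, reduce Algorithm~\ref{alg: DP-2S-GD-II} to a perturbed linear recursion on the quadratic 2SLS objective, and then propagate the errors coming from privacy noise and first-stage data fluctuation through the second-stage dynamics. The three summands on the right-hand side of \eqref{eq: e_beta(T) bound thm2} will emerge, respectively, as (i) geometric decay of the initialization error, (ii) accumulated Gaussian privacy noise, and (iii) statistical error from the finite-sample concentration of data-dependent quantities that enter the recursion.

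The first step is to make the clipping inactive. With $\gamma_1 = \gamma_2 = c_0(\sqrt{q}+\sqrt{\tau+\log(nT)})^2$, sub-Gaussian/sub-exponential concentration under Assumption~\ref{asp: Z} gives $\|\mathbf{z}_i\|^2 \lesssim q + \tau + \log(nT)$ uniformly in $i\in[n]$ after a union bound. Combined with an inductive bound on $\|\boldsymbol{\Theta}^{(t)}\|$ and $\|\boldsymbol{\beta}^{(t)}\|$ along the trajectory, a union bound over $nT$ sample/iteration pairs shows that $\|\mathbf{z}_i(\mathbf{z}_i^\top\boldsymbol{\Theta}^{(t)}-\mathbf{x}_i^\top)\|\le\gamma_1$ and $\|\boldsymbol{\Theta}^{(t)\top}\mathbf{z}_i(\mathbf{z}_i^\top\boldsymbol{\Theta}^{(t)}\boldsymbol{\beta}^{(t)}-y_i)\|\le\gamma_2$ hold simultaneously with probability at least $1-c_2 e^{-\tau}/3$. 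On this event, both $\text{CLIP}$ operators act as identities and Algorithm~\ref{alg: DP-2S-GD-II} collapses to vanilla noisy gradient descent.

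Next, I would analyze the first stage. Since the inner objective in \eqref{eq: 2SLS problem} is quadratic in $\boldsymbol{\Theta}$, the unclipped update becomes $\boldsymbol{\Theta}^{(t+1)} - \hat{\boldsymbol{\Theta}} = (\mathbf{I}_q - \tfrac{\eta}{n}\mathbf{Z}^\top\mathbf{Z})(\boldsymbol{\Theta}^{(t)} - \hat{\boldsymbol{\Theta}}) + \eta\boldsymbol{\Xi}^{(t)}$. By Lemma~\ref{lem: Z bound}, the spectrum of $\mathbf{Z}^\top\mathbf{Z}/n$ concentrates around $\mathbf{I}_q$, and the step-size condition \eqref{eq: learning rates condition} yields a contraction factor $\delta(\tau)<1$. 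Unrolling and applying Gaussian concentration to $\sum_{s<t}(\mathbf{I}-\tfrac{\eta}{n}\mathbf{Z}^\top\mathbf{Z})^{t-1-s}\boldsymbol{\Xi}^{(s)}$ in Frobenius norm delivers a uniform-in-$t$ bound of the form $\|\boldsymbol{\Theta}^{(t)} - \hat{\boldsymbol{\Theta}}\|_F \lesssim \delta(\tau)^{t}\|\hat{\boldsymbol{\Theta}}\|_F + \eta\lambda_1\sqrt{pqT}/(1-\delta(\tau))$, which becomes sufficiently small along the trajectory thanks to the iteration cap \eqref{eq: T condition thm 1}.

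The main obstacle is the coupling in the second stage, where the contraction operator $\mathbf{I} - \tfrac{\alpha}{n}\boldsymbol{\Theta}^{(t)\top}\mathbf{Z}^\top\mathbf{Z}\boldsymbol{\Theta}^{(t)}$ itself depends on the noisy, time-varying $\boldsymbol{\Theta}^{(t)}$ and the effective target residual $\mathbf{Y} - \mathbf{Z}\boldsymbol{\Theta}^{(t)}\hat{\boldsymbol{\beta}}$ is not centered at zero. I would rewrite the recursion as $\boldsymbol{\beta}^{(t+1)} - \hat{\boldsymbol{\beta}} = (\mathbf{I} - \tfrac{\alpha}{n}\hat{\boldsymbol{\Theta}}^\top\mathbf{Z}^\top\mathbf{Z}\hat{\boldsymbol{\Theta}})(\boldsymbol{\beta}^{(t)} - \hat{\boldsymbol{\beta}}) + \alpha\boldsymbol{\nu}^{(t)} + R^{(t)}$, where $R^{(t)}$ collects polynomial cross terms in $\boldsymbol{\Theta}^{(t)} - \hat{\boldsymbol{\Theta}}$ with coefficients bounded by data-dependent quantities controlled via Lemmas~\ref{lem: Z bound}--\ref{lem: beta hat error bound}. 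Lemma~\ref{lem: lambda_min H star} yields a spectral lower bound for $\hat{\boldsymbol{\Theta}}^\top\mathbf{Z}^\top\mathbf{Z}\hat{\boldsymbol{\Theta}}/n$ in terms of $\underline{\gamma}(\tau)$, and together with \eqref{eq: learning rates condition} delivers the contraction rate $\kappa(\tau)<1$. Unrolling and summing the three error sources produces, after substituting $\lambda_2$ from \eqref{eq: parameter settings} and the first-stage bound above, exactly the three terms in \eqref{eq: e_beta(T) bound thm2}: the $\kappa(\tau)^{T/2}$ term from initialization, the $\alpha\lambda_2\sqrt{pT}/(1-\kappa(\tau))$ term from noise injection, and the $\sqrt{pq}/\sqrt{n}$ term from $R^{(t)}$ after the first-stage error is substituted. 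The sample-size lower bound \eqref{eq: n condition} ensures all concentration events hold simultaneously with total probability at least $1-c_2 e^{-\tau}$.
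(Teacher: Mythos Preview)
Your high-level plan (remove clipping on a high-probability event, linearize the first stage, then propagate three error sources through the second stage) matches the paper. However, there is a genuine gap in your second-stage analysis.

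You propose to freeze the contraction operator at $\mathbf{I} - \tfrac{\alpha}{n}\hat{\boldsymbol{\Theta}}^\top\mathbf{Z}^\top\mathbf{Z}\hat{\boldsymbol{\Theta}}$ and push everything else into an additive remainder $R^{(t)}$ that you describe as ``polynomial cross terms in $\boldsymbol{\Theta}^{(t)}-\hat{\boldsymbol{\Theta}}$.'' But $R^{(t)}$ necessarily contains the term $-\alpha(\mathbf{H}^{(t)}-\hat{\mathbf{H}})\,\mathbf{e}_{\boldsymbol{\beta}}^{(t)}$, which multiplies the \emph{current second-stage error}, not just first-stage quantities. Since $\boldsymbol{\Theta}^{(0)}=\mathbf{0}$, we have $\mathbf{H}^{(0)}=\mathbf{0}$ and $\|\mathbf{H}^{(t)}-\hat{\mathbf{H}}\|$ is of order $\|\hat{\mathbf{H}}\|$ for the first several iterations; in that regime your ``remainder'' is as large as the contraction term itself and the unrolled geometric series does not close. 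The paper does not freeze the operator: it keeps the time-varying $\mathbf{I}-\alpha\mathbf{H}^{(t)}$ and runs a two-phase argument. For $t\ge T_0$ (with $T_0$ a bounded constant), it shows $\|\mathbf{H}^{(t)}-\hat{\mathbf{H}}\|\le\underline{\gamma}(\tau)/2$ via Weyl and the first-stage bound, so $\|\mathbf{I}-\alpha\mathbf{H}^{(t)}\|\le\kappa_{\boldsymbol{\beta}}(\tau)<1$. For the burn-in $0\le t<T_0$, it invokes a separate lemma (Lemma~\ref{lem: L(t) product bound}) showing $\prod_{t=0}^{T_0-1}\|\mathbf{I}-\alpha\mathbf{H}^{(t)}\|\lesssim 1$, which controls $\|\mathbf{e}_{\boldsymbol{\beta}}^{(T_0)}\|$ by a constant despite the lack of contraction. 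Your proposal is missing this burn-in control; without it, the argument does not go through.

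Two smaller points. First, $\delta(\tau)=C_0\sigma_z^2(\sqrt{q}+\sqrt{\tau})/\sqrt{n}$ is the concentration radius of $\mathbf{Z}^\top\mathbf{Z}/n$, not a contraction rate; the first-stage contraction is $\kappa_{\boldsymbol{\Theta}}(\tau)$, and the final $\kappa(\tau)^{T/2}$ in \eqref{eq: e_beta(T) bound thm2} is $\max\{\kappa_{\boldsymbol{\beta}}(\tau),\kappa_{\boldsymbol{\Theta}}(\tau)\}^{T/2}$, with the $\kappa_{\boldsymbol{\Theta}}$ contribution entering through the bound $\varepsilon\lesssim\kappa_{\boldsymbol{\Theta}}(\tau)^{T_0}+\mu(\tau)$ on the first-stage error. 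Second, your first-stage noise bound $\eta\lambda_1\sqrt{pqT}/(1-\delta(\tau))$ double-counts a $\sqrt{T}$: the accumulated noise $\sum_{s<t}(\mathbf{I}-\tfrac{\eta}{n}\mathbf{Z}^\top\mathbf{Z})^{t-1-s}\boldsymbol{\Xi}^{(s)}$ is Gaussian with covariance summing geometrically, so its norm scales like $\lambda_1\sqrt{pq}/\sqrt{1-\kappa_{\boldsymbol{\Theta}}(\tau)^2}$, and the $\sqrt{T}$ in the final bound comes only from $\lambda_1\propto\sqrt{T}/n$.
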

The proof of Theorem \ref{thm: main result II} is presented in Appendix \ref{sec: proof of main result II}.
We now offer several remarks regarding this theorem. In the presentation of Theorem \ref{thm: main result II}, all constants $c_0, c_1, c_2$ and scaling factors hidden in "$\lesssim$" are independent of major parameters $n, p, q, T, \rho_1, \rho_2, \tau$. These constants only depend on problem-specific parameters $\boldsymbol{\beta}, \boldsymbol{\Theta}, \sigma_z, \sigma_1, \sigma_2$.
\begin{rmk}\label{rmk: comparison to gd}
    Consider the population optimization problem $
        \min_{\boldsymbol{\beta}}\tilde{\mathcal{L}}(\boldsymbol{\beta})=\mathbb{E}\left[(y-\mathbf{z}^\top\boldsymbol{\Theta\beta})^2\right]$, and the (deterministic) two-stage gradient descent algorithm:
    \begin{align*}
        \boldsymbol{\Theta}^{(t+1)}=\boldsymbol{\Theta}^{(t)}-\eta_{GD}\mathbb{E}\left[\mathbf{z}(\mathbf{z}^\top\boldsymbol{\Theta}^{(t)}-\mathbf{x}^\top)\right],\quad\quad \boldsymbol{\beta}^{(t+1)}=\boldsymbol{\beta}^{(t)}-\alpha_{GD}\mathbb{E}\left[\boldsymbol{\Theta}^{\top}\mathbf{z}(\mathbf{z}^\top\boldsymbol{\Theta}\boldsymbol{\beta}^{(t)}-y)\right].
    \end{align*}
    It can be easily shown that under Assumption \ref{asp: Z}, the sufficient condition for learning rates to guarantee \emph{monotonic} convergence are 
    \begin{align*}
    0<\eta_{GD}<2,\quad 0<\alpha_{GD} < \frac{2}{\|\boldsymbol{\Theta}\|^2}.
    \end{align*} 
    We note that in our learning rate condition \eqref{eq: learning rates condition}, we introduce $\delta(\tau)$ and $\psi(\tau)$ to account for the randomness in data. If we have infinite samples, the condition \eqref{eq: learning rates condition} becomes
    \begin{align*}
        0<\eta<2,\quad 0<\alpha<\frac{4}{2\|\boldsymbol{\Theta}\|^2+\sigma_{\min}^2(\boldsymbol{\Theta})}.
    \end{align*}
    Comparing to $\eta_{GD}$ and $\alpha_{GD}$, notice that we have the same $\eta$ condition. However, the $\alpha$ condition is slightly tighter to control the randomness introduced by the first-stage estimates $\boldsymbol{\Theta}^{(t)}$.
\end{rmk}
\begin{rmk}
    From \eqref{eq: definitions of parameters}, the optimal contraction rate $\kappa^\star(\tau)$ is achieved when the learning rates are set as 
    \begin{align}\label{eq: optimum step sizes}
        \eta_{\text{approx}}^\star = \frac{2}{(1+\delta(\tau))^2+(1-\delta(\tau))^2},\quad \alpha_{\text{approx}}^\star = \frac{2}{\bar{\gamma}(\tau)+\underline{\gamma}(\tau)}.
    \end{align}
    In this case, we have
    \begin{gather*}
        \kappa_{\boldsymbol{\beta}}^\star(\tau) = \frac{\bar{\gamma}(\tau)}{\bar{\gamma}(\tau)+\underline{\gamma}(\tau)},\quad
        \kappa_{\boldsymbol{\Theta}}^\star(\tau) = \frac{(1+\delta(\tau))^2-(1-\delta(\tau))^2}{(1+\delta(\tau))^2+(1-\delta(\tau))^2},
        \kappa^\star(\tau) = \max\left\{\kappa_{\boldsymbol{\beta}}^\star(\tau), \kappa_{\boldsymbol{\Theta}}^\star(\tau)\right\}.
    \end{gather*}
    We emphasize that although $\eta_{\text{approx}}^\star$ and $\alpha_{\text{approx}}^\star$ minimize the contraction rate, they should be viewed as approximately optimal step sizes, as the scaling constants in the bound \eqref{eq: e_beta(T) bound thm2} vary with different choices of step sizes. Empirically, we also observe that the estimator's error is fairly insensitive in a neighborhood of $\eta_{\text{approx}}^\star, \alpha_{\text{approx}}^\star$. See Appendix \ref{sec: exp step size tuning} for empirical results.
\end{rmk}
\begin{rmk}
    From Proposition \ref{lem: privacy II - main}, the choice of $\lambda_1, \lambda_2$ in \eqref{eq: parameter settings} guarantees that Algorithm \ref{alg: DP-2S-GD-II} is $\rho$-zCDP. The parameters $\gamma_1$ and $\gamma_2$ are selected so that, with high probability, the clipping operation does not alter the gradients; see Lemma \ref{lem: no clipping II} for details.
\end{rmk}
\begin{rmk}
    The error bound \eqref{eq: e_beta(T) bound thm2} consists of three dominant terms. The first term $\kappa(\tau)^{\frac{T}{2}}$ characterizes the convergence of the gradient descent algorithm, which decays exponentially with $T$. The second term $\frac{\sqrt{p}(\sqrt{q}+\sqrt{\tau})^3}{n\sqrt{\min\{\rho_1,\rho_2\}}}\sqrt{T}$ captures the cumulative effect of the injected Gaussian noise, which grows with $\sqrt{T}$ due to the parameter choices in \eqref{eq: parameter settings} that ensure privacy. The third term $\frac{\sqrt{pq}(\tau+\log(pq))}{\sqrt{n}}$ represents the inherent statistical error in estimating $\hat{\boldsymbol{\beta}}$ via noiseless gradient descent, which decreases with larger sample size $n$. This decomposition highlights the trade-offs between convergence phase and privacy requirement, while also accounting for the structural statistical accuracy attainable from gradient descent.
\end{rmk}
\begin{rmk}
    The condition for $T$ in \eqref{eq: parameter settings} is necessary to control the noise scale $\lambda_1$ in Proposition~\ref{lem: privacy II - main}, since the derivation of \eqref{eq: e_beta(T) bound thm2} relies on the high-probability concentration of $\|\boldsymbol{\Theta}^{(T)}-\hat{\boldsymbol{\Theta}}\|$. With limited sample size $n$, if $\rho_1$ is small, i.e. we want high privacy on $\boldsymbol{\Theta}^{(1)}, \ldots, \boldsymbol{\Theta}^{(T)}$, we can only set a moderate number of iterations $T$, otherwise the bound \eqref{eq: e_beta(T) bound thm2} doesn't hold. See Section \ref{sec: experiments} for experiments.
\end{rmk}
\begin{rmk}
    For given sample size $n$, the dominating terms for each $T$ range are:
    \begin{equation*}
        \|\boldsymbol{\beta}^{(T)}-\hat{\boldsymbol{\beta}}\| \;\lesssim\;
        \begin{cases}
           \kappa(\tau)^{\tfrac{T}{2}}, 
           & \text{if } T \leq \dfrac{\log\!\left(\tfrac{n}{pq(\tau+\log(pq))^2}\right)}
               {\log\!\left(\tfrac{1}{\kappa(\tau)}\right)}, \\[1.2em]
           \dfrac{\sqrt{pq}\,(\tau+\log(pq))}{\sqrt{n}}, 
           & \text{if } 
           \dfrac{\log\!\left(\tfrac{n}{pq(\tau+\log(pq))^2}\right)}
               {\log\!\left(\tfrac{1}{\kappa(\tau)}\right)}
           < T \leq 
           \dfrac{n \min\{\rho_1,\rho_2\}q(\tau+\log(pq))^2}{(\sqrt{q}+\sqrt{\tau})^6}, \\[1.2em]
           \dfrac{\sqrt{p}(\sqrt{q}+\sqrt{\tau})^3}{n\sqrt{\min\{\rho_1,\rho_2\}}}\;
           \sqrt{T}, 
           & \text{if } 
           \dfrac{n \min\{\rho_1,\rho_2\}q(\tau+\log(pq))^2}{(\sqrt{q}+\sqrt{\tau})^6} 
           < T \lesssim \dfrac{\rho_1 n^{2-\epsilon}}{p(\sqrt{q}+\sqrt{\tau})^6}.
        \end{cases}
    \end{equation*}
\end{rmk}
Hence, the optimum number of iterations $T$ is sub-linear but super-logarithmic to $n$. Figure \ref{fig: sketch} qualitatively illustrates the trend of the error bound \eqref{eq: e_beta(T) bound thm2} as a function of $T$. This is consistent with our experimental observations in Section \ref{sec: experiments}.
\begin{figure}[t]
    \centering
    \includegraphics[width=0.6\textwidth]{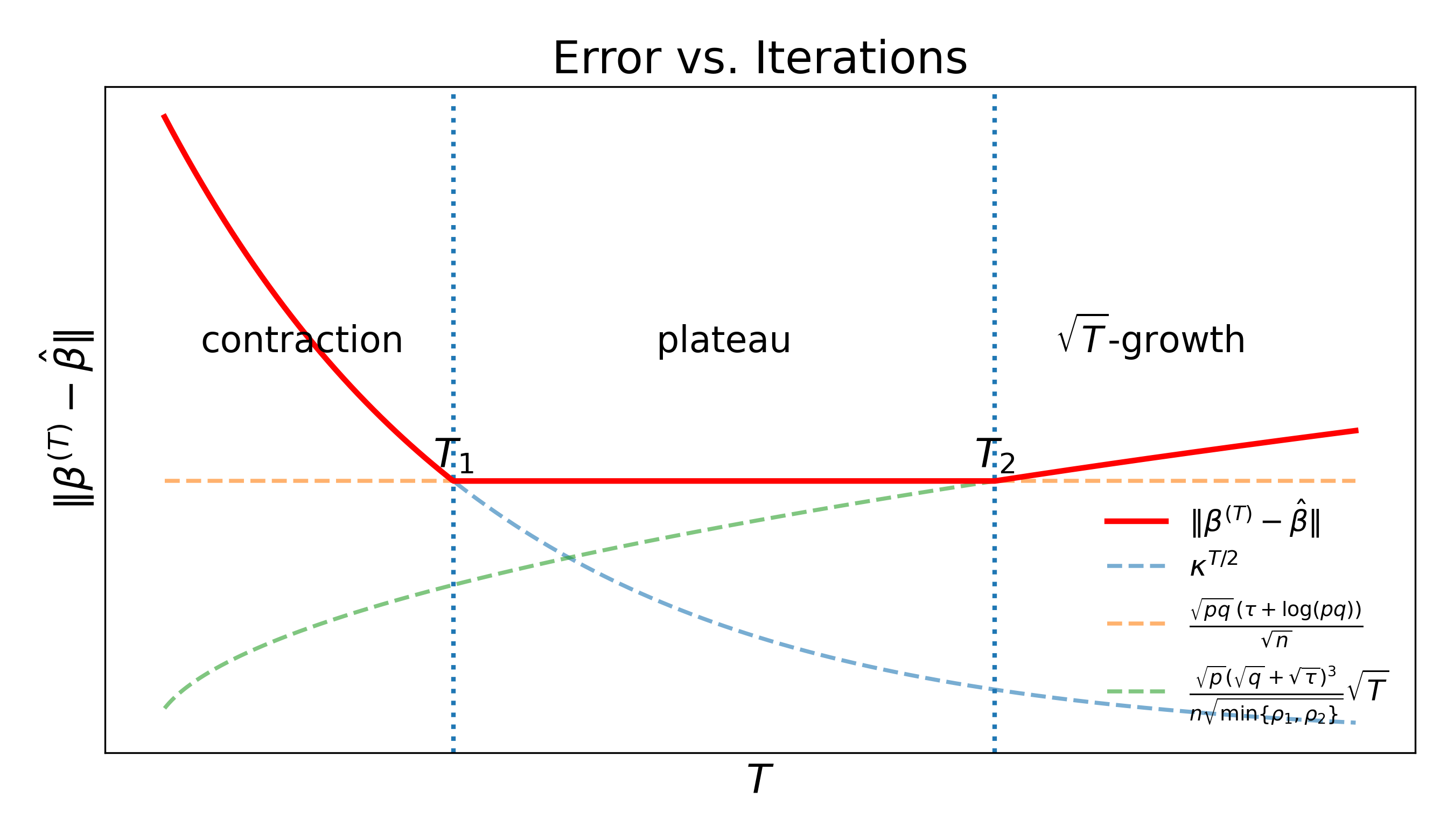}
    \caption{Qualitative trend of the error bound \eqref{eq: e_beta(T) bound thm2} as a function of $T$.}
\label{fig: sketch}
\end{figure}
\begin{coro}\label{coro: main result}
Consider running Algorithm \ref{alg: DP-2S-GD-II} with $\rho_1=\infty$ and $\rho_2=\infty$ (i.e. no privacy provided). For any $T>0$, the bound \eqref{eq: e_beta(T) bound thm2} is dominated by
\begin{align}\label{eq: e_beta(T) bound ordinary}
     \|\boldsymbol{\beta}^{(T)}-\hat{\boldsymbol{\beta}}\| &\lesssim \kappa(\tau)^{\frac{T}{2}}+\frac{\sqrt{pq}(\tau+\log(pq))}{\sqrt{n}},
\end{align}
which is exactly the convergence rate of the $\texttt{2S-GD}$ algorithm \ref{alg: 2S-GD}. 
\end{coro}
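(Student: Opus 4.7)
The plan is to obtain Corollary 5.1 as a direct specialization of Theorem in the limit $\rho_1,\rho_2 \to \infty$, after verifying that Algorithm \ref{alg: DP-2S-GD-II} then reduces to the noiseless \texttt{2S-GD} procedure and that the hypotheses of Theorem \ref{thm: main result II} remain compatible with this limit.

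First, I would examine the parameter settings in \eqref{eq: parameter settings}. Since $\lambda_1 = \frac{2\gamma_1}{n}\sqrt{T/\rho_1}$ and $\lambda_2 = \frac{2\gamma_2}{n}\sqrt{T/\rho_2}$, taking $\rho_1,\rho_2\to\infty$ forces $\lambda_1 = \lambda_2 = 0$, so the Gaussian perturbations $\boldsymbol{\Xi}^{(t)}$ and $\boldsymbol{\nu}^{(t)}$ are almost surely zero. Combined with the no-clipping lemma (Lemma \ref{lem: no clipping II}), which guarantees with high probability that the clipping thresholds $\gamma_1,\gamma_2$ chosen in \eqref{eq: parameter settings} are large enough that $\text{CLIP}_{\gamma_1}$ and $\text{CLIP}_{\gamma_2}$ act as the identity on the per-sample gradients, each iteration of Algorithm \ref{alg: DP-2S-GD-II} coincides exactly with the corresponding update of \texttt{2S-GD} on the same high-probability event.

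Second, I would verify that the admissibility conditions in Theorem \ref{thm: main result II} remain satisfiable. The iteration bound \eqref{eq: T condition thm 1} becomes vacuous because its right-hand side is proportional to $\rho_1$, so any finite $T>0$ is permitted. The sample-size condition \eqref{eq: n condition} reduces to $n \geq c_1 pq(\tau+\log(pq))^2$ since the privacy-dependent term $(\sqrt{q}+\sqrt{\tau})^3/\sqrt{\min\{\rho_1,\rho_2\}}$ vanishes. Both the learning rate bounds in \eqref{eq: learning rates condition} depend only on the data-driven quantities $\delta(\tau),\bar{\gamma}(\tau),\underline{\gamma}(\tau)$ and are unaffected by the noise scales.

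Third, the bound \eqref{eq: e_beta(T) bound thm2} specializes directly: the noise term $\frac{\sqrt{p}(\sqrt{q}+\sqrt{\tau})^3}{n\sqrt{\min\{\rho_1,\rho_2\}}}\sqrt{T}$ vanishes in the limit, while the contraction term $\kappa(\tau)^{T/2}$ and the statistical term $\sqrt{pq}(\tau+\log(pq))/\sqrt{n}$ are untouched, giving exactly \eqref{eq: e_beta(T) bound ordinary}. Because the limit of Algorithm \ref{alg: DP-2S-GD-II} in this regime is precisely Algorithm \ref{alg: 2S-GD}, this is also the convergence rate of \texttt{2S-GD}.

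Since the result is a direct specialization of Theorem \ref{thm: main result II}, there is no substantive technical obstacle beyond confirming that the high-probability concentration events driving the proof of Theorem \ref{thm: main result II} (concentration of $\mathbf{Z}^\top\mathbf{Z}/n$, $\mathbf{Z}^\top\boldsymbol{\mathcal{E}}_i/n$, and the no-clipping event) depend only on the data and not on the injected noise, so they continue to hold in the noiseless case.
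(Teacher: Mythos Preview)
Your proposal is correct and matches the paper's approach: the corollary is presented as an immediate specialization of Theorem \ref{thm: main result II} obtained by sending $\rho_1,\rho_2\to\infty$, which kills the privacy-noise term in \eqref{eq: e_beta(T) bound thm2} and renders the constraint \eqref{eq: T condition thm 1} vacuous. Your additional remarks about the no-clipping event and the reduction to Algorithm \ref{alg: 2S-GD} are accurate elaborations that the paper leaves implicit.
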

\begin{rmk}\label{rmk:compto2sls}
    We note that when $\rho_1,\rho_2=\infty$ and $T\rightarrow\infty$, the error rate \eqref{eq: e_beta(T) bound ordinary} still has an additional $\sqrt{p}$ factor compared to the error rate of 2SLS estimator $\|\hat{\boldsymbol{\beta}}-\boldsymbol{\beta}\|$ (see Lemma \ref{lem: beta hat error bound} for the precise statement). This performance gap is an inherent limitation of gradient-based approximations to 2SLS, which is further confirmed by simulations in Appendix \ref{sec: exp convergence rate compare}. As an intuitive explanation, the closed-form 2SLS estimator solves both stages in \eqref{eq: 2SLS problem} using sample moments, whereas the gradient-descent procedure has to approximate the second-stage moment condition. In the population, the optimality condition for 2SLS ensures $\mathbb{E}[\mathbf{z}(\mathbf{y}-\mathbf{x}^\top\boldsymbol{\beta}^\star)]=0$. However, in the finite-sample gradient-descent iteration, the update direction will eventually depend on $\tfrac{1}{n}\mathbf{Z}^\top(\mathbf{Y}-\mathbf{Z}\hat{\boldsymbol{\Theta}}\hat{\boldsymbol{\beta}}),$ which involves the empirical residual $\mathbf{r}:=\mathbf{Y}-\mathbf{Z}\hat{\boldsymbol{\Theta}}\hat{\boldsymbol{\beta}}$. In particular, the quantity $\|\tfrac{1}{n}\mathbf{Z}^\top\mathbf{r}\|$ has an error rate $\tfrac{\sqrt{pq}(\tau+\log(pq))}{\sqrt{n}}$ (see Lemma \ref{lem: Zr bound}), which yields the additional $\sqrt{p}$ factor.
\end{rmk}
\begin{rmk}
    In practice, the intermediate estimates $\{\boldsymbol{\Theta}^{(t)}\}_{t=1}^T$ are not always required to be released, so in some settings it suffices to ensure privacy only for $\{\boldsymbol{\beta}^{(t)}\}_{t=1}^T$. In Algorithm \ref{alg: DP-2S-GD-II}, setting $\rho_1=\infty$ implies that no noise $\boldsymbol{\Xi}^{(t)}$ needs to be injected in the first stage, and we can simply return $\{\boldsymbol{\beta}^{(t)}\}_{t=1}^T$ under privacy budget $\rho_2$. Under this regime, the error bound \eqref{eq: e_beta(T) bound thm2} continues to hold, except that the condition on $T$ in \eqref{eq: T condition thm 1} is no longer required. See Appendix \ref{sec: privacy for beta only} for further details.
\end{rmk}

 \section{Experiments}\label{sec: experiments}
We conduct experiments using both synthetic data and real data to validate our theoretical findings. For all experiments, we set $\tau=5$, and step sizes $\eta=\frac{1}{(1+\delta(\tau))^2}$, $\alpha=\frac{2}{2\bar{\gamma}(\tau)+\underline{\gamma}(\tau)}$. As a practical guideline, $\rho=0.1$ is considered as strong privacy, $\rho=1$ is considered as moderate privacy, and $\rho=10$ is considered as weak privacy\footnote{The corresponding $(\epsilon,\delta)$-DP values using the conversion formula $\epsilon = \rho + 2\sqrt{\rho \log(1/\delta)}$ (with $\delta = 10^{-5}$): $\rho = 0.1 \Leftrightarrow (\epsilon,\delta) = (2.25,10^{-5})$, $\rho = 1 \Leftrightarrow (\epsilon,\delta) = (7.79,10^{-5})$, and $\rho = 10 \Leftrightarrow (\epsilon,\delta) = (31.47,10^{-5})$.}.
\subsection{Synthetic Data Simulations}\label{sec: synthetic experiments}
 We generate synthetic data according to the IVaR model in \eqref{eq: ivar model}. To simulate the correlation between $\boldsymbol{\epsilon}_1$ and $\epsilon_2$, we include a confounder $\mathbf{u}\in\mathbb{R}^r$, and set $\epsilon_1 = \boldsymbol{\Phi}^\top\mathbf{u}_i+\boldsymbol{\epsilon}_{x}$ and $\epsilon_2 = \boldsymbol{\phi}^\top\mathbf{u}+\epsilon_{y}$, and generate each entry of the dataset $(\mathbf{Z}, \mathbf{X}, \mathbf{Y})=\{(\mathbf{z}_i,\mathbf{x}_i,y_i)\}_{i=1}^n$ according to the following model: $\mathbf{x}_i=\boldsymbol{\Theta}^\top \mathbf{z}_i+\boldsymbol{\Phi}^\top\mathbf{u}_i+\boldsymbol{\epsilon}_{x,i},$ and $\mathbf{y}_i=\boldsymbol{\beta}^\top\mathbf{x}_i+\boldsymbol{\phi}^\top\mathbf{u}_i+\epsilon_{y,i},$ where the ground-truth parameters are $\boldsymbol{\beta}\in\mathbb{R}^p, \boldsymbol{\Theta}\in\mathbb{R}^{q\times p}$, $\mathbf{\Phi}\in\mathbb{R}^{r\times p}$, $\boldsymbol{\phi}\in\mathbb{R}^r$. These parameters are drawn as follows: $\boldsymbol{\beta}\sim \mathcal{N}(\mathbf{0},\mathbf{I}_p)$, $\boldsymbol{\Theta}\sim 5\mathbf{I}_{q\times p}+\mathbf{E}$ with $\mathbf{E}_{ij}\sim\mathcal{N}(0,1)$. $\boldsymbol{\Phi}_{ij}\sim \mathcal{N}(0,1)$, and $\boldsymbol{\phi}\sim\mathcal{N}(\mathbf{0},\mathbf{I}_r)$. For each simulation, we then sample $\mathbf{z}_i\sim \mathcal{N}(\mathbf{0},\mathbf{I}_q)$, $\mathbf{u}_i\sim \mathcal{N}(\mathbf{0},\mathbf{I}_r)$, $\boldsymbol{\epsilon}_{x,i}\sim \mathcal{N}(\mathbf{0},\mathbf{I}_p)$, and $\epsilon_{y,i}\sim \mathcal{N}(0,1)$.

\begin{figure}[t]
    \centering
    \begin{subfigure}{0.45\textwidth}
        \centering
        \includegraphics[width=\linewidth]{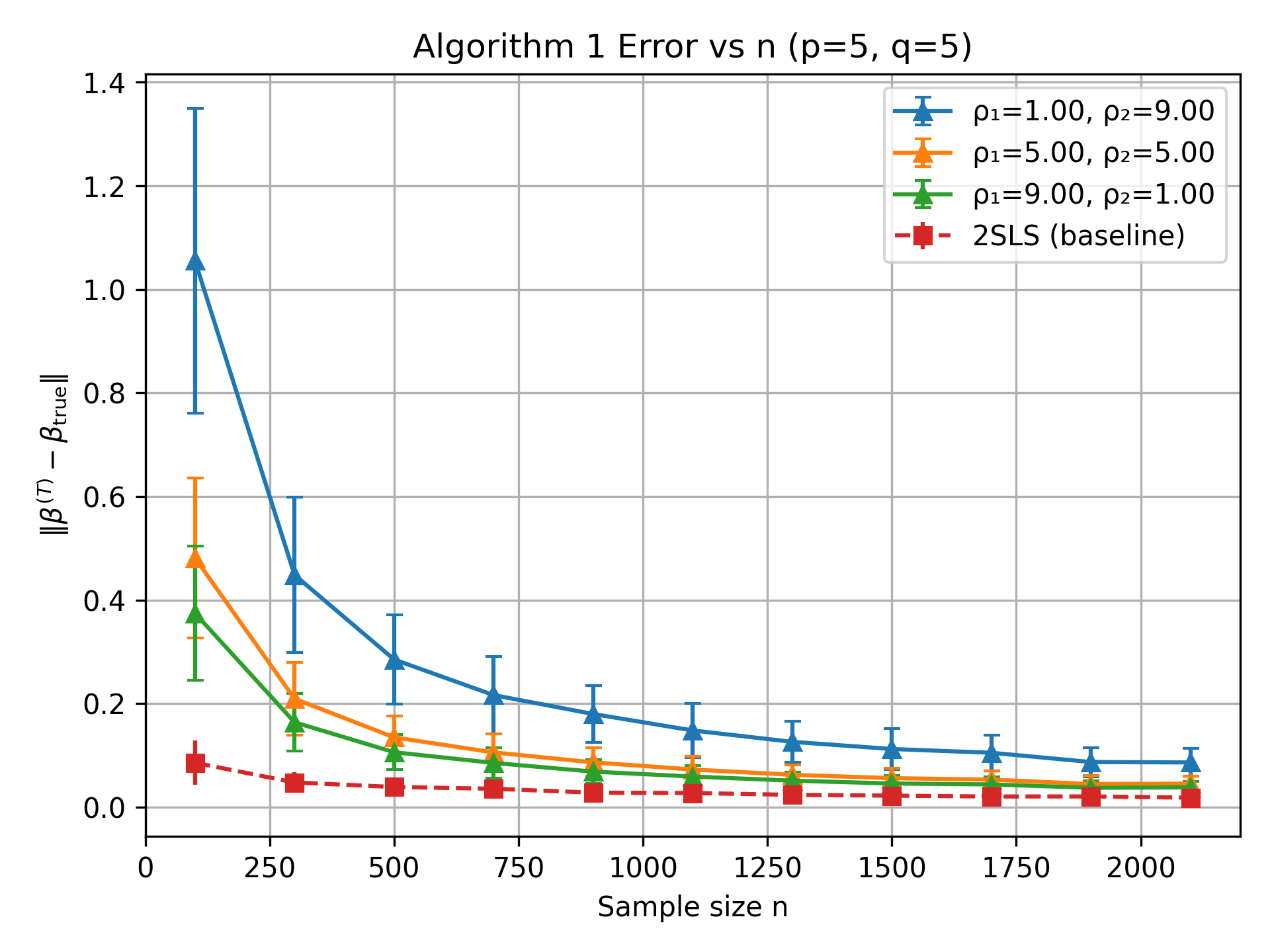}
        \caption{}
    \end{subfigure}
    \hfill
    \begin{subfigure}{0.45\textwidth}
        \centering
        \includegraphics[width=\linewidth]{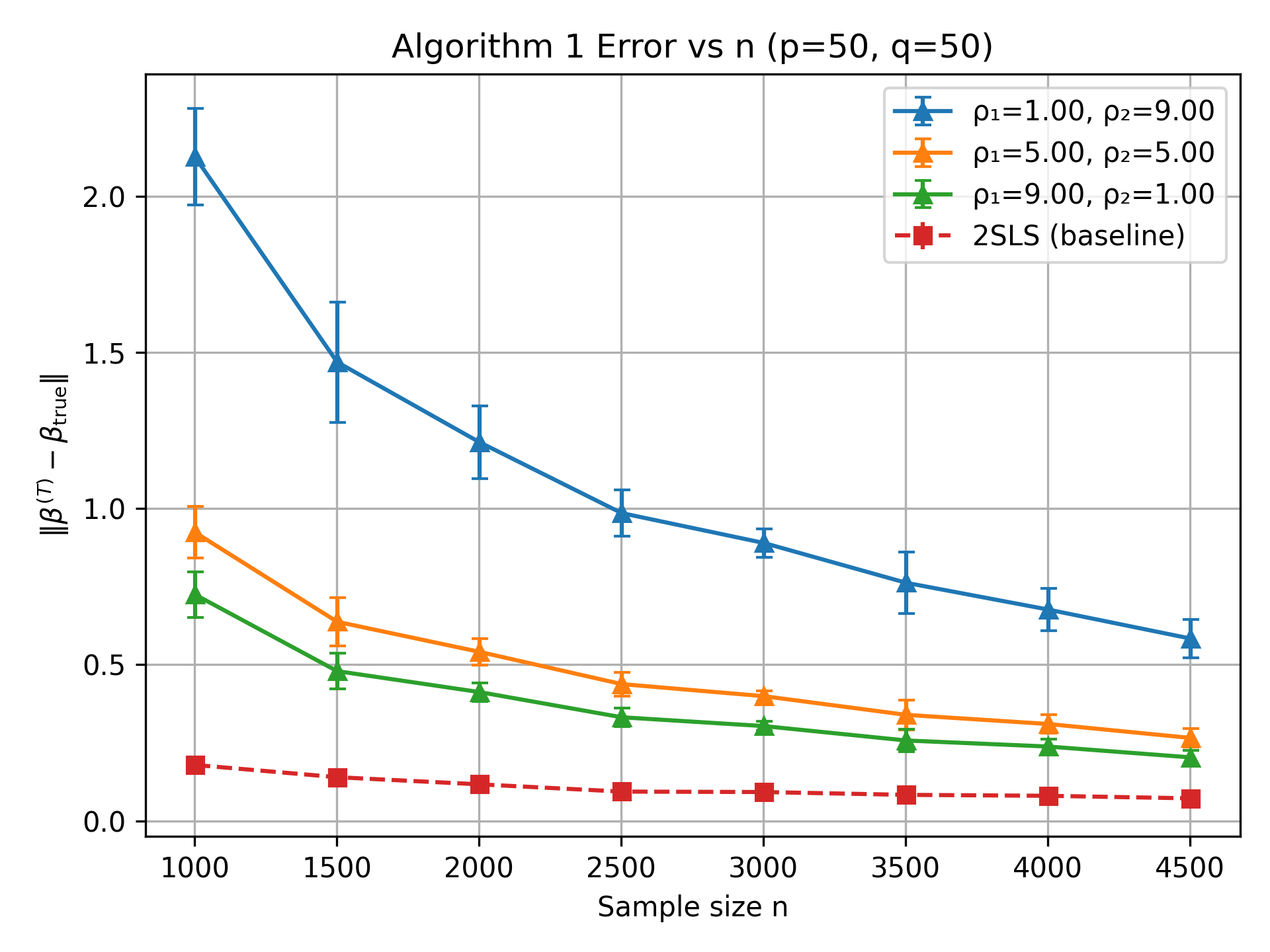}
        \caption{}
    \end{subfigure}
    \caption{Comparison of Algorithm \ref{alg: DP-2S-GD-II}'s performance versus $n$. We set $T=20$, (a) $p=q=5$, (b) $p=q=50$. Note that the $T$ condition \eqref{eq: parameter settings} is not satisfied in (b). We set the total budget $\rho=10$ and compare three regimes: (i) $\rho_1=1, \rho_2=9$, (ii) $\rho_1=5, \rho_2=5$, (iii) $\rho_1=9, \rho_2=1$. The curves are averaged over 100 runs, with vertical bars representing the standard errors.}
    \label{fig: error_vs_n}
\end{figure}
Figure \ref{fig: error_vs_n} compares the performance of Algorithm \ref{alg: DP-2S-GD-II} across different sample sizes $n$ under varying privacy allocations. We fix the total privacy budget at $\rho=\rho_1+\rho_2=10$, set the number of iterations to $T=20$, and examine three regimes: (i) $\rho_1=1, \rho_2=9$, (ii) $\rho_1=5, \rho_2=5$, and (iii) $\rho_1=9, \rho_2=1$. In Figure \ref{fig: error_vs_n}(a), with $p=q=r=5$, all points lie in the plateau region of Figure \ref{fig: sketch}, so the error decreases at the rate $\tfrac{1}{\sqrt{n}}$. In contrast, Figure \ref{fig: error_vs_n}(b) sets $p=q=r=50$. Here, $T=20$ violates condition \eqref{eq: T condition thm 1}, leading to significantly larger errors compared to Figure \ref{fig: error_vs_n}(a). 
\begin{figure}[t]
    \centering
    \begin{subfigure}{0.45\textwidth}
        \centering
        \includegraphics[width=\linewidth]{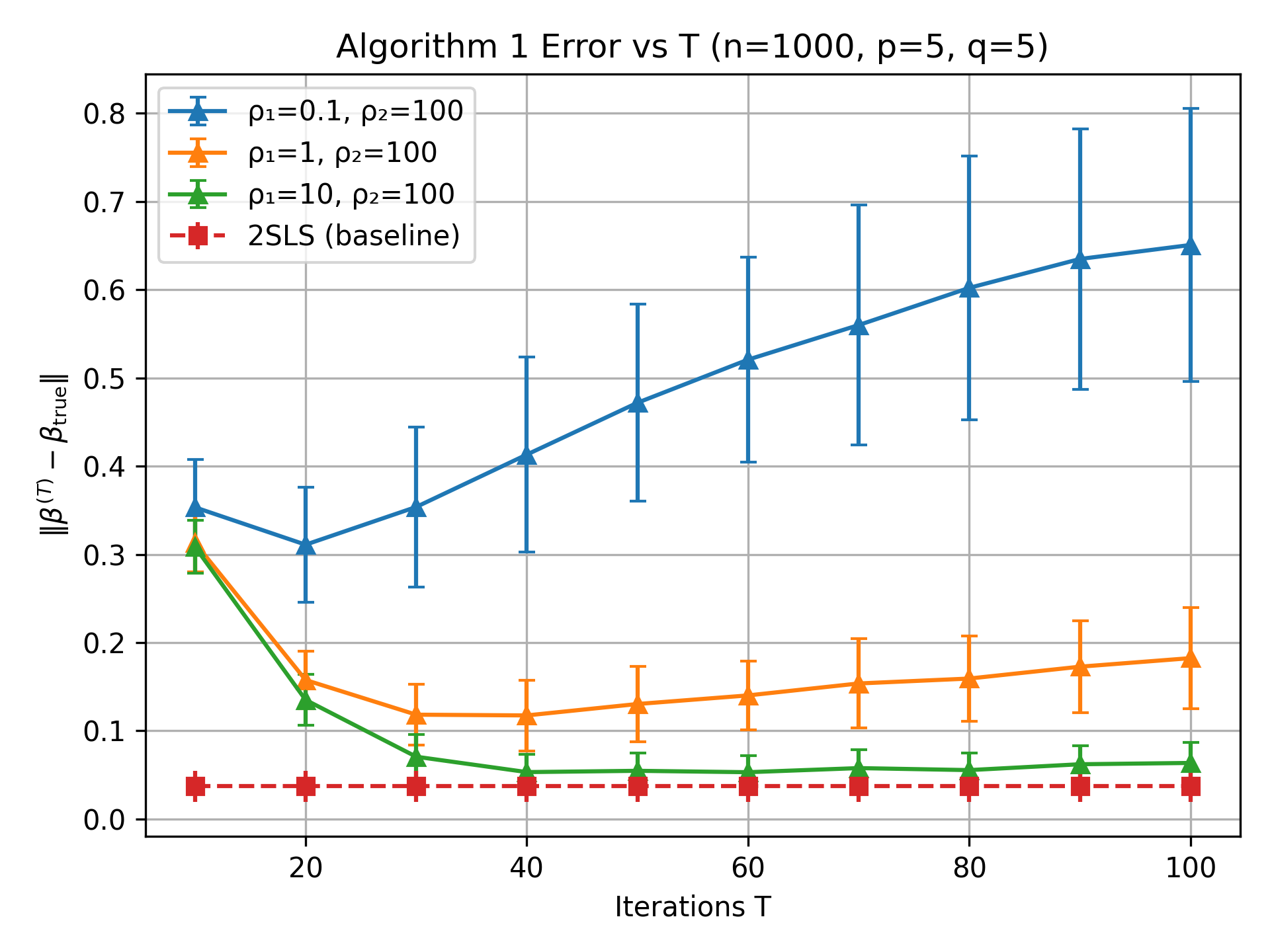}
        \caption{}
    \end{subfigure}
    \hfill
    \begin{subfigure}{0.45\textwidth}
        \centering
        \includegraphics[width=\linewidth]{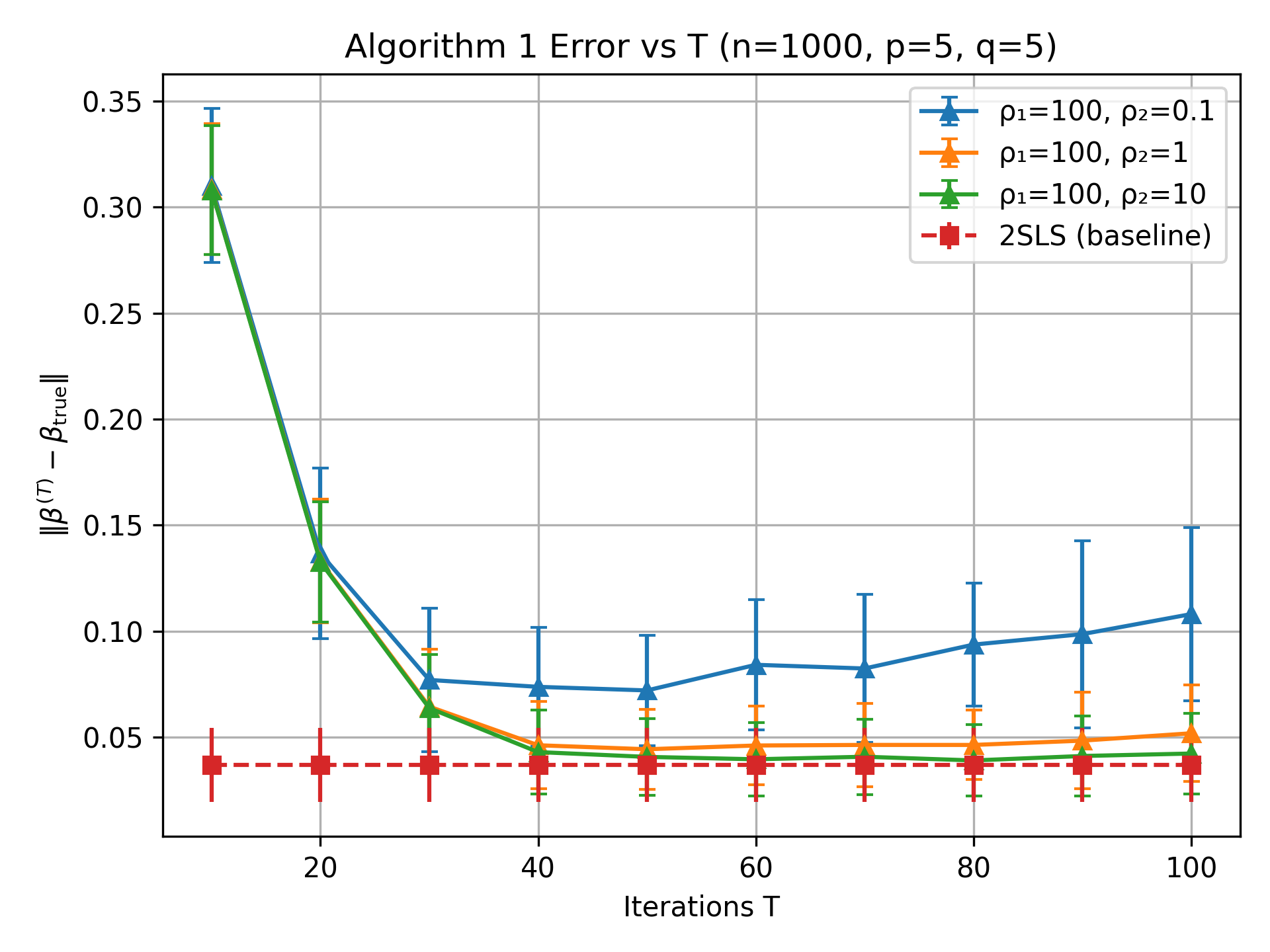}
        \caption{}
    \end{subfigure}
    \caption{Comparison of Algorithm \ref{alg: DP-2S-GD-II}'s performance versus number of iterations $T$. We fix $n=1000$, $p=q=5$, (a) keep $\rho_2$ large and vary $\rho_1$, (b) keep $\rho_1$ large and vary $\rho_2$. The curves are averaged over 100 runs, with vertical bars representing the standard errors.}
    \label{fig: error_vs_T}
\end{figure}
The impact of $T$ is further investigated in Figure \ref{fig: error_vs_T}, from which we observe that, with limited sample size $n$, if we enforce high privacy guarantee on $\{\boldsymbol{\Theta}^{(t)}\}_{t=1}^T$ (i.e. with small $\rho_1$), the error grows significantly after certain $T$ is reached. This cutoff aligns with the condition on $T$ specified in \eqref{eq: T condition thm 1}. In contrast, when privacy is required only for $\{\boldsymbol{\beta}^{(t)}\}_{t=1}^T$ (i.e., with small $\rho_2$), the error behavior closely matches the theoretical predictions illustrated in Figure \ref{fig: sketch}.

\subsection{Real-Data Experiments}
We further evaluate our algorithm on the Angrist dataset \citep{laborsupply}, which has been widely applied in the IVaR literature. This study examines the causal effect of children bearing on female labor supply, leveraging the gender composition of the first two children as an instrument\footnote{Research shows that parents whose first two children are of the same sex are significantly more likely to have an additional child \citep{Westoff1972}. At the same time, the sex composition of the first two children can be treated as randomly assigned and is not directly related to the mother’s labor supply.}. The endogenous regressor $\mathbf{x}$ is the number of children bearing, the outcome $\mathbf{y}$ is the mother's labor supply measured in number of working weeks per year, and the instrument $\mathbf{z}$ is a binary variable indicating whether the first two children are of the same gender. The original dataset contains $394,835$ samples. For illustration purpose, we randomly draw a subset of $20,000$ samples and keep $n=8065$ effective observations with number of children $\geq 2$. We center all variables $\mathbf{z}, \mathbf{x}, \mathbf{y}$ and run Algorithm \ref{alg: DP-2S-GD-II} with $T=20$ iterations. Figure \ref{fig:Angrist_results_rho1=1_rho2=1} presents the results over 1000 independent runs with privacy budgets $\rho_1=1, \rho_2=1$. As shown in Figure \ref{fig:Angrist_boxplot_rho1=1_rho2=1}, the estimated $\boldsymbol{\beta}^{(T)}$ concentrates around $-4.3$, indicating that having an additional child reduces the mother's labor supply by approximately 4.3 weeks per year. This estimate is consistent with the 2SLS benchmark. 
\begin{figure}[t]
    \centering
    \begin{subfigure}{0.48\linewidth}
        \centering
        \includegraphics[width=0.95\textwidth]{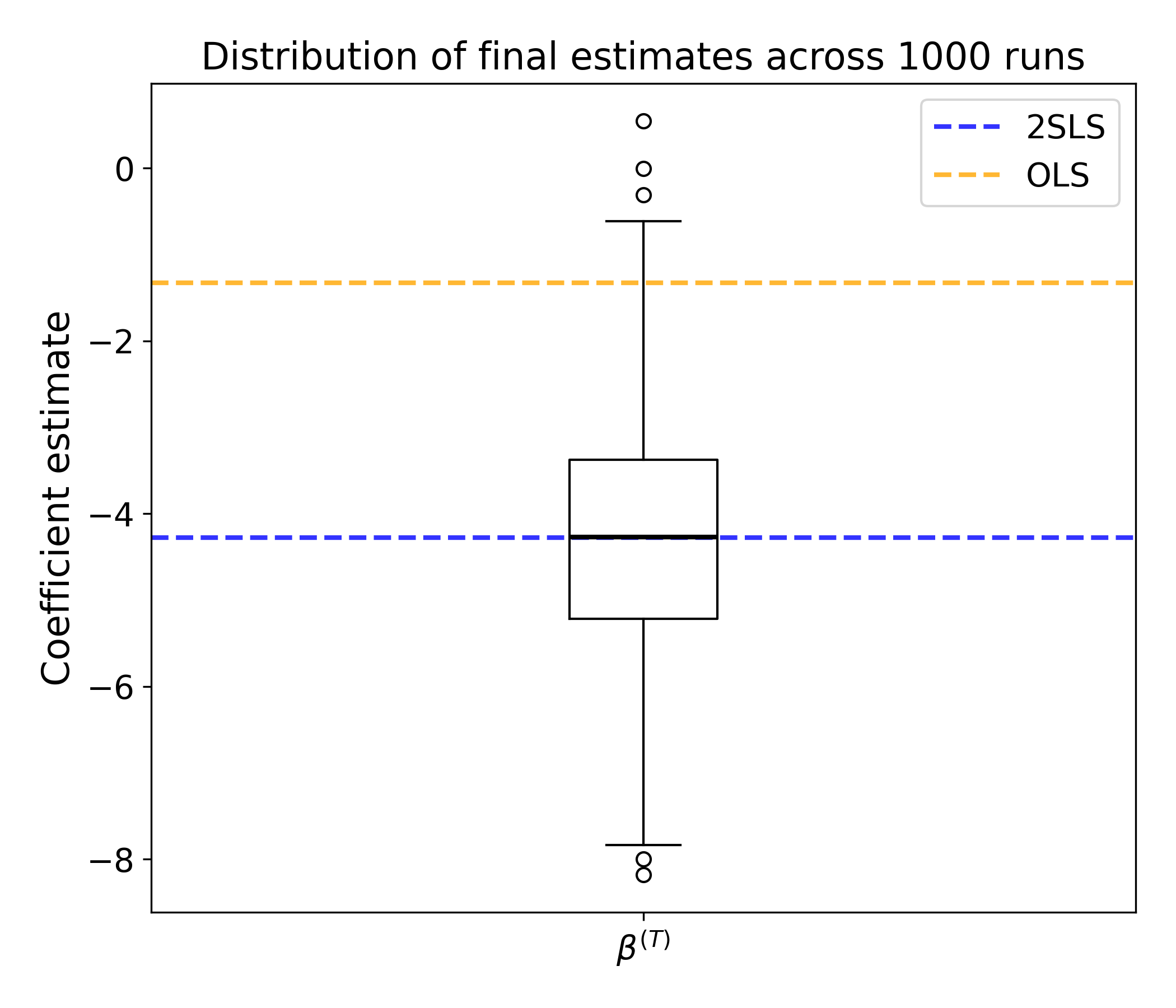}
        \subcaption{}
        \label{fig:Angrist_boxplot_rho1=1_rho2=1}
    \end{subfigure}
    \hfill
    \begin{subfigure}{0.48\linewidth}
        \centering
        \includegraphics[width=0.9\textwidth]{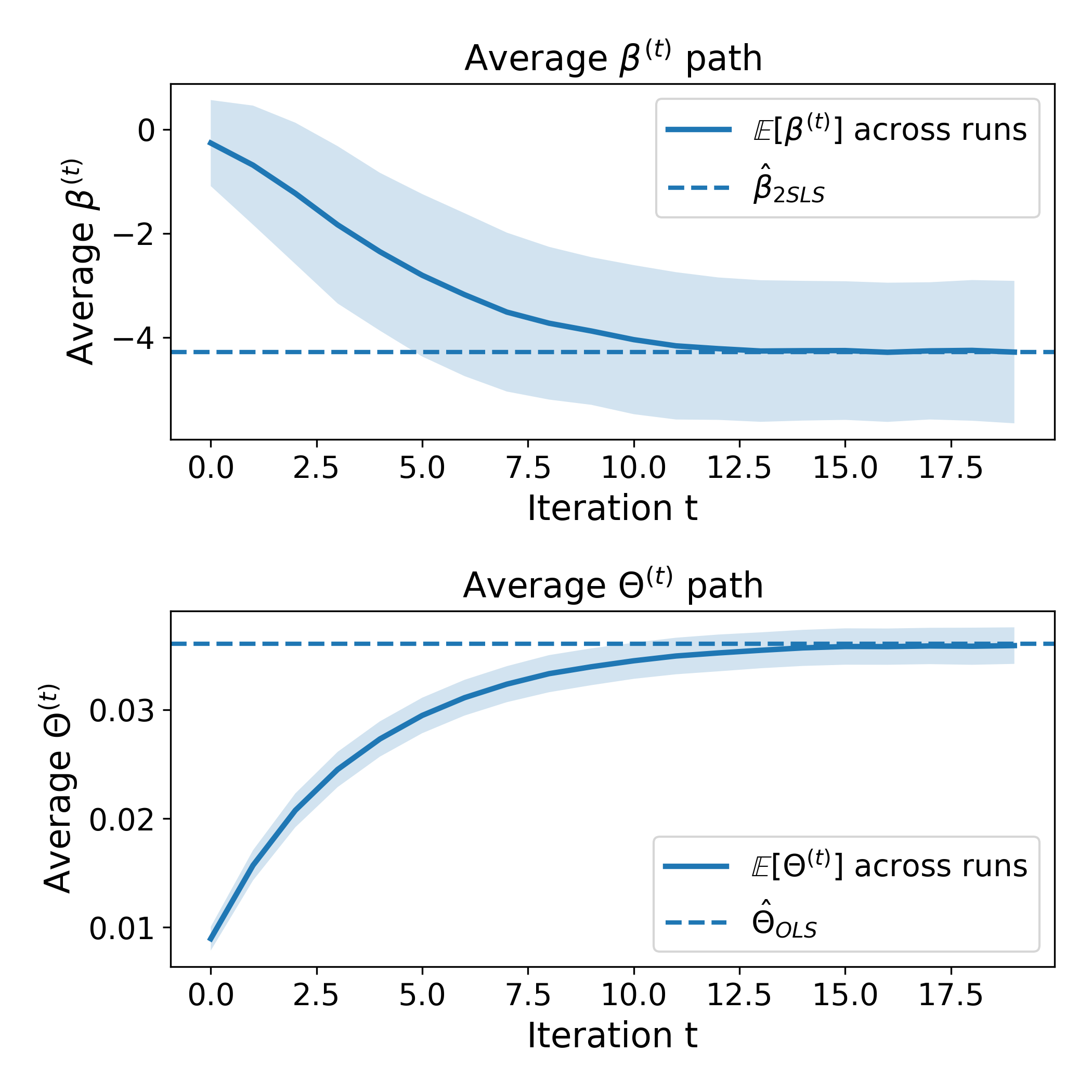}
        \subcaption{}
        \label{fig:Angrist_parameter_path_rho1=1_rho2=1}
    \end{subfigure}

    \caption{Results on the Angrist dataset with $T=20, \rho_1=1, \rho_2=1$. 
    (a) Boxplot of estimated $\boldsymbol{\beta}^{(T)}$, over 1000 runs. (b) Learning paths of parameters $\boldsymbol{\beta}^{(t)}, \boldsymbol{\Theta}^{(t)}$, over 1000 runs. The shaded area represents the standard error.\vspace{-0.21in}} 
    \label{fig:Angrist_results_rho1=1_rho2=1}
\end{figure}

From Figure \ref{fig:Angrist_parameter_path_rho1=1_rho2=1}, we observe that Algorithm \ref{alg: DP-2S-GD-II} converges in expectation after about 15 iterations. The dispersion of the estimates is determined by the privacy budgets: increasing $\rho_1$ and $\rho_2$ yield estimates that are more tightly concentrated around the 2SLS benchmark, while smaller budgets result in greater variability. Additional experiments are provided in Appendix \ref{sec: additional Angrist exp}.

\section{Conclusion}
We have introduced \texttt{DP-2S-GD}, a differentially private two-stage gradient descent method for IVaR problem. The algorithm achieves $(\rho_1+\rho_2)$-zCDP by injecting carefully calibrated Gaussian noise. We have established finite-sample convergence guarantees that capture the trade-offs among optimization dynamics, privacy constraints, and statistical error. Our theoretical analysis shows that setting the number of iterations $T$ to be sub-linear yet super-logarithmic in $n$ minimizes the estimation error, a result that is corroborated by our experiments. We have further illustrated the practical utility of our method through an application to the Angrist dataset. 
On the other hand, we note that, regardless of the privacy constraint, the convergence of the two-stage gradient descent estimator to $\hat{\boldsymbol{\beta}}$ is slower by a $\sqrt{p}$ compared to the convergence of $\hat{\boldsymbol{\beta}}$ to the true parameter $\boldsymbol{\beta}$ (see Remark~\ref{rmk:compto2sls}). Improving this rate (via algorithmic modifications) and establishing lower-bounds for privacy-accuracy tradeoffs for the IVaR problem are interesting future directions.

\section*{Acknowledgments}
Krishnakumar Balasubramanian was supported in part by NSF grant DMS-2413426. Haodong Liang and Lifeng Lai were supported in part by NSF grants CCF-2232907, ECCS-2514514 and ECCS-2448268.

\bibliographystyle{unsrtnat}
\bibliography{references}  

@book{Vershynin_2018,
  title={High-Dimensional Probability: An Introduction with Applications in Data Science},
  author={Vershynin, Roman},
  year={2018},
  publisher={Cambridge University Press},
  isbn={978-1-108-41519-4},
  url={https://www.cambridge.org/core/books/highdimensional-probability/797C466DA29743D2C8213493BD2D2102}}

@TECHREPORT{card1995returns,
title = {Using Geographic Variation in College Proximity to Estimate the Return to Schooling},
author = {Card, David},
year = {1993},
institution = {National Bureau of Economic Research, Inc},
type = {NBER Working Papers},
number = {4483},
url = {https://EconPapers.repec.org/RePEc:nbr:nberwo:4483}
}

@inproceedings{bun2016concentrated,
  title={Concentrated differential privacy: Simplifications, extensions, and lower bounds},
  author={Bun, Mark and Steinke, Thomas},
  booktitle={Theory of cryptography conference},
  pages={635--658},
  year={2016},
  organization={Springer}
}

@inproceedings{
liang2025transformers,
title={Transformers Handle Endogeneity in In-Context Linear Regression},
author={Haodong Liang and Krishna Balasubramanian and Lifeng Lai},
booktitle={The Thirteenth International Conference on Learning Representations},
year={2025},
url={https://openreview.net/forum?id=QfhU3ZC2g1}
}

@article{della2023stochastic,
  title={Stochastic Online Instrumental Variable Regression: Regrets for Endogeneity and Bandit Feedback},
  author={Della Vecchia, Riccardo and Basu, Debabrota},
  journal={arXiv preprint arXiv:2302.09357},
  year={2023},
  url={https://arxiv.org/abs/2302.09357}
}

@inproceedings{chen2024stochasticoptimizationalgorithmsinstrumental,
      title={Stochastic Optimization Algorithms for Instrumental Variable Regression with Streaming Data}, 
      author={Xuxing Chen and Abhishek Roy and Yifan Hu and Krishnakumar Balasubramanian},
      booktitle={Proceedings of Neural Information Processing Systems},
      year={2024},
      month={December},
   address={Vancouver, Canada},
      url={https://arxiv.org/abs/2405.19463}, 
}

@inproceedings{peixoto2024nonparametric,
  title={Nonparametric Instrumental Variable Regression through Stochastic Approximate Gradients},
  author={Fonseca, Yuri and Peixoto, Caio and Saporito, Yuri},
  booktitle={Proceedings of Neural Information Processing Systems},
      year={2024},
      month={December},
   address={Vancouver, Canada},
      url={https://arxiv.org/abs/2402.05639}, 
}

@book{angrist2009mostly,
  title={Mostly harmless econometrics: An empiricist's companion},
  author={Angrist, Joshua D and Pischke, J{\"o}rn-Steffen},
  year={2009},
  publisher={Princeton University Press},
  isbn={9780691120355},
  url={https://press.princeton.edu/books/hardcover/9780691120355/mostly-harmless-econometrics}
}

@inproceedings{muandet2020dualinstrumentalvariableregression,
 author = {Muandet, Krikamol and Mehrjou, Arash and Lee, Si Kai and Raj, Anant},
 booktitle = {Proceedings of Neural Information Processing Systems},
 title = {Dual Instrumental Variable Regression},
 year={2020},
  month={December},
  address={Vancouver, Canada},
  url={https://proceedings.neurips.cc/paper/2020/hash/1c383cd30b7c298ab50293adfecb7b18-Abstract.html}
}

@article{10.1257/jep.15.4.69,
Author = {Angrist, Joshua D. and Krueger, Alan B.},
Title = {Instrumental Variables and the Search for Identification: From Supply and Demand to Natural Experiments},
Journal = {Journal of Economic Perspectives},
Volume = {15},
Number = {4},
Year = {2001},
Pages = {69–85},
publisher={American Economic Association},
  url={https://www.aeaweb.org/articles?id=10.1257/jep.15.4.69}}

@article{dwork2016concentrated,
  title={Concentrated differential privacy},
  author={Dwork, Cynthia and Rothblum, Guy N},
  journal={arXiv preprint arXiv:1603.01887},
  year={2016}
}

@article{chaudhuri2011differentially,
  title={Differentially private empirical risk minimization.},
  author={Chaudhuri, Kamalika and Monteleoni, Claire and Sarwate, Anand D},
  journal={Journal of Machine Learning Research},
  volume={12},
  number={3},
  year={2011}
}

@inproceedings{dwork2006calibrating,
  title={Calibrating noise to sensitivity in private data analysis},
  author={Dwork, Cynthia and McSherry, Frank and Nissim, Kobbi and Smith, Adam},
  booktitle={Theory of cryptography conference},
  pages={265--284},
  year={2006},
  organization={Springer}
}

@article{laurent2000adaptive,
  title={Adaptive estimation of a quadratic functional by model selection},
  author={Laurent, B. and Massart, P.},
  journal={The Annals of Statistics},
  volume={28},
  number={5},
  pages={1302--1338},
  year={2000},
  publisher={Institute of Mathematical Statistics},
  doi = {10.1214/aos/1015957395},
  url = {https://projecteuclid.org/journals/annals-of-statistics/volume-28/issue-5/Adaptive-estimation-of-a-quadratic-functional-by-model-selection/10.1214/aos/1015957395.full}
}

@book{stock2011introduction,
  title={Introduction to Econometrics},
  author={Stock, J.H. and Watson, M.W.},
  isbn={9780138009007},
  year={2011},
  edition={3rd},
  publisher={Addison-Wesley},
  url={https://stock.scholars.harvard.edu/publications/introduction-econometrics-0}
}

@inproceedings{sheffet2017differentially,
  title={Differentially private ordinary least squares},
  author={Sheffet, Or},
  booktitle={International Conference on Machine Learning},
  pages={3105--3114},
  year={2017},
  organization={PMLR}
}

@inproceedings{xu2023instrumental,
  title={An instrumental variable approach to confounded off-policy evaluation},
  author={Xu, Yang and Zhu, Jin and Shi, Chengchun and Luo, Shikai and Song, Rui},
  booktitle={International Conference on Machine Learning},
  pages={38848--38880},
  year={2023},
  organization={PMLR}
}

@inproceedings{si2022model,
  title={A model-agnostic causal learning framework for recommendation using search data},
  author={Si, Zihua and Han, Xueran and Zhang, Xiao and Xu, Jun and Yin, Yue and Song, Yang and Wen, Ji-Rong},
  booktitle={Proceedings of the ACM web conference 2022},
  pages={224--233},
  year={2022}
}

@book{wooldridge2010econometric,
  title={Econometric Analysis of Cross Section and Panel Data},
  author={Wooldridge, Jeffrey M},
  year={2010},
  edition={2nd},
  publisher={MIT Press},
  isbn={9780262232586},
  url={https://mitpress.mit.edu/9780262232586/econometric-analysis-of-cross-section-and-panel-data/}
}

@article{hausman2001mismeasured,
  title={Mismeasured variables in econometric analysis: problems from the right and problems from the left},
  author={Hausman, Jerry},
  journal={Journal of Economic perspectives},
  volume={15},
  number={4},
  pages={57--67},
  year={2001},
  publisher={American Economic Association},
  url={https://www.aeaweb.org/articles?id=10.1257/jep.15.4.57}
}

@article{ferrando2024private,
  title={Private regression via data-dependent sufficient statistic perturbation},
  author={Ferrando, Cecilia and Sheldon, Daniel},
  journal={arXiv preprint arXiv:2405.15002},
  year={2024}
}

@book{Westoff1972,
  author       = {Westoff, Charles F. and Parke, Robert},
  title        = {Demographic and social aspects of population growth},
  year         = {1972},
  publisher    = {Commission on Population Growth and the American Future},
  url={https://catalog.hathitrust.org/Record/000008850}
}

@article{bernstein2019differentially,
  title={Differentially private bayesian linear regression},
  author={Bernstein, Garrett and Sheldon, Daniel R},
  journal={Advances in Neural Information Processing Systems},
  volume={32},
  year={2019}
}

@article{laborsupply,
 ISSN = {00028282},
 URL = {http://www.jstor.org/stable/116844},
 author = {Joshua D. Angrist and William N. Evans},
 journal = {The American Economic Review},
 number = {3},
 pages = {450--477},
 publisher = {American Economic Association},
 title = {Children and Their Parents' Labor Supply: Evidence from Exogenous Variation in Family Size},
 urldate = {2024-11-20},
 volume = {88},
 year = {1998}
}

@article{redberg2023improving,
  title={Improving the privacy and practicality of objective perturbation for differentially private linear learners},
  author={Redberg, Rachel and Koskela, Antti and Wang, Yu-Xiang},
  journal={Advances in Neural Information Processing Systems},
  volume={36},
  pages={13819--13853},
  year={2023}
}

@inproceedings{abadi2016deep,
  title={Deep learning with differential privacy},
  author={Abadi, Martin and Chu, Andy and Goodfellow, Ian and McMahan, H Brendan and Mironov, Ilya and Talwar, Kunal and Zhang, Li},
  booktitle={Proceedings of the 2016 ACM SIGSAC conference on computer and communications security},
  pages={308--318},
  year={2016}
}

@inproceedings{mironov2017renyi,
  title={R{\'e}nyi differential privacy},
  author={Mironov, Ilya},
  booktitle={2017 IEEE 30th computer security foundations symposium (CSF)},
  pages={263--275},
  year={2017},
  organization={IEEE}
}

@inproceedings{wang2019subsampled,
  title={Subsampled r{\'e}nyi differential privacy and analytical moments accountant},
  author={Wang, Yu-Xiang and Balle, Borja and Kasiviswanathan, Shiva Prasad},
  booktitle={The 22nd international conference on artificial intelligence and statistics},
  pages={1226--1235},
  year={2019},
  organization={PMLR}
}

@inproceedings{bassily2014private,
  title={Private empirical risk minimization: Efficient algorithms and tight error bounds},
  author={Bassily, Raef and Smith, Adam and Thakurta, Abhradeep},
  booktitle={2014 IEEE 55th annual symposium on foundations of computer science},
  pages={464--473},
  year={2014},
  organization={IEEE}
}

@inproceedings{kifer2012private,
  title={Private convex empirical risk minimization and high-dimensional regression},
  author={Kifer, Daniel and Smith, Adam and Thakurta, Abhradeep},
  booktitle={Conference on Learning Theory},
  pages={25--1},
  year={2012},
  organization={JMLR Workshop and Conference Proceedings}
}

@inproceedings{dwork2014analyze,
  title={Analyze gauss: optimal bounds for privacy-preserving principal component analysis},
  author={Dwork, Cynthia and Talwar, Kunal and Thakurta, Abhradeep and Zhang, Li},
  booktitle={Proceedings of the forty-sixth annual ACM symposium on Theory of computing},
  pages={11--20},
  year={2014}
}

@article{singh2019kernel,
  title={Kernel instrumental variable regression},
  author={Singh, Rahul and Sahani, Maneesh and Gretton, Arthur},
  journal={Advances in Neural Information Processing Systems},
  volume={32},
  year={2019}
}

@inproceedings{hartford2017deep,
  title={Deep IV: A flexible approach for counterfactual prediction},
  author={Hartford, Jason and Lewis, Greg and Leyton-Brown, Kevin and Taddy, Matt},
  booktitle={International Conference on Machine Learning},
  pages={1414--1423},
  year={2017},
  organization={PMLR}
}

@article{amin2022easy,
  title={Easy differentially private linear regression},
  author={Amin, Kareem and Joseph, Matthew and Ribero, M{\'o}nica and Vassilvitskii, Sergei},
  journal={arXiv preprint arXiv:2208.07353},
  year={2022}
}

@article{brown2024private,
  title={Private gradient descent for linear regression: Tighter error bounds and instance-specific uncertainty estimation},
  author={Brown, Gavin and Dvijotham, Krishnamurthy and Evans, Georgina and Liu, Daogao and Smith, Adam and Thakurta, Abhradeep},
  journal={arXiv preprint arXiv:2402.13531},
  year={2024}
}

@article{cumings2022differentially,
  title={Differentially private estimation via statistical depth},
  author={Cumings-Menon, Ryan},
  journal={arXiv preprint arXiv:2207.12602},
  year={2022}
}

@article{ramsay2021differentially,
  title={Differentially private depth functions and their associated medians},
  author={Ramsay, Kelly and Chenouri, Shoja'eddin},
  journal={arXiv preprint arXiv:2101.02800},
  year={2021}
}

@inproceedings{tsfadia2022friendlycore,
  title={Friendlycore: Practical differentially private aggregation},
  author={Tsfadia, Eliad and Cohen, Edith and Kaplan, Haim and Mansour, Yishay and Stemmer, Uri},
  booktitle={International Conference on Machine Learning},
  pages={21828--21863},
  year={2022},
  organization={PMLR}
}

@article{varshney2022nearly,
  title={(Nearly) Optimal Private Linear Regression via Adaptive Clipping},
  author={Varshney, Prateek and Thakurta, Abhradeep and Jain, Prateek},
  journal={arXiv preprint arXiv:2207.04686},
  year={2022}
}

@article{liu2023near,
  title={Near optimal private and robust linear regression},
  author={Liu, Xiyang and Jain, Prateek and Kong, Weihao and Oh, Sewoong and Suggala, Arun Sai},
  journal={arXiv preprint arXiv:2301.13273},
  year={2023}
}

@inproceedings{kamath2019privately,
  title={Privately learning high-dimensional distributions},
  author={Kamath, Gautam and Li, Jerry and Singhal, Vikrant and Ullman, Jonathan},
  booktitle={Conference on Learning Theory},
  pages={1853--1902},
  year={2019},
  organization={PMLR}
}

@inproceedings{brown2023fast,
  title={Fast, sample-efficient, affine-invariant private mean and covariance estimation for subgaussian distributions},
  author={Brown, Gavin and Hopkins, Samuel and Smith, Adam},
  booktitle={The Thirty Sixth Annual Conference on Learning Theory},
  pages={5578--5579},
  year={2023},
  organization={PMLR}
}

@article{brown2024insufficient,
  title={Insufficient statistics perturbation: Stable estimators for private least squares},
  author={Brown, Gavin and Hayase, Jonathan and Hopkins, Samuel and Kong, Weihao and Liu, Xiyang and Oh, Sewoong and Perdomo, Juan C and Smith, Adam},
  journal={arXiv preprint arXiv:2404.15409},
  year={2024}
}
\appendix
\section{Additional Definitions}\label{sec:renyidef}
\begin{defn}[R\'enyi Divergence]
Let $P$ and $Q$ be probability distributions on a measurable space $(\mathcal{X},\mathcal{F})$, with $P$ absolutely continuous with respect to $Q$. For $\alpha > 1$, the R\'enyi divergence of order $\alpha$ between $P$ and $Q$ is defined as
\[
D_{\alpha}(P \,\|\, Q) 
= \frac{1}{\alpha - 1} 
\log \int_{\mathcal{X}} 
\left(\frac{dP}{dQ}(x)\right)^{\alpha} \, dQ(x).
\]
This family of divergences interpolates between several well-known measures: (i) As $\alpha \to 1$, $D_{\alpha}(P\|Q) \to D_{\mathrm{KL}}(P\|Q)$, the Kullback--Leibler divergence, and (ii) As $\alpha \to \infty$, $D_{\alpha}(P\|Q) \to \log \sup_{x \in \mathcal{X}} \frac{dP}{dQ}(x)$, the log of the essential supremum of the likelihood ratio.
\end{defn}
\begin{defn}[2S-GD]
We introduce the baseline two-stage gradient descent algorithm without privacy constraints, denoted as $\texttt{2S-GD}$, in Algorithm \ref{alg: 2S-GD}.
    \begin{algorithm}[H]
    \caption{2S-GD} \label{alg: 2S-GD}
    \begin{algorithmic}[1]
    \State \textbf{Input:} Data $\mathbf{Z}\in\mathbb{R}^{n\times q}$, $\mathbf{X}\in\mathbb{R}^{n\times p}$, $\mathbf{Y}\in\mathbb{R}^n$
    \State \textbf{Parameters:} Step sizes $\eta, \alpha > 0$, number of iterations $T$
    \State Initialize $\boldsymbol{\beta}^{(0)} =\mathbf{0}_p$, $\boldsymbol{\Theta}^{(0)} =\mathbf{0}_{q \times p}$
    \For{$t = 0,1,\ldots,T-1$}
        \State $\boldsymbol{\Theta}^{(t+1)}=\boldsymbol{\Theta}^{(t)}-\frac{\eta}{n}\sum_{i=1}^n\mathbf{z}_i(\mathbf{z}_i^\top\boldsymbol{\Theta}^{(t)}-\mathbf{x}_i^\top)$    
        \State $\boldsymbol{\beta}^{(t+1)}=\boldsymbol{\beta}^{(t)}-\frac{\alpha}{n}\sum_{i=1}^{n}\boldsymbol{\Theta}^{(t)\top}\mathbf{z}_i(\mathbf{z}_{i}^{\top}\boldsymbol{\Theta}^{(t)}\boldsymbol{\beta}^{(t)}-y_{i})$
    \EndFor
    \State \Return $\{\boldsymbol{\Theta}^{(t)}\}_{t=1}^T,\ \{\boldsymbol{\beta}^{(t)}\}_{t=1}^T$
    \end{algorithmic}
\end{algorithm}
\end{defn}
\section{Proof of Proposition \ref{lem: privacy II - main}}\label{sec: proof of lem: privacy II}
\begin{proof}
At iteration $t$ we are releasing two Gaussian-mechanisms on sums of clipped per-sample gradients (each clipped to norm not larger than $\gamma_{1}$ and $\gamma_{2}$), one with noise scale $\lambda_1$ (for  $\boldsymbol{\Theta}$) and one with noise scale $\lambda_2$ (for $\boldsymbol{\beta}$). By the standard zCDP analysis:
\begin{itemize}
    \item $\boldsymbol{\Theta}$-update: Sensitivity of the summed (clipped) gradients is $\Delta_1=\frac{2 \gamma_{1}}{n}$, and we add noise $\eta \boldsymbol{\Xi}$ with $\mathrm{vec}(\boldsymbol{\Xi}) \sim \mathcal{N}\left(0, \lambda_1^2 \mathbf{I}_{q}\otimes \mathbf{I}_{p}\right)$. By property of Gaussian mechanism, this step satisfies $\rho_{1}=\frac{2\gamma_{1}^{2}}{n^2\lambda_{1}^{2}}$-zCDP
    \item $\boldsymbol{\beta}$-update: Similarly, $\Delta_2=\frac{2 \gamma_{2}}{n}$, this step is $\rho_{2}=\frac{2\gamma_{2}^{2}}{n^2\lambda_{2}^{2}}$
\end{itemize}
By linear composition each iteration costs
\begin{align*}
    \rho_{\text {per it }}=\rho_1+\rho_2=\frac{2}{n^2}\left(\frac{\gamma_1^2}{\lambda_1^2}+\frac{\gamma_2^2}{\lambda_2^2}\right) .
\end{align*}
Over $T$ iterations the overall mechanism satisfies $\rho=\frac{2 T}{n^2}\left(\frac{\gamma_1^2}{\lambda_1^2}+\frac{\gamma_2^2}{\lambda_2^2}\right)$-zCDP.
\end{proof}
\section{Proof of Theorem \ref{thm: main result II}}\label{sec: proof of main result II}
Before the full proof, we would like to provide a high-level summary of our proof strategy to facilitate readers' understanding. Conceptually, our proof tackles three key challenges: 
\begin{itemize}
    \item[(1)] Two-stage coupling: We control a pair of noisy recursions $\{\boldsymbol{\Theta}^{(t)}\}_{t \leq T}$ and $\{\boldsymbol{\beta}^{(t)}\}_{t \leq T}$ where the second-stage gradient at time $t$ depends on the noisy first stage iterate $\boldsymbol{\Theta}^{(t)}$. This coupling does not appear in standard DP-SGD analyses. 
    \item[(2)] Noise propagation: Since we add Gaussian noise at every step and stage, the privacy accountant composes over all $2T$ iterations. To obtain a meaningful error bound for the final iterate $\boldsymbol{\beta}^{(T)}$, we need to separate the contributions of optimization, privacy, and sampling error. This decomposition gives rise to the three terms in Theorem 3.1 and explains the tradeoff pattern shown in Figure 2 and our experiments.
    \item[(3)] Non-asymptotic rate: The analysis retains explicit dependence on all major parameters: sample size $n$, instrument dimension $q$, regressor dimension $p$, privacy budgets $(\rho_1,\rho_2)$, and confidence parameter $\tau$. 
\end{itemize}

We first re-state the result with additional details.

\noindent\textbf{Theorem \ref{thm: main result II}.}
For any fixed $\boldsymbol{\Theta}\in\mathbb{R}^{q\times p}$ and $\boldsymbol{\beta}\in\mathbb{R}^{p}$, consider the Algorithm \ref{alg: DP-2S-GD-II} with step sizes satisfying 
\begin{align*}
0<\eta<\frac{2}{(1+\delta(\tau))^2}, \quad 0<\alpha<\frac{4}{2\bar{\gamma}(\tau)+\underline{\gamma}(\tau)},
\end{align*}
under Assumption \ref{asp: Z}, with parameters 
\begin{equation*}
   \begin{aligned}
       \lambda_1 = \frac{2\gamma_{1}}{n}\sqrt{\frac{T}{\rho_1}},\quad\lambda_2 = \frac{2\gamma_{2}}{n}\sqrt{\frac{T}{\rho_2}},\quad
    \gamma_1 = \gamma_2 = c_0\left(\sqrt{q}+\sqrt{\tau+\log(nT)}\right)^2,
   \end{aligned}
\end{equation*}
and number of iterations 
\begin{align*}
    T\lesssim \frac{\rho_1 n^{2-\epsilon}}{p(\sqrt{q}+\sqrt{\tau})^6},
\end{align*}
where $\epsilon>0$ is a small constant. If
\begin{align*}
        n\geq c_1\max\left\{pq(\tau+\log(pq))^2, \frac{\left(\sqrt{q}+\sqrt{\tau}\right)^3}{\sqrt{\min\{\rho_1,\rho_2\}}}\right\}, 
\end{align*}
for any fixed $\tau$, with probability $1-c_2e^{-\tau}$, we have 
\begin{align*}
\begin{split}
    \|\boldsymbol{\beta}^{(T)}-\hat{\boldsymbol{\beta}}\|
    &\lesssim \kappa(\tau)^{\frac{T}{2}}+\frac{\sqrt{p}(\sqrt{q}+\sqrt{\tau})^3}{n\sqrt{\min\{\rho_1,\rho_2\}}}\sqrt{T}+\frac{\sqrt{pq}(\tau+\log(pq))}{\sqrt{n}},
    \end{split}
\end{align*}
where
\begin{equation}\label{eq: definitions of parameters}
    \begin{gathered}
    \delta(\tau):=\frac{C_0\sigma_z^2(\sqrt{q}+\sqrt{\tau})}{\sqrt{n}},\\
    \underline{\gamma}(\tau):=(1-\delta(\tau))^2\left(\sigma_{\min}(\boldsymbol{\Theta})-\psi(\tau)\right)^2, \quad\bar{\gamma}(\tau):=(1+\delta(\tau))^2\left(\|\boldsymbol{\Theta}\|+\psi(\tau)\right)^2,\\    \psi(\tau):=\frac{c_0\sigma_z\sigma_2\sqrt{pq}\left(\tau+\log(2pq)\right)}{\sqrt{n}\left(1-\delta(\tau)\right)^2},\\
    \kappa_{\boldsymbol{\beta}}(\tau):=\max\left\{|1-\frac{\alpha\underline{\gamma}(\tau)}{2}|,|1-\frac{\alpha(2\bar{\gamma}(\tau)+\underline{\gamma}(\tau))}{2}|\right\}, \\
    \kappa_{\boldsymbol{\Theta}}(\tau):=\max\left\{\left|1-\eta\left(1-\delta(\tau)\right)^2\right|, \left|1-\eta\left(1+\delta(\tau)\right)^2\right|\right\},\\
    \kappa(\tau):=\max\{\kappa_{\boldsymbol{\beta}}(\tau),\kappa_{\boldsymbol{\Theta}}(\tau)\}.
    \end{gathered}
\end{equation}
\begin{proof}
Denote $\mathbf{e}_{\boldsymbol{\Theta}}^{(t)}:=\boldsymbol{\Theta}^{(t)}-\hat{\boldsymbol{\Theta}}$ and $\mathbf{e}_{\boldsymbol{\beta}}^{(t)}:=\boldsymbol{\beta}^{(t)}-\hat{\boldsymbol{\beta}}$. We have
\begin{align} \label{eq: Theta recursion 2}
    \begin{split}
    \mathbf{e}_{\boldsymbol{\Theta}}^{(t+1)}&=\mathbf{e}_{\boldsymbol{\Theta}}^{(t)}-\frac{\eta}{n}\mathbf{Z}^\top\left(\mathbf{Z}\boldsymbol{\Theta}^{(t)}-\mathbf{X}\right)+\eta\boldsymbol{\Xi}^{(t)}\\
    &=\left(\mathbf{I}-\frac{\eta}{n}\mathbf{Z}^\top\mathbf{Z}\right)\mathbf{e}_{\boldsymbol{\Theta}}^{(t)}+\frac{\eta}{n}\mathbf{Z}^\top\left(\mathbf{X}-\mathbf{Z}\hat{\boldsymbol{\Theta}}\right)+\eta\boldsymbol{\Xi}^{(t)}\\
    &=\left(\mathbf{I}-\frac{\eta}{n}\mathbf{Z}^\top\mathbf{Z}\right)^{t+1}\mathbf{e}_{\boldsymbol{\Theta}}^{(0)}+\sum_{i=0}^{t}\eta\left(\mathbf{I}-\frac{\eta}{n}\mathbf{Z}^\top\mathbf{Z}\right)^{t-i}\left(\frac{1}{n}\mathbf{Z}^\top(\mathbf{X}-\mathbf{Z}\hat{\boldsymbol{\Theta}})+\boldsymbol{\Xi}^{(i)}\right)\\
    &=\left(\mathbf{I}-\frac{\eta}{n}\mathbf{Z}^\top\mathbf{Z}\right)^{t+1}\mathbf{e}_{\boldsymbol{\Theta}}^{(0)}+\underbrace{\sum_{i=0}^t\eta\left(\mathbf{I}-\frac{\eta}{n}\mathbf{Z}^\top\mathbf{Z}\right)^{t-i}\boldsymbol{\Xi}^{(i)}}_{\mathbf{N}^{(t)}},
    \end{split}
\end{align}
\begin{align}\label{eq: beta recursion 2}
    \begin{split}
    \mathbf{e}_{\boldsymbol{\beta}}^{(t+1)}&=\mathbf{e}_{\boldsymbol{\beta}}^{(t)}-\frac{\alpha}{n}\boldsymbol{\Theta}^{(t)}\mathbf{Z}^\top\left(\mathbf{Z}\boldsymbol{\Theta}^{(t)}\boldsymbol{\beta}^{(t)}-\mathbf{Y}\right)+\alpha\boldsymbol{\nu}^{(t)}\\
    &=\left(\mathbf{I}-\frac{\alpha}{n}\boldsymbol{\Theta}^{(t)\top}\mathbf{Z}^\top\mathbf{Z}\boldsymbol{\Theta}^{(t)}\right)\mathbf{e}_{\boldsymbol{\beta}}^{(t)}+\frac{\alpha}{n}\left[\boldsymbol{\Theta}^{(t)\top}\mathbf{Z}^\top\mathbf{Y}-\boldsymbol{\Theta}^{(t)\top}\mathbf{Z}^\top\mathbf{Z}\boldsymbol{\Theta}^{(t)}\hat{\boldsymbol{\beta}}\right]+\alpha\boldsymbol{\nu}^{(t)}\\
    &=\left(\mathbf{I}-\frac{\alpha}{n}\boldsymbol{\Theta}^{(t)\top}\mathbf{Z}^\top\mathbf{Z}\boldsymbol{\Theta}^{(t)}\right)\mathbf{e}_{\boldsymbol{\beta}}^{(t)}+\frac{\alpha}{n}\boldsymbol{\Theta}^{(t)\top}\mathbf{Z}^\top\left(\mathbf{Y}-\mathbf{Z}\boldsymbol{\Theta}^{(t)}\hat{\boldsymbol{\beta}}\right)+\alpha\boldsymbol{\nu}^{(t)}\\
    &=\left(\mathbf{I}-\frac{\alpha}{n}\boldsymbol{\Theta}^{(t)\top}\mathbf{Z}^\top\mathbf{Z}\boldsymbol{\Theta}^{(t)}\right)\mathbf{e}_{\boldsymbol{\beta}}^{(t)}-\frac{\alpha}{n}\boldsymbol{\Theta}^{(t)\top}\mathbf{Z}^\top\left(\mathbf{Z}\left(\boldsymbol{\Theta}^{(t)}-\hat{\boldsymbol{\Theta}}\right)\hat{\boldsymbol{\beta}}\right)-\frac{\alpha}{n}\left(\boldsymbol{\Theta}^{(t)\top}\mathbf{Z}^\top\left(\mathbf{Z}\hat{\boldsymbol{\Theta}}\hat{\boldsymbol{\beta}}-\mathbf{Y}\right)\right)+\alpha\boldsymbol{\nu}^{(t)}\\
    &:=\left[\mathbf{I}-\alpha\mathbf{H}^{(t)}\right]\mathbf{e}_{\boldsymbol{\beta}}^{(t)}-\frac{\alpha}{n}\boldsymbol{\Theta}^{(t)\top}\mathbf{Z}^\top\mathbf{Z}\mathbf{e}_{\boldsymbol{\Theta}}^{(t)}\hat{\boldsymbol{\beta}}-\frac{\alpha}{n}\boldsymbol{\Theta}^{(t)\top}\mathbf{Z}^\top\mathbf{r}+\alpha\boldsymbol{\nu}^{(t)},  
\end{split}
\end{align}
where $\mathbf{H}^{(t)}:=\frac{1}{n}\boldsymbol{\Theta}^{(t)\top}\mathbf{Z}^{\top}\mathbf{Z}\boldsymbol{\Theta}^{(t)}$ and $\mathbf{r}:=\mathbf{Z}\hat{\boldsymbol{\Theta}}\hat{\boldsymbol{\beta}}-\mathbf{Y}$. We first show that $\mathbf{H}^{(t)}$ is close to the target $\hat{\mathbf{H}}:=\frac{1}{n}\hat{\boldsymbol{\Theta}}^\top\mathbf{Z}^\top\mathbf{Z}\hat{\boldsymbol{\Theta}}$. We define the event 
\begin{align*}
    E_{T_0, T}=\left\{\|\mathbf{e}_{\boldsymbol{\Theta}}^{(k)}\|=\|\boldsymbol{\Theta}^{(k)}-\hat{\boldsymbol{\Theta}}\|\leq\varepsilon, \forall T_0\leq k< T\right\}.
\end{align*}
Conditioning on the event $E_{T_0,T}$, we then have 
\begin{align*}
    \|\mathbf{H}^{(t)}-\hat{\mathbf{H}}\|&=\frac{1}{n}\|\boldsymbol{\Theta}^{(t)\top}\mathbf{Z}^\top\mathbf{Z}\boldsymbol{\Theta}^{(t)}-\hat{\boldsymbol{\Theta}}^\top\mathbf{Z}^\top\mathbf{Z}\hat{\boldsymbol{\Theta}}\|\\
    &=\frac{1}{n}\|\boldsymbol{\Theta}^{(t)\top}\mathbf{Z}^\top\mathbf{Z}(\boldsymbol{\Theta}^{(t)}-\hat{\boldsymbol{\Theta}})+ (\boldsymbol{\Theta}^{(t)}-\hat{\boldsymbol{\Theta}})^\top\mathbf{Z}^\top\mathbf{Z}\hat{\boldsymbol{\Theta}}\|\\
    &\leq \frac{1}{n}(\|\boldsymbol{\Theta}^{(t)}\|+\|\hat{\boldsymbol{\Theta}}\|)\|\mathbf{Z}^\top\mathbf{Z}\|\varepsilon\\
    &\leq (2\|\hat{\boldsymbol{\Theta}}\|+\varepsilon)\|\frac{\mathbf{Z}^\top\mathbf{Z}}{n}\|\varepsilon, \quad\forall T_0\leq t\leq T.
\end{align*}
From Lemma \ref{lem: Z bound}, we have with probability at least $1-2e^{-\tau}$, 
\begin{align*}
    \|\frac{\mathbf{Z}^\top\mathbf{Z}}{n}\|\leq (1+\delta(\tau))^2,
\end{align*}
so that 
\begin{align*}
    \|\mathbf{H}^{(t)}-\hat{\mathbf{H}}\|&\leq (2\|\hat{\boldsymbol{\Theta}}\|+\varepsilon)(1+\delta(\tau))^2\varepsilon, \quad\forall T_0\leq t\leq T.
\end{align*}
Suppose $\underline{\gamma}(\tau),\bar{\gamma}(\tau)$ are some high probability bounds such that $\lambda_{\min}(\hat{\mathbf{H}})\geq\underline{\gamma}(\tau)>0$, $\lambda_{\max}(\hat{\mathbf{H}})\leq \bar{\gamma}(\tau)$. From Lemma \ref{lem: lambda_min H star}, we can take
\begin{gather*}
    \underline{\gamma}(\tau):=(1-\delta(\tau))^2\left(\sigma_{\min}(\boldsymbol{\Theta})-\frac{c_0\sigma_z\sigma_2\sqrt{pq}\left(\tau+\log(2pq)\right)}{\sqrt{n}\left(1-\delta(\tau)\right)^2}\right)^2,\\
    \bar{\gamma}(\tau):=(1+\delta(\tau))^2\left(\|\boldsymbol{\Theta}\|+\frac{c_0\sigma_z\sigma_2\sqrt{pq}\left(\tau+\log(2pq)\right)}{\sqrt{n}\left(1-\delta(\tau)\right)^2}\right)^2.
\end{gather*}
If $\varepsilon$ satisfies the following condition:
\begin{align*}
\varepsilon\leq \frac{\underline{\gamma}(\tau)}{2(2\|\hat{\boldsymbol{\Theta}}\|+\varepsilon)(1+\delta(\tau))^2},
\end{align*}
i.e. we choose
\begin{align}\label{eq: epsilon condition}
    \varepsilon\leq \sqrt{\|\hat{\boldsymbol{\Theta}}\|^2+\frac{\underline{\gamma}(\tau)}{2(1+\delta(\tau))^2}}-\|\hat{\boldsymbol{\Theta}}\|,
\end{align}
by Weyl's inequality, we then have
\begin{align*}
    \lambda_{\min}\left(\mathbf{H}^{(t)}\right)&\geq
    \lambda_{\min}(\hat{\mathbf{H}})-\|\mathbf{H}^{(t)}-\hat{\mathbf{H}}\|\geq \frac{\underline{\gamma}(\tau)}{2}\\
    \lambda_{\max}\left(\mathbf{H}^{(t)}\right)&\leq \lambda_{\max}(\hat{\mathbf{H}})+\|\mathbf{H}^{(t)}-\hat{\mathbf{H}}\|\leq\bar{\gamma}(\tau)+\frac{\underline{\gamma}(\tau)}{2}
\end{align*}
This in turn implies that on $E_{T_0, T}$, when $0<\alpha<\frac{4}{2\bar{\gamma}(\tau)+\underline{\gamma}(\tau)}$, we have
\begin{equation}\label{eq: rho_beta}
\begin{aligned}
    \|\mathbf{I}-\alpha\mathbf{H}^{(t)}\|&\leq \max\left\{|1-\alpha\lambda_{\min}(\mathbf{H}^{(t)})|, |1-\alpha\lambda_{\max}(\mathbf{H}^{(t)})|\right\}\\
    &\leq\max\left\{|1-\frac{\alpha\underline{\gamma}(\tau)}{2}|,|1-\frac{\alpha(2\bar{\gamma}(\tau)+\underline{\gamma}(\tau))}{2}|\right\}:=\kappa_{\boldsymbol{\beta}}(\tau)<1,
\end{aligned}
\end{equation}
hence the error recursion \eqref{eq: beta recursion 2} satisfies
\begin{align*}
    \|\mathbf{e}_{\boldsymbol{\beta}}^{(t+1)}\|&\leq \kappa_{\boldsymbol{\beta}}(\tau)\|\mathbf{e}_{\boldsymbol{\beta}}^{(t)}\|+\frac{\alpha}{n}\|\boldsymbol{\Theta}^{(t)\top}\mathbf{Z}^\top\mathbf{Z}\mathbf{e}_{\boldsymbol{\Theta}}^{(t)}\hat{\boldsymbol{\beta}}\|+\frac{\alpha}{n}\|\boldsymbol{\Theta}^{(t)\top}\mathbf{Z}^\top\mathbf{r}\|+\alpha\|\boldsymbol{\nu}^{(t)}\|,
\end{align*}
and 
\begin{equation}
    \begin{aligned}\label{eq: e_beta recursion}
    \|\mathbf{e}_{\boldsymbol{\beta}}^{(T)}\|&\leq \kappa_{\boldsymbol{\beta}}(\tau)^{T-T_0}\|\mathbf{e}_{\boldsymbol{\beta}}^{(T_0)}\|+\frac{\alpha}{n}\sum_{k=T_0}^{T-1}\kappa_{\boldsymbol{\beta}}(\tau)^{T-1-k}\left(\|\boldsymbol{\Theta}^{(k)\top}\mathbf{Z}^\top\mathbf{Z}\mathbf{e}_{\boldsymbol{\Theta}}^{(k)}\hat{\boldsymbol{\beta}}\|+\|\boldsymbol{\Theta}^{(k)\top}\mathbf{Z}^\top\mathbf{r}\|\right)+\frac{\alpha}{1-\kappa_{\boldsymbol{\beta}}(\tau)}\|\boldsymbol{\nu}\|\\
    &\leq \kappa_{\boldsymbol{\beta}}(\tau)^{T-T_0}\|\mathbf{e}_{\boldsymbol{\beta}}^{(T_0)}\|+\frac{\alpha\|\mathbf{Z}^\top\mathbf{Z}\|\|\hat{\boldsymbol{\beta}}\|}{n}\sum_{k=T_0}^{T-1}\kappa_{\boldsymbol{\beta}}(\tau)^{T-1-k}\|\boldsymbol{\Theta}^{(k)}\|\|\mathbf{e}_{\boldsymbol{\Theta}}^{(k)}\|+\frac{\alpha\|\mathbf{Z}^\top\mathbf{r}\|}{n}\sum_{k=T_0}^{T-1}\kappa_{\boldsymbol{\beta}}(\tau)^{T-1-k}\|\boldsymbol{\Theta}^{(k)}\|\\
    &\quad+\frac{\alpha}{1-\kappa_{\boldsymbol{\beta}}(\tau)}\|\boldsymbol{\nu}\|\\
    &\leq \kappa_{\boldsymbol{\beta}}(\tau)^{T-T_0}\|\mathbf{e}_{\boldsymbol{\beta}}^{(T_0)}\|+\alpha(1+\delta(\tau))^2\|\hat{\boldsymbol{\beta}}\|\sum_{k=T_0}^{T-1}\kappa_{\boldsymbol{\beta}}(\tau)^{T-1-k}\|\boldsymbol{\Theta}^{(k)}\|\|\mathbf{e}_{\boldsymbol{\Theta}}^{(k)}\|+\frac{\alpha\|\mathbf{Z}^\top\mathbf{r}\|}{n}\sum_{k=T_0}^{T-1}\kappa_{\boldsymbol{\beta}}(\tau)^{T-1-k}\|\boldsymbol{\Theta}^{(k)}\|\\
    &\quad+\frac{\alpha}{1-\kappa_{\boldsymbol{\beta}}(\tau)}\|\boldsymbol{\nu}\|.
    \end{aligned}
\end{equation}
Under event $E_{T_0,T}$, we have the uniform bound:
\begin{align*}
    \|\boldsymbol{\Theta}^{(k)}\|&\leq \|\hat{\boldsymbol{\Theta}}\|+\varepsilon, \quad\forall T_0\leq k< T,\\
    \|\mathbf{e}_{\boldsymbol{\Theta}}^{(k)}\|&\leq \varepsilon, \quad\forall T_0\leq k< T.
\end{align*}
Besides, from Lemma \ref{lem: Psi bound} and Lemma \ref{lem: beta hat error bound}, we have when $n=\Omega(pq(\tau+\log(pq))^2)$, $\|\hat{\boldsymbol{\Theta}}\|$ and $\|\hat{\boldsymbol{\beta}}\|$ are bounded by some constants with high probability:
    \begin{align*}
         \|\hat{\boldsymbol{\beta}}\|\lesssim 1,\quad \|\hat{\boldsymbol{\Theta}}\|\lesssim 1.
    \end{align*}
From Lemma \ref{lem: Zr bound}, we have 
    \begin{align*}
        \|\mathbf{Z}^\top\mathbf{r}\|\lesssim\sqrt{npq}\left(\tau+\log(pq)\right).
    \end{align*}
Since $\boldsymbol{\nu}\sim\mathcal{N}(0, \lambda_{2}^2\mathbf{I}_{p})$, we have with probability $1-e^{-\tau}$,
\begin{align*}
    \|\boldsymbol{\nu}\|\lesssim \lambda_2\left(\sqrt{p}+\sqrt{\tau}\right).
\end{align*}
Then from \eqref{eq: e_beta recursion},
\begin{equation}
\begin{aligned}\label{eq: e_beta(T) bound}
    \|\mathbf{e}_{\boldsymbol{\beta}}^{(T)}\|&\leq \kappa_{\boldsymbol{\beta}}(\tau)^{T-T_0}\|\mathbf{e}_{\boldsymbol{\beta}}^{(T_0)}\|+\alpha (1+\delta(\tau))^2\|\hat{\boldsymbol{\beta}}\|\varepsilon(\|\hat{\boldsymbol{\Theta}}\|+\varepsilon)\sum_{k=T_0}^{T-1}\kappa_{\boldsymbol{\beta}}(\tau)^{T-1-k}+\frac{\alpha \|\mathbf{Z}^\top\mathbf{r}\|(\|\hat{\boldsymbol{\Theta}}\|+\varepsilon)}{n}\sum_{k=T_0}^{T-1}\kappa_{\boldsymbol{\beta}}(\tau)^{T-1-k}\\
    &\quad+\frac{\alpha}{1-\kappa_{\boldsymbol{\beta}}(\tau)}\|\boldsymbol{\nu}\|\\
    &\leq \kappa_{\boldsymbol{\beta}}(\tau)^{T-T_0}\|\mathbf{e}_{\boldsymbol{\beta}}^{(T_0)}\|+\frac{\alpha (1+\delta(\tau))^2\|\hat{\boldsymbol{\beta}}\|\varepsilon(\|\hat{\boldsymbol{\Theta}}\|+\varepsilon)}{1-\kappa_{\boldsymbol{\beta}}(\tau)}+\frac{\alpha \|\mathbf{Z}^\top\mathbf{r}\|(\|\hat{\boldsymbol{\Theta}}\|+\varepsilon)}{n\left(1-\kappa_{\boldsymbol{\beta}}(\tau)\right)}+\frac{\alpha}{1-\kappa_{\boldsymbol{\beta}}(\tau)}\|\boldsymbol{\nu}\|\\
    &\lesssim \kappa_{\boldsymbol{\beta}}(\tau)^{T-T_0}\|\mathbf{e}_{\boldsymbol{\beta}}^{(T_0)}\|+\varepsilon(1+\varepsilon)+\frac{\sqrt{pq}(\tau+\log(pq))}{\sqrt{n}}\left(1+\varepsilon\right)+\lambda_2\left(\sqrt{p}+\sqrt{\tau}\right).\\
\end{aligned}
\end{equation}
It remains to bound $\|\mathbf{e}_{\boldsymbol{\beta}}^{(T_0)}\|$. Denote $\mathbf{L}^{(t)}:=\mathbf{I}-\alpha\mathbf{H}^{(t)}, t=0,1,\ldots, T_0-1$. Note that from Lemma \ref{lem: L(t) product bound}, $\prod_{t=0}^{T_0-1}\|\mathbf{L}^{(t)}\|$ can be bounded by a constant for any $T_0\leq T$. From \eqref{eq: beta recursion 2}, we have
\begin{align*}
   \mathbf{e}_{\boldsymbol{\beta}}^{(T_0)}
   &=\prod_{t=0}^{T_0-1}\mathbf{L}^{(t)}\mathbf{e}_{\boldsymbol{\beta}}^{(0)}-\frac{\alpha}{n}\sum_{k=0}^{T_0-1}\prod_{t=k+1}^{T_0-1}\mathbf{L}^{(t)}\left[\boldsymbol{\Theta}^{(k)\top}\mathbf{Z}^\top\mathbf{Z}\mathbf{e}_{\boldsymbol{\Theta}}^{(k)}\hat{\boldsymbol{\beta}}+\boldsymbol{\Theta}^{(k)\top}\mathbf{Z}^\top\mathbf{r}\right]+\alpha\sum_{k=0}^{T_0-1}\prod_{t=k+1}^{T_0-1}\mathbf{L}^{(t)}\boldsymbol{\nu}^{(k)}.
\end{align*}
Then
\begin{equation}
\begin{aligned}\label{eq: e_beta(T0) bound}
    \|\mathbf{e}_{\boldsymbol{\beta}}^{(T_0)}\|&\leq\left(\prod_{t=0}^{T_0-1}\|\mathbf{L}^{(t)}\|\right)\|\mathbf{e}_\beta^{(0)}\|+\frac{\alpha (\|\hat{\boldsymbol{\Theta}}\|+\varepsilon)\|\mathbf{Z}^\top\mathbf{Z}\|\|\hat{\boldsymbol{\beta}}\|}{n}\sum_{k=0}^{T_0-1}\prod_{t=k+1}^{T_0-1}\|\mathbf{L}^{(t)}\|\|\mathbf{e}_{\boldsymbol{\Theta}}^{(k)}\|\\
    &\quad+\frac{\alpha (\|\hat{\boldsymbol{\Theta}}\|+\varepsilon)\|\mathbf{Z}^\top\mathbf{r}\|}{n}\sum_{k=0}^{T_0-1}\prod_{t=k+1}^{T_0-1}\|\mathbf{L}^{(t)}\|+\frac{\alpha}{1-\kappa_{\boldsymbol{\beta}}(\tau)}\|\boldsymbol{\nu}\|\\
    &\lesssim \|\hat{\boldsymbol{\beta}}\|+\frac{(\|\hat{\boldsymbol{\Theta}}\|+\varepsilon)\|\mathbf{Z}^\top\mathbf{Z}\|\|\hat{\boldsymbol{\beta}}\|}{n}\sum_{k=0}^{T_0-1}\|\mathbf{e}_{\boldsymbol{\Theta}}^{(k)}\|+\frac{(\|\hat{\boldsymbol{\Theta}}\|+\varepsilon)\|\mathbf{Z}^\top\mathbf{r}\|}{n}T_0+\lambda_2\left(\sqrt{p}+\sqrt{\tau}\right)\\
    &\lesssim 1+(1+\varepsilon)T_0\max_{0\leq k\leq T_0-1}\|\mathbf{e}_{\boldsymbol{\Theta}}^{(k)}\|+\frac{\sqrt{pq}(\tau+\log(pq))}{\sqrt{n}}\left(1+\varepsilon\right)T_0+\lambda_2\left(\sqrt{p}+\sqrt{\tau}\right).
\end{aligned}
\end{equation}
Now, it remains to determine the values of $\varepsilon, T_0, T,$ and the bound for $\max_{0\leq k\leq T_0-1}\|\mathbf{e}_{\boldsymbol{\Theta}}^{(k)}\|$. 

From Lemma \ref{lem: N bound}, where we take $\lambda_1:=\frac{2\gamma_1}{n}\sqrt{\frac{T}{\rho_1}}$, with probability at least $1-3e^{-\tau}$, we have $E_{T_0,T}=\{\|\mathbf{e}_{\boldsymbol{\Theta}}^{(k)}\|\leq \varepsilon, \forall T_0\leq k< T\}$ holds\footnote{A rigorous analysis requires setting $\tau:=\tau+\log(T)$ to account for the union bound. However, under condition \eqref{eq: parameter settings}, $\log(T)$ grows slower than any positive power of $n$, thus we omit this term. Similar argument applies to later analysis.}, where 
\begin{equation}\label{eq: varepsilon bound}
\begin{aligned}
    \varepsilon&:=\kappa_{\boldsymbol{\Theta}}(\tau)^{T_0}\|\hat{\boldsymbol{\Theta}}\|+\frac{\eta\lambda_1}{\sqrt{1-\kappa_{\boldsymbol{\Theta}}(\tau)^2}}\left(\sqrt{pq}+\sqrt{2p\left(\log(p)+\tau\right)}\right)\\
    &=\kappa_{\boldsymbol{\Theta}}(\tau)^{T_0}\|\hat{\boldsymbol{\Theta}}\|+\frac{2\eta\gamma_1}{n\sqrt{1-\kappa_{\boldsymbol{\Theta}}(\tau)^2}}\sqrt{\frac{T}{\rho_1}}\left(\sqrt{pq}+\sqrt{2p\left(\log(p)+\tau\right)}\right)\\
    &\lesssim \kappa_{\boldsymbol{\Theta}}(\tau)^{T_0}+\mu(\tau),
\end{aligned}
\end{equation}
where 
\begin{gather*}
    \delta(\tau):=\frac{C_0\sigma_z^2(\sqrt{q}+\sqrt{\tau})}{\sqrt{n}},\\
    \kappa_{\boldsymbol{\Theta}}(\tau):=\max\left\{\left|1-\eta(1-\delta(\tau))^2\right|, \left|1-\eta(1+\delta(\tau))^2\right|\right\},\\
    \mu(\tau):=\lambda_1\left(\sqrt{pq}+\sqrt{p\left(\log(p)+\tau\right)}\right).
\end{gather*}
Similarly, we have with probability at least $1-3e^{-\tau}$,
\begin{equation}\label{eq: e_Theta max bound}
\begin{aligned}
    \max_{0\leq k\leq T_0-1}\|\mathbf{e}_{\boldsymbol{\Theta}}^{(k)}\| &\leq \|\hat{\boldsymbol{\Theta}}\|+\frac{\eta\lambda_1}{\sqrt{1-\kappa_{\boldsymbol{\Theta}}(\tau)^2}}\left(\sqrt{pq}+\sqrt{2p\left(\log(p)+\tau\right)}\right)\\
    &=\varepsilon+\left(1-\kappa_{\boldsymbol{\Theta}}(\tau)^{T_0}\right)\|\hat{\boldsymbol{\Theta}}\|\\
    &\lesssim 1+\mu(\tau).
\end{aligned} 
\end{equation}
 Next, we need to pick $T, T_0$ such that condition \eqref{eq: epsilon condition} is satisfied:
    \begin{align}\label{eq: epsilon prime condition}
        \varepsilon&\leq \sqrt{\|\hat{\boldsymbol{\Theta}}\|^2+\frac{\underline{\gamma}(\tau)}{2(1+\delta(\tau))^2}}-\|\hat{\boldsymbol{\Theta}}\|:=\bar{\varepsilon}.
    \end{align}
    This can be done by setting 
    \begin{gather}
        \kappa_{\boldsymbol{\Theta}}(\tau)^{T_0}\|\hat{\boldsymbol{\Theta}}\|\leq\frac{\bar{\varepsilon}}{2}\nonumber,\\
        \frac{2\eta\gamma_1}{n\sqrt{1-\kappa_{\boldsymbol{\Theta}}(\tau)^2}}\sqrt{\frac{T}{\rho_1}}\left(\sqrt{pq}+\sqrt{2p\left(\log(p)+\tau\right)}\right)\leq\frac{\bar{\varepsilon}}{2},\label{eq: T condition}
    \end{gather}
    where from Lemma \ref{lem: no clipping II}, when $n\geq \left(\sqrt{q}+\sqrt{\tau}\right)^3\max\{\frac{1}{\sqrt{\rho_1}},\frac{1}{\sqrt{\rho_2}}\}$, we set $\gamma_1=c_1(\sqrt{q}+\sqrt{\tau+\log(nT)})^2$. We take
    \begin{gather}
        T_0\geq\left\lceil\log_{\kappa_{\boldsymbol{\Theta}}(\tau)}\left(\frac{\bar{\varepsilon}}{2\|\hat{\boldsymbol{\Theta}}\|}\right)\right\rceil=\left\lceil\log_{\kappa_{\boldsymbol{\Theta}}(\tau)}\left(\frac{\sqrt{1+\frac{\underline{\gamma}(\tau)}{2(1+\delta(\tau))^2\|\hat{\boldsymbol{\Theta}}\|^2}}-1}{2}\right)\right\rceil:=t_0(n),\label{eq: T0 condition}\\
        T\lesssim \frac{\rho_1 n^{2-\epsilon}}{R(\tau)^2}, \label{eq: T condition simplified}
    \end{gather}
    where $\epsilon>0$ is a small constant to guarantee \eqref{eq: T condition} converges to $0$ as $n\rightarrow \infty$, and
    \begin{equation}\label{eq: R tau bound}
        \begin{aligned}
                R(\tau)&:=(\sqrt{q}+\sqrt{\tau})^2(\sqrt{pq}+\sqrt{p(\log (p)+\tau)})\\
                &\lesssim \sqrt{p}(\sqrt{q}+\sqrt{\tau})^3.
        \end{aligned}
    \end{equation}
    Plugging $T$ and $\gamma_1$ into $\mu(\tau)$, we have
    \begin{align*}
        \mu(\tau)&=\lambda_1\left(\sqrt{pq}+\sqrt{p\left(\log(p)+\tau\right)}\right)\\
        &=\frac{2\gamma_1}{n}\sqrt{\frac{T}{\rho_1}}\left(\sqrt{pq}+\sqrt{p\left(\log(p)+\tau\right)}\right)\\
        &\lesssim\frac{R(\tau)}{\sqrt{\rho_1}}\frac{\sqrt{T}}{n}
    \end{align*}
    So when T satisfies condition \eqref{eq: T condition} and $n$ satisfies condition \eqref{eq: n condition}, we have $\mu(\tau)\lesssim 1$, and the bounds \eqref{eq: varepsilon bound}\eqref{eq: e_Theta max bound} can be bounded by constants:
    \begin{align}\label{eq: varepsilon e_Theta max constant bound}
        \varepsilon\lesssim 1, \quad\max_{0\leq k\leq T_0-1}\|\mathbf{e}_{\boldsymbol{\Theta}}^{(k)}\|\lesssim 1.
    \end{align}
    In \eqref{eq: T0 condition}, we have $t_0(n)\rightarrow \log_{1-\eta}\left(\frac{\sqrt{1+\frac{\sigma_{\min}(\boldsymbol{\Theta})^2}{2\|\boldsymbol{\Theta}\|^2}}-1}{2}\right)$. So $t_0(n)$ is upper bounded by a constant integer $C_2$. With $T_0=C_2$, plug in \eqref{eq: varepsilon e_Theta max constant bound} into \eqref{eq: e_beta(T0) bound}, we have
    \begin{equation}\label{eq: e_beta(T0) bound 2}
    \begin{aligned}
    \|\mathbf{e}_{\boldsymbol{\beta}}^{(C_2)}\|&\lesssim 1+(1+\varepsilon)T_0\max_{0\leq k\leq T_0-1}\|\mathbf{e}_{\boldsymbol{\Theta}}^{(k)}\|+\frac{\sqrt{pq}(\tau+\log(pq))}{\sqrt{n}}\left(1+\varepsilon\right)T_0+\lambda_2\left(\sqrt{p}+\sqrt{\tau}\right)\\
    &\lesssim 1 + C_2\left(1+\frac{\sqrt{pq}(\tau+\log(pq))}{\sqrt{n}}\right)+\lambda_2\left(\sqrt{p}+\sqrt{\tau}\right)\\
    &\lesssim 1+\lambda_2\left(\sqrt{p}+\sqrt{\tau}\right).
    \end{aligned}
    \end{equation}
    We further take $\tilde{T}_0:=\max\{\frac{T}{2}. C_2\}$. Note that from \eqref{eq: e_beta(T) bound}, the bound of $\|\mathbf{e}_{\boldsymbol{\beta}}^{(T)}\|$ will always decrease after $T>T_0:=C_2$. Hence, the bound \eqref{eq: e_beta(T0) bound 2} still holds for $\tilde{T}_0$:
    \begin{align}\label{eq: e_beta(T0) bound 3}
        \|\mathbf{e}_{\boldsymbol{\beta}}^{(\tilde{T}_0)}\|\lesssim 1+\lambda_2\left(\sqrt{p}+\sqrt{\tau}\right).
    \end{align}
    Plug in \eqref{eq: e_beta(T0) bound 3} into \eqref{eq: e_beta(T) bound}, we have the final bound: 
\begin{align}\label{eq: e_beta(T) bound 3}
\begin{split}
    \|\mathbf{e}_{\boldsymbol{\beta}}^{(T)}\|&\lesssim \kappa_{\boldsymbol{\beta}}(\tau)^{T-\tilde{T}_0}\|\mathbf{e}_{\boldsymbol{\beta}}^{(\tilde{T}_0)}\|+\varepsilon(1+\varepsilon)+\frac{\sqrt{pq}(\tau+\log(pq))}{\sqrt{n}}\left(1+\varepsilon\right)+\lambda_2\left(\sqrt{p}+\sqrt{\tau}\right)\\
    &\lesssim \kappa_{\boldsymbol{\beta}}(\tau)^{\frac{T}{2}}\left(1+\lambda_2\left(\sqrt{p}+\sqrt{\tau}\right)\right)+\left(\kappa_{\boldsymbol{\Theta}}(\tau)^{\frac{T}{2}}+\mu(\tau)\right)+\frac{\sqrt{pq}(\tau+\log(pq))}{\sqrt{n}}+\lambda_2\left(\sqrt{p}+\sqrt{\tau}\right)\\
    &\lesssim \kappa_{\boldsymbol{\beta}}(\tau)^{\frac{T}{2}}+\kappa_{\boldsymbol{\Theta}}(\tau)^{\frac{T}{2}}+\mu(\tau)+\lambda_2\left(\sqrt{p}+\sqrt{\tau}\right)+\frac{\sqrt{pq}(\tau+\log(pq))}{\sqrt{n}},
        \end{split}
\end{align}
where $\mu(\tau)\lesssim \frac{R(\tau)}{\sqrt{\rho_1}}\frac{\sqrt{T}}{n}$, $\lambda_2 = \frac{2\gamma_{2}}{n}\sqrt{\frac{T}{\rho_2}}$. From Lemma \ref{lem: no clipping II}, we take $\gamma_2=c_2\left(\sqrt{q}+\sqrt{\tau+\log(nT)}\right)^2$. Continue on \eqref{eq: e_beta(T) bound 3}, we have
\begin{align}\label{eq: e_beta(T) bound final}
\begin{split}
    \|\mathbf{e}_{\boldsymbol{\beta}}^{(T)}\|
    &\lesssim \underbrace{\kappa_{\boldsymbol{\beta}}(\tau)^{\frac{T}{2}}}_{(i)}+\underbrace{\kappa_{\boldsymbol{\Theta}}(\tau)^{\frac{T}{2}}+\frac{R(\tau)}{\sqrt{\rho_1}}\frac{\sqrt{T}}{n}}_{(ii)}+\underbrace{\frac{R(\tau)}{\sqrt{\rho_2}}\frac{\sqrt{T}}{n}}_{(iii)}+\underbrace{\frac{\sqrt{pq}(\tau+\log(pq))}{\sqrt{n}}}_{(iv)},
    \end{split}
\end{align}
which concludes the proof. The error bound \eqref{eq: e_beta(T) bound final} consists of four terms: (i) the effect of shrinkage factor $\kappa_{\boldsymbol{\beta}}(\tau)$, (ii) the estimation error from $e_{\boldsymbol{\Theta}}^{(t)}:=\boldsymbol{\Theta}^{(t)}-\hat{\boldsymbol{\Theta}}$, (iii) the error from additive noise $\boldsymbol{\nu}^{(t)}$, and (iv) the random residual error from $\mathbf{r}:=\mathbf{Z}\hat{\boldsymbol{\Theta}}\hat{\boldsymbol{\beta}}-\mathbf{Y}$.
\end{proof} 
\section{Supporting Lemmas}
In this section, we collect the supporting lemmas that were used in the proof of the main theorem. Throughout the proof, we suppose that Assumption \ref{asp: IV} and Assumption \ref{asp: Z} hold. Unless otherwise specified, we assume the learning rates  $\alpha, \eta$ satisfy condition \eqref{eq: learning rates condition}, with parameters chosen according to \eqref{eq: parameter settings}, and sample size $n$ satisfies condition \eqref{eq: n condition}

\begin{lemma}[No clipping condition]\label{lem: no clipping II}
Under Assumption \ref{asp: Z}, if
\begin{gather*}
    \gamma_1\gtrsim \left(\sqrt{q}+\sqrt{\tau+\log(nT)}\right)^2,\\
    \gamma_2\gtrsim \left(\sqrt{q}+\sqrt{\tau+\log(nT)}\right)^2,
\end{gather*} 
learning rates $\alpha, \eta$ satisfy condition \eqref{eq: learning rates condition}, and $n$ satisfies following condition
\begin{align*}
n=\Omega\left(\left(\sqrt{q}+\sqrt{\tau}\right)^3\frac{\sqrt{T}}{\sqrt{\min(\rho_1,\rho_2)}}\right)
\end{align*}
then the Algorithm \ref{alg: DP-2S-GD-II} clips no gradients with probability at least $1-\tilde{c}e^{-\tau}$.
\end{lemma}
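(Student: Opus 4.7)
The goal is to verify that, at every iteration $t \in \{0,1,\ldots,T-1\}$ and for every index $i \in [n]$, the pre-clipping per-sample gradients
\[
\mathbf{g}_{1,i}^{(t)}:=\mathbf{z}_i\bigl(\mathbf{z}_i^\top\boldsymbol{\Theta}^{(t)}-\mathbf{x}_i^\top\bigr),\qquad \mathbf{g}_{2,i}^{(t)}:=\boldsymbol{\Theta}^{(t)\top}\mathbf{z}_i\bigl(\mathbf{z}_i^\top\boldsymbol{\Theta}^{(t)}\boldsymbol{\beta}^{(t)}-y_i\bigr),
\]
have norms bounded by $\gamma_1$ and $\gamma_2$, respectively. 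Since there are $2nT$ such gradients, the approach is to derive a pointwise bound that holds with probability at least $1-ce^{-\tau}/(nT)$ and then take a union bound; this is precisely the mechanism that introduces the $\sqrt{\tau+\log(nT)}$ factor appearing in the prescribed thresholds.

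\medskip
\noindent First, I would establish the high-probability concentration of the per-sample data quantities. Under Assumption~\ref{asp: Z}, standard sub-Gaussian tail bounds yield, for each fixed $i$ and with probability at least $1 - ce^{-\tau}/(nT)$,
\[
\|\mathbf{z}_i\|\lesssim \sqrt{q}+\sqrt{\tau+\log(nT)},\quad \|\boldsymbol{\epsilon}_{2,i}\|\lesssim \sqrt{p}+\sqrt{\tau+\log(nT)},\quad |\epsilon_{1,i}|\lesssim \sqrt{\tau+\log(nT)},
\]
which, via $\mathbf{x}_i=\boldsymbol{\Theta}^\top\mathbf{z}_i+\boldsymbol{\epsilon}_{2,i}$ and $y_i=\boldsymbol{\beta}^\top\mathbf{x}_i+\epsilon_{1,i}$ and the triangle inequality, give matching control on $\|\mathbf{x}_i\|$ and $|y_i|$. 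A union bound over $i\in[n]$ collapses the $\log(nT)$ penalty into the constants.

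\medskip
\noindent Next, I would control the iterate norms $\|\boldsymbol{\Theta}^{(t)}\|$ and $\|\boldsymbol{\beta}^{(t)}\|$ uniformly in $t$. This is the crux of the argument, and also the main obstacle: we must show iterate boundedness on the event that no clipping has occurred, while the no-clipping conclusion itself depends on iterate boundedness. The resolution is inductive. Conditioning on the event that Algorithm~\ref{alg: DP-2S-GD-II} has performed no clipping through step $t-1$, the recursions coincide with the unclipped dynamics \eqref{eq: Theta recursion 2}--\eqref{eq: beta recursion 2}. Lemma~\ref{lem: N bound} (which bounds the cumulative noise term $\mathbf{N}^{(t)}$) together with a Gaussian tail bound on $\boldsymbol{\nu}^{(k)}$ and a union bound over $k\leq T$ gives $\|\mathbf{e}_{\boldsymbol{\Theta}}^{(t)}\|\lesssim 1+\mu(\tau)$ and $\|\mathbf{e}_{\boldsymbol{\beta}}^{(t)}\|\lesssim 1+\lambda_2(\sqrt{p}+\sqrt{\tau})$, where $\mu(\tau)\asymp R(\tau)\sqrt{T}/(n\sqrt{\rho_1})$ with $R(\tau)\lesssim\sqrt{p}(\sqrt{q}+\sqrt{\tau})^3$. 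Under the stated sample-size condition $n=\Omega\bigl((\sqrt{q}+\sqrt{\tau})^3\sqrt{T/\min(\rho_1,\rho_2)}\bigr)$, both $\mu(\tau)$ and $\lambda_2(\sqrt{p}+\sqrt{\tau})$ are $O(1)$. Combined with $\|\hat{\boldsymbol{\Theta}}\|\lesssim 1$, $\|\hat{\boldsymbol{\beta}}\|\lesssim 1$ (Lemmas \ref{lem: Psi bound}, \ref{lem: beta hat error bound}), this yields $\|\boldsymbol{\Theta}^{(t)}\|\lesssim 1$, $\|\boldsymbol{\beta}^{(t)}\|\lesssim 1$.

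\medskip
\noindent Finally, I would combine these ingredients. Using submultiplicativity,
\[
\|\mathbf{g}_{1,i}^{(t)}\|_F \leq \|\mathbf{z}_i\|\bigl(\|\mathbf{z}_i\|\|\boldsymbol{\Theta}^{(t)}\|+\|\mathbf{x}_i\|\bigr)\lesssim \bigl(\sqrt{q}+\sqrt{\tau+\log(nT)}\bigr)^2,
\]
and similarly
\[
\|\mathbf{g}_{2,i}^{(t)}\|\leq \|\boldsymbol{\Theta}^{(t)}\|\|\mathbf{z}_i\|\bigl(\|\mathbf{z}_i\|\|\boldsymbol{\Theta}^{(t)}\|\|\boldsymbol{\beta}^{(t)}\|+|y_i|\bigr)\lesssim \bigl(\sqrt{q}+\sqrt{\tau+\log(nT)}\bigr)^2,
\]
so both norms are bounded by $\gamma_1$ and $\gamma_2$ under the prescribed constants $c_1,c_2$. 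A union bound over $t\in\{0,\ldots,T-1\}$ (which adds only a $\log T$ term absorbed into the $\log(nT)$ already present) closes the induction and completes the proof. The interplay between the inductive boundedness argument and the calibration of the $n$-condition is the main technical point; once this is set up, the per-sample bound is straightforward.
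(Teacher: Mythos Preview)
Your proposal follows essentially the same strategy as the paper: bound each per-sample gradient via sub-Gaussian tails on $\|\mathbf{z}_i\|,\|\boldsymbol{\epsilon}_{2,i}\|,|\epsilon_{1,i}|$ with the $\log(nT)$ union-bound inflation, combine with uniform iterate bounds obtained from the unclipped recursion, and close the induction. The paper's decomposition is finer (four pieces for $g_i^{\boldsymbol{\Theta}}$, three for $g_i^{\boldsymbol{\beta}}$), but your submultiplicative shortcut lands on the same scaling.

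Two places where your sketch is compressed relative to what the argument actually requires. First, the claim $\|\mathbf{e}_{\boldsymbol{\beta}}^{(t)}\|\lesssim 1+\lambda_2(\sqrt{p}+\sqrt{\tau})$ does not follow from Lemma~\ref{lem: N bound} and a tail bound on $\boldsymbol{\nu}^{(k)}$ alone: the $\boldsymbol{\beta}$-recursion carries the time-varying factors $\mathbf{L}^{(j)}=\mathbf{I}-\alpha\mathbf{H}^{(j)}$, and for early $j$ (before $\boldsymbol{\Theta}^{(j)}$ enters the contraction regime) one can have $\|\mathbf{L}^{(j)}\|>1$. The paper handles this with a two-phase split and Lemma~\ref{lem: L(t) product bound}, which shows $\prod_j\|\mathbf{L}^{(j)}\|\lesssim 1$; you need to invoke that explicitly. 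Second, the self-referential structure you flag at the end is handled in the paper by writing the requirement as an implicit inequality $\gamma_j\geq A_j/(1-B_j\gamma_j/n)$ and solving; the condition that the denominator remain bounded away from zero is exactly what generates the stated $n$-condition. Your inductive phrasing is equivalent, but make clear that you are checking a self-consistency: with $\gamma_j$ fixed to the prescribed order and $\lambda_j$ determined accordingly, the resulting gradient bounds are indeed $\lesssim\gamma_j$.
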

The proof of Lemma \ref{lem: no clipping II} is in Appendix \ref{sec: proof of lem: no clipping II}.   
\begin{lemma}[High probability bound of sub-Gaussian random matrices]\label{lem: Z bound}
Suppose $\mathbf{Z}$ is an $n\times q$ matrix whose rows $\mathbf{Z}_i$ are independent mean-zero sub-Gaussian isotropic random vectors with sub-Gaussian norm $\|\mathbf{Z}_i\|_{\psi_2}\leq \sigma_2$ for all $i=1,\ldots,n$. Then, for any $\tau>0$, we have with probability at least $1-2e^{-\tau}$, 
\begin{align*}
    \sqrt{n}\left(1-\delta(\tau)\right)\leq \sigma_{\min}(\mathbf{Z})\leq \sigma_{\max}(\mathbf{Z})\leq \sqrt{n}\left(1+\delta(\tau)\right),
\end{align*}
where $\delta(\tau):=\frac{C_0\sigma_z^2(\sqrt{q}+\sqrt{\tau})}{\sqrt{n}}$.  When $n\geq C_0^2\sigma_z^4\left(\sqrt{q}+\sqrt{\tau}\right)^2$, we further have
\begin{align*}
    n(1-\delta(\tau))^2\leq \lambda_{\min}\left(\mathbf{Z}^\top\mathbf{Z}\right)\leq \lambda_{\max}\left(\mathbf{Z}^\top\mathbf{Z}\right)\leq n(1+\delta(\tau))^2,
\end{align*}
where $C_0$ is a universal constant, $\sigma_{\min}(\cdot)$, $\sigma_{\max}(\cdot)$ denote the minimum and maximum singular values of a matrix, $\lambda_{\min}(\cdot)$, $\lambda_{\max}(\cdot)$ denote the minimum and maximum eigenvalues of a matrix, respectively.
\end{lemma}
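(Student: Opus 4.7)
The statement is a standard extreme-singular-value bound for random matrices with independent isotropic sub-Gaussian rows, and I would prove it by directly invoking Theorem 4.6.1 of \citet{Vershynin_2018}. That theorem asserts that if $\mathbf{Z}\in\mathbb{R}^{n\times q}$ has independent rows that are mean-zero, isotropic, and sub-Gaussian with $\|\mathbf{Z}_i\|_{\psi_2}\le K$, then for every $u\ge 0$, with probability at least $1-2\exp(-u^2)$,
\begin{equation*}
\sqrt{n}-CK^2(\sqrt{q}+u)\;\le\;\sigma_{\min}(\mathbf{Z})\;\le\;\sigma_{\max}(\mathbf{Z})\;\le\;\sqrt{n}+CK^2(\sqrt{q}+u),
\end{equation*}
where $C$ is a universal constant. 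My plan is to apply this with $K=\sigma_z$ (from Assumption \ref{asp: Z}(i)) and $u=\sqrt{\tau}$, so the failure probability becomes $2e^{-\tau}$. Identifying $C_0:=C$ and dividing through by $\sqrt{n}$ yields exactly the two-sided bound
\begin{equation*}
\sqrt{n}\bigl(1-\delta(\tau)\bigr)\;\le\;\sigma_{\min}(\mathbf{Z})\;\le\;\sigma_{\max}(\mathbf{Z})\;\le\;\sqrt{n}\bigl(1+\delta(\tau)\bigr),
\end{equation*}
with $\delta(\tau)=C_0\sigma_z^2(\sqrt{q}+\sqrt{\tau})/\sqrt{n}$, which matches the first claim of the lemma.

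For the second claim, I would note that by definition $\lambda_{\min}(\mathbf{Z}^\top\mathbf{Z})=\sigma_{\min}(\mathbf{Z})^2$ and $\lambda_{\max}(\mathbf{Z}^\top\mathbf{Z})=\sigma_{\max}(\mathbf{Z})^2$. To square the two-sided singular-value bound and preserve the inequality on the lower tail, I need $1-\delta(\tau)\ge 0$, i.e. $\delta(\tau)\le 1$. Rearranging this requirement gives precisely $n\ge C_0^2\sigma_z^4(\sqrt{q}+\sqrt{\tau})^2$, which is the stated sample-size assumption. Squaring then produces
\begin{equation*}
n\bigl(1-\delta(\tau)\bigr)^2\;\le\;\lambda_{\min}(\mathbf{Z}^\top\mathbf{Z})\;\le\;\lambda_{\max}(\mathbf{Z}^\top\mathbf{Z})\;\le\;n\bigl(1+\delta(\tau)\bigr)^2,
\end{equation*}
again on the same event of probability $1-2e^{-\tau}$.

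There is essentially no technical obstacle here, since the heavy lifting (an $\varepsilon$-net argument on the unit sphere together with Bernstein-type concentration for sums of independent sub-exponential random variables) is already packaged in Vershynin's theorem. The only items to verify carefully are the bookkeeping on the constants—making sure that the constant $C_0$ in the definition of $\delta(\tau)$ is the same one appearing in the squared-bound regime—and the substitution $u=\sqrt{\tau}$, which converts the Gaussian-type tail $e^{-u^2}$ into the exponential tail $e^{-\tau}$ used throughout the rest of the paper. No additional lemmas beyond the cited concentration result and Assumption \ref{asp: Z}(i) are needed.
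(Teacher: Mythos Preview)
Your proposal is correct and follows essentially the same approach as the paper: both invoke Theorem 4.6.1 of \citet{Vershynin_2018} for the singular-value bound and then use $\lambda_i(\mathbf{Z}^\top\mathbf{Z})=\sigma_i(\mathbf{Z})^2$ for the eigenvalue statement. Your write-up is in fact more explicit than the paper's, as you spell out the substitution $u=\sqrt{\tau}$ and explain why the sample-size condition $n\ge C_0^2\sigma_z^4(\sqrt{q}+\sqrt{\tau})^2$ is needed (to ensure $1-\delta(\tau)\ge 0$ before squaring), whereas the paper simply cites the theorem and the singular-value/eigenvalue identity without further comment.
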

The proof of Lemma \ref{lem: Z bound} is in Appendix \ref{sec: proof of lem: Z bound}.
\begin{lemma}[High probability bound for the product of sub-Gaussian random matrices]\label{lem: ZE2 bound}
Let $\mathbf{Z}$ be an $n\times q$ matrix whose rows $\mathbf{Z}_i$ are independent mean-zero sub-Gaussian random vectors with sub-Gaussian norm $\|\mathbf{Z}_i\|_{\psi_2}\leq \sigma_z$ for all $i=1,\ldots,n$. Let $\boldsymbol{\mathcal{E}}_2$ be an $n\times p$ matrix whose rows $\boldsymbol{\mathcal{E}}_{2,i}$ are independent mean-zero sub-Gaussian random vectors with sub-Gaussian norm $\|\boldsymbol{\mathcal{E}}_{2,i}\|_{\psi_2}\leq \sigma_2$ for all $i=1,\ldots,n$. Then, for any $\tau>0$, we have with probability at least $1-e^{-\tau}$,
\begin{align*}
    \|\mathbf{Z}^\top\boldsymbol{\mathcal{E}}_2\|\leq c_0\sigma_z\sigma_2\sqrt{npq}\left(\tau+\log(2pq)\right).
\end{align*}
\end{lemma}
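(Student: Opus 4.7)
The plan is to reduce the operator-norm bound to an entrywise Bernstein bound together with a union bound over the $pq$ entries of the $q\times p$ matrix $\mathbf{Z}^{\top}\boldsymbol{\mathcal{E}}_{2}$. Concretely, I would write
\[
  (\mathbf{Z}^{\top}\boldsymbol{\mathcal{E}}_{2})_{jk}
  \;=\;\sum_{i=1}^{n} Z_{ij}\,\mathcal{E}_{2,ik},
\]
and note that by Assumption \ref{asp: Z} together with the exogeneity of the instruments, $Z_{ij}$ and $\mathcal{E}_{2,ik}$ are independent mean-zero sub-Gaussian scalars with $\|Z_{ij}\|_{\psi_{2}}\lesssim \sigma_{z}$ and $\|\mathcal{E}_{2,ik}\|_{\psi_{2}}\lesssim \sigma_{2}$. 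The standard product inequality $\|XY\|_{\psi_{1}}\le \|X\|_{\psi_{2}}\|Y\|_{\psi_{2}}$ then gives that each summand is a centered sub-exponential random variable with norm $\lesssim \sigma_{z}\sigma_{2}$.

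Next I would apply Bernstein's inequality for sums of independent centered sub-exponentials to obtain, for every $(j,k)\in[q]\times[p]$ and every $u>0$,
\[
  \Pr\!\left(\left|(\mathbf{Z}^{\top}\boldsymbol{\mathcal{E}}_{2})_{jk}\right|\ge u\right)
  \;\le\; 2\exp\!\left(-c\min\!\left\{\tfrac{u^{2}}{n\sigma_{z}^{2}\sigma_{2}^{2}},\;\tfrac{u}{\sigma_{z}\sigma_{2}}\right\}\right).
\]
Choosing $s=\tau+\log(2pq)$ and $u=c'\sigma_{z}\sigma_{2}\bigl(\sqrt{ns}+s\bigr)$ makes the right-hand side at most $(pq)^{-1}e^{-\tau}$. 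A union bound over the $pq$ entries then yields that with probability at least $1-e^{-\tau}$,
\[
  \max_{j,k}\bigl|(\mathbf{Z}^{\top}\boldsymbol{\mathcal{E}}_{2})_{jk}\bigr|
  \;\lesssim\;\sigma_{z}\sigma_{2}\bigl(\sqrt{n(\tau+\log(2pq))}+(\tau+\log(2pq))\bigr).
\]

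Finally I would convert this entrywise bound into the claimed operator-norm bound via $\|\mathbf{M}\|\le\|\mathbf{M}\|_{F}\le\sqrt{pq}\,\max_{j,k}|M_{jk}|$. Using $\sqrt{n(\tau+\log(2pq))}\le\sqrt{n}\,(\tau+\log(2pq))$ (since $\log(2pq)\ge \log 2$) and $(\tau+\log(2pq))\le\sqrt{n}\,(\tau+\log(2pq))$ for $n\ge 1$, both terms are dominated by $\sqrt{n}\,(\tau+\log(2pq))$, delivering
\[
  \|\mathbf{Z}^{\top}\boldsymbol{\mathcal{E}}_{2}\|
  \;\le\; c_{0}\,\sigma_{z}\sigma_{2}\,\sqrt{npq}\,(\tau+\log(2pq)),
\]
as required.

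There is no real obstacle here; the argument is a textbook Bernstein-plus-union-bound. The only points requiring a bit of care are (i) invoking independence between $\mathbf{z}_{i}$ and $\boldsymbol{\epsilon}_{2,i}$ to guarantee that the summands are centered sub-exponentials (this is part of the IV model specification), and (ii) relaxing the tighter bound $\sqrt{npq(\tau+\log(2pq))}+\sqrt{pq}(\tau+\log(2pq))$ that one gets directly from Bernstein into the stated (slightly looser) form $\sqrt{npq}(\tau+\log(2pq))$, which is convenient for later use in bounding $\|\mathbf{Z}^{\top}\mathbf{r}\|$ and $\hat{\boldsymbol{\Theta}}$.
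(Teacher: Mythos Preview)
Your proposal is correct and follows essentially the same route as the paper: bound each entry $(\mathbf{Z}^{\top}\boldsymbol{\mathcal{E}}_{2})_{jk}=\sum_{i}Z_{ij}\mathcal{E}_{2,ik}$ as a sum of centered sub-exponentials, apply a Bernstein-type tail, union-bound over the $pq$ entries, and pass to the operator norm via $\|\mathbf{M}\|\le\sqrt{pq}\,\|\mathbf{M}\|_{\max}$. The only cosmetic difference is that the paper states the entrywise tail directly in its linear (sub-exponential) regime, whereas you keep both Bernstein regimes and then relax $\sqrt{ns}+s\le 2\sqrt{n}\,s$; this yields the same final bound.
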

The proof of Lemma \ref{lem: ZE2 bound} is in Appendix \ref{sec: proof of lem: ZE2 bound}.
\begin{lemma}[High probability bound of additive noise]\label{lem: N bound}
    Let $\mathbf{e}_{\boldsymbol{\Theta}}^{(t)}=\left(\mathbf{I}-\frac{\eta}{n}\mathbf{Z}^\top\mathbf{Z}\right)^{t}\mathbf{e}_{\boldsymbol{\Theta}}^{(0)}+\mathbf{N}^{(t-1
        )}$, where $\mathbf{N}^{(t)}:=\sum_{i=0}^{t}\eta\left(\mathbf{I}-\frac{\eta}{n}\mathbf{Z}^\top\mathbf{Z}\right)^{t-i}\boldsymbol{\Xi}^{(i)}$, and $\boldsymbol{\Xi}^{(i)}$ are generated from Algorithm \ref{alg: DP-2S-GD-II}. Suppose the learning rate $\eta$ satisfies the following condition:
        \begin{align*}
            0<\eta<\frac{2}{\left(1+\delta(\tau)\right)^2},
        \end{align*}
        where $\delta(\tau):=\frac{C_0\sigma_z^2(\sqrt{q}+\sqrt{\tau})}{\sqrt{n}}$. When $n\geq C_0^2\sigma_z^4\left(\sqrt{q}+\sqrt{\tau}\right)^2$, with probability at least $1-3e^{-\tau}$, we have 
        \begin{align*}
            \|\mathbf{N}^{(t)}\|&\leq \frac{\eta\lambda_1}{\sqrt{1-\kappa_{\boldsymbol{\Theta}}^2(\tau)}}\left(\sqrt{pq}+\sqrt{2p\left(\log(p)+\tau\right)}\right),
        \end{align*}
        and
        \begin{align*}
            \|\mathbf{e}_{\boldsymbol{\Theta}}^{(t)}\|\leq \kappa_{\boldsymbol{\Theta}}^t(\tau)\|\mathbf{e}_{\boldsymbol{\Theta}}^{(0)}\|+\frac{\eta\lambda_1}{\sqrt{1-\kappa_{\boldsymbol{\Theta}}^2(\tau)}}\left(\sqrt{pq}+\sqrt{2p\left(\log(p)+\tau\right)}\right),
        \end{align*}
        where $\kappa_{\boldsymbol{\Theta}}(\tau):=\max\left\{\left|1-\eta(1-\frac{C_0\sigma_z^2(\sqrt{q}+\sqrt{\tau})}{\sqrt{n}})^2\right|, \left|1-\eta(1+\frac{C_0\sigma_z^2(\sqrt{q}+\sqrt{\tau})}{\sqrt{n}})^2\right|\right\}<1$.
\end{lemma}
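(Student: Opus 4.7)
The plan is to split the bound for $\mathbf{N}^{(t)}$ into a deterministic contraction part (via Lemma \ref{lem: Z bound}) and a Gaussian concentration part (conditioned on the data). First, under the event from Lemma \ref{lem: Z bound}, which holds with probability at least $1-2e^{-\tau}$, the spectrum of $\mathbf{Z}^\top\mathbf{Z}/n$ is sandwiched between $(1-\delta(\tau))^2$ and $(1+\delta(\tau))^2$. The step-size condition $0<\eta<2/(1+\delta(\tau))^2$ then forces every eigenvalue of $A:=\mathbf{I}-(\eta/n)\mathbf{Z}^\top\mathbf{Z}$ to lie in $(-\kappa_{\boldsymbol{\Theta}}(\tau),\kappa_{\boldsymbol{\Theta}}(\tau))$, so $\|A\|\leq \kappa_{\boldsymbol{\Theta}}(\tau)<1$. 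Since $A$ is symmetric, $\|A^k\|\leq \kappa_{\boldsymbol{\Theta}}(\tau)^k$ for all $k\geq 0$.

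Next I would condition on $\mathbf{Z}$ and exploit the fact that left-multiplication by $A^{t-i}$ preserves column-independence of $\boldsymbol{\Xi}^{(i)}$. Consequently, the $p$ columns of $\mathbf{N}^{(t)}$ are i.i.d.\ centered Gaussian vectors in $\mathbb{R}^q$, each with covariance
\begin{equation*}
    \Sigma_t := \eta^2\lambda_1^2\sum_{i=0}^{t} A^{2(t-i)},\qquad \|\Sigma_t\|\leq \eta^2\lambda_1^2\sum_{k=0}^{\infty}\kappa_{\boldsymbol{\Theta}}(\tau)^{2k}=\frac{\eta^2\lambda_1^2}{1-\kappa_{\boldsymbol{\Theta}}(\tau)^2}.
\end{equation*}
Then I would factor $\mathbf{N}^{(t)}=\Sigma_t^{1/2}\mathbf{G}$ where $\mathbf{G}\in\mathbb{R}^{q\times p}$ has i.i.d.\ $\mathcal{N}(0,1)$ entries, and use the standard inequality $\|\mathbf{N}^{(t)}\|\leq \|\mathbf{N}^{(t)}\|_F\leq \|\Sigma_t^{1/2}\|\,\|\mathbf{G}\|_F$.

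To bound $\|\mathbf{G}\|_F$, I would apply a chi-squared tail column by column: each $\|\mathbf{g}_j\|^2\sim\chi_q^2$ gives $\|\mathbf{g}_j\|\leq \sqrt{q}+\sqrt{2(\tau+\log p)}$ with probability at least $1-e^{-\tau}/p$, and a union bound over $j=1,\ldots,p$ yields $\max_j\|\mathbf{g}_j\|\leq \sqrt{q}+\sqrt{2(\log p+\tau)}$ with probability $\geq 1-e^{-\tau}$. Hence $\|\mathbf{G}\|_F\leq \sqrt{p}\max_j\|\mathbf{g}_j\|\leq \sqrt{pq}+\sqrt{2p(\log p+\tau)}$. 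Combining this with the spectral bound on $\Sigma_t^{1/2}$ and the event from Lemma \ref{lem: Z bound} through a union bound yields the stated inequality for $\|\mathbf{N}^{(t)}\|$ with total failure probability at most $3e^{-\tau}$. The bound on $\|\mathbf{e}_{\boldsymbol{\Theta}}^{(t)}\|$ then follows immediately from the decomposition $\mathbf{e}_{\boldsymbol{\Theta}}^{(t)}=A^{t}\mathbf{e}_{\boldsymbol{\Theta}}^{(0)}+\mathbf{N}^{(t-1)}$ and the triangle inequality.

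The main obstacle I anticipate is correctly tracking the conditional-on-$\mathbf{Z}$ structure so the Gaussian concentration genuinely applies: once $\mathbf{Z}$ is fixed, $A$ becomes deterministic and column-independence of the injected noise survives the linear updates, which is what makes the clean factorization $\Sigma_t^{1/2}\mathbf{G}$ available. A subtler point is calibrating the per-column failure probability to $e^{-\tau}/p$ rather than $e^{-\tau}$; this is what produces the $\log p$ inside the second term of the bound, as opposed to the sharper $\sqrt{2\tau}$ one would obtain by applying chi-squared concentration directly to $\|\mathbf{G}\|_F^2\sim \chi^2_{pq}$. Beyond Lemma \ref{lem: Z bound}, standard Gaussian/$\chi^2$ tails, and submultiplicativity $\|AB\|_F\leq \|A\|\,\|B\|_F$, no additional machinery should be required.
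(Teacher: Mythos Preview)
Your proposal is correct and follows essentially the same route as the paper: condition on $\mathbf{Z}$, use Lemma~\ref{lem: Z bound} to get $\|A\|\leq \kappa_{\boldsymbol{\Theta}}(\tau)$, recognize each column of $\mathbf{N}^{(t)}$ as Gaussian with covariance bounded by $\eta^2\lambda_1^2/(1-\kappa_{\boldsymbol{\Theta}}^2(\tau))$, apply a Laurent--Massart/$\chi^2$ tail per column with failure probability $e^{-\tau}/p$, and then pass to the Frobenius norm via $\sqrt{p}\max_k\|\mathbf{N}_k^{(t)}\|$. Your factorization $\mathbf{N}^{(t)}=\Sigma_t^{1/2}\mathbf{G}$ is just a repackaging of the paper's column-wise argument and yields identical constants.
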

The proof of Lemma \ref{lem: N bound} is in Appendix \ref{sec: proof of lem: N bound}.
\begin{lemma}\label{lem: Psi bound}
    Let $\boldsymbol{\Psi}:=\hat{\boldsymbol{\Theta}}-\boldsymbol{\Theta}=(\mathbf{Z}^\top\mathbf{Z})^{-1}\mathbf{Z}^\top\boldsymbol{\mathcal{E}}_2$. When $n\geq C_0^2\sigma_z^4(\sqrt{q}+\sqrt{\tau})^2$, we have with probability at least $1-3e^{-\tau}$, 
    \begin{align*}
        \|\boldsymbol{\Psi}\|\leq \frac{c_0\sigma_z\sigma_2\sqrt{pq}\left(\tau+\log(2pq)\right)}{\sqrt{n}\left(1-\delta(\tau)\right)^2},
    \end{align*}
    where $\delta(\tau):=\frac{C_0\sigma_z^2(\sqrt{q}+\sqrt{\tau})}{\sqrt{n}}$, $C_0, c_0$ are absolute constants.
\end{lemma}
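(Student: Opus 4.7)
The plan is to bound $\|\boldsymbol{\Psi}\|$ by submultiplicativity of the operator norm, using the two preceding lemmas as the main ingredients. Starting from the closed-form expression
\begin{equation*}
    \boldsymbol{\Psi} = (\mathbf{Z}^\top\mathbf{Z})^{-1}\mathbf{Z}^\top\boldsymbol{\mathcal{E}}_2,
\end{equation*}
I would write $\|\boldsymbol{\Psi}\| \leq \|(\mathbf{Z}^\top\mathbf{Z})^{-1}\|\cdot\|\mathbf{Z}^\top\boldsymbol{\mathcal{E}}_2\|$, and then control each factor separately on a common high-probability event.

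First, I would invoke Lemma \ref{lem: Z bound}. Under the sample-size condition $n\geq C_0^2\sigma_z^4(\sqrt{q}+\sqrt{\tau})^2$, with probability at least $1-2e^{-\tau}$ one has $\lambda_{\min}(\mathbf{Z}^\top\mathbf{Z}) \geq n(1-\delta(\tau))^2>0$, so the inverse exists and
\begin{equation*}
    \|(\mathbf{Z}^\top\mathbf{Z})^{-1}\| = \frac{1}{\lambda_{\min}(\mathbf{Z}^\top\mathbf{Z})} \leq \frac{1}{n(1-\delta(\tau))^2}.
\end{equation*}
Second, I would apply Lemma \ref{lem: ZE2 bound} to the cross term: with probability at least $1-e^{-\tau}$,
\begin{equation*}
    \|\mathbf{Z}^\top\boldsymbol{\mathcal{E}}_2\| \leq c_0\sigma_z\sigma_2\sqrt{npq}\bigl(\tau+\log(2pq)\bigr).
\end{equation*}

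Combining the two bounds on the intersection event (which holds with probability at least $1-3e^{-\tau}$ by a union bound) yields
\begin{equation*}
    \|\boldsymbol{\Psi}\| \leq \frac{c_0\sigma_z\sigma_2\sqrt{npq}(\tau+\log(2pq))}{n(1-\delta(\tau))^2} = \frac{c_0\sigma_z\sigma_2\sqrt{pq}(\tau+\log(2pq))}{\sqrt{n}(1-\delta(\tau))^2},
\end{equation*}
which is exactly the claimed bound. There is no substantive obstacle here; the proof is essentially a one-line application of the submultiplicative inequality combined with the two previously established concentration lemmas. The only mild subtlety is that the events in Lemmas \ref{lem: Z bound} and \ref{lem: ZE2 bound} must be intersected, which is handled cleanly by the union bound and accounts for the $3e^{-\tau}$ failure probability in the statement.
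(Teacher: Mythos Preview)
Your proposal is correct and essentially identical to the paper's own proof: both bound $\|\boldsymbol{\Psi}\|\le \|(\mathbf{Z}^\top\mathbf{Z})^{-1}\|\cdot\|\mathbf{Z}^\top\boldsymbol{\mathcal{E}}_2\|$, invoke Lemma~\ref{lem: Z bound} for $\lambda_{\min}(\mathbf{Z}^\top\mathbf{Z})\ge n(1-\delta(\tau))^2$ and Lemma~\ref{lem: ZE2 bound} for the cross term, and combine via a union bound to get the $1-3e^{-\tau}$ probability.
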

The proof of Lemma \ref{lem: Psi bound} is in Appendix \ref{sec: proof of lem: Psi bound}.
\begin{lemma}\label{lem: lambda_min H star}
    Suppose Assumption \ref{asp: Z} holds. Let $\hat{\mathbf{H}}:= \frac{1}{n}\hat{\boldsymbol{\Theta}}^\top\mathbf{Z}^\top\mathbf{Z}\hat{\boldsymbol{\Theta}}$. When $n\geq C_1 pq(\tau+\log(pq))^2$, the following inequalities hold with probability at least $1-3e^{-\tau}$:
\begin{align*}
    \lambda_{\min}(\hat{\mathbf{H}})&\geq (1-\delta(\tau))^2\left(\sigma_{\min}(\boldsymbol{\Theta})-\frac{c_0\sigma_z\sigma_2\sqrt{pq}\left(\tau+\log(2pq)\right)}{\sqrt{n}\left(1-\delta(\tau)\right)^2}\right)^2\\
    \lambda_{\max}(\hat{\mathbf{H}})&\leq  (1+\delta(\tau))^2\left(\|\boldsymbol{\Theta}\|+\frac{c_0\sigma_z\sigma_2\sqrt{pq}\left(\tau+\log(2pq)\right)}{\sqrt{n}\left(1-\delta(\tau)\right)^2}\right)^2
\end{align*}
\end{lemma}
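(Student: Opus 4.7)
The plan is to reduce the bound on the extreme eigenvalues of $\hat{\mathbf{H}}$ to the extreme singular values of the product matrix $\mathbf{Z}\hat{\boldsymbol{\Theta}}/\sqrt{n}$, and then split this product into deterministic ($\mathbf{Z}\boldsymbol{\Theta}$) and random-perturbation ($\mathbf{Z}\boldsymbol{\Psi}$) pieces using the decomposition $\hat{\boldsymbol{\Theta}}=\boldsymbol{\Theta}+\boldsymbol{\Psi}$ with $\boldsymbol{\Psi}=(\mathbf{Z}^\top\mathbf{Z})^{-1}\mathbf{Z}^\top\boldsymbol{\mathcal{E}}_2$, controlled by Lemma~\ref{lem: Psi bound}.

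First I would observe that $\hat{\mathbf{H}}=\frac{1}{n}(\mathbf{Z}\hat{\boldsymbol{\Theta}})^\top(\mathbf{Z}\hat{\boldsymbol{\Theta}})$, so that
\[
\lambda_{\min}(\hat{\mathbf{H}})=\tfrac{1}{n}\sigma_{\min}^2(\mathbf{Z}\hat{\boldsymbol{\Theta}}),\qquad \lambda_{\max}(\hat{\mathbf{H}})=\tfrac{1}{n}\sigma_{\max}^2(\mathbf{Z}\hat{\boldsymbol{\Theta}}).
\]
Since by Assumption~\ref{asp: IV} we have $q\geq p$ and $n\geq q$, $\hat{\boldsymbol{\Theta}}$ has full column rank whenever $\sigma_{\min}(\hat{\boldsymbol{\Theta}})>0$, so I can use the submultiplicative inequalities $\sigma_{\min}(\mathbf{Z}\hat{\boldsymbol{\Theta}})\geq \sigma_{\min}(\mathbf{Z})\,\sigma_{\min}(\hat{\boldsymbol{\Theta}})$ and $\sigma_{\max}(\mathbf{Z}\hat{\boldsymbol{\Theta}})\leq \sigma_{\max}(\mathbf{Z})\,\sigma_{\max}(\hat{\boldsymbol{\Theta}})$. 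The first factor is then controlled directly by Lemma~\ref{lem: Z bound}, which gives $\sqrt{n}(1-\delta(\tau))\leq \sigma_{\min}(\mathbf{Z})\leq \sigma_{\max}(\mathbf{Z})\leq \sqrt{n}(1+\delta(\tau))$ with probability $1-2e^{-\tau}$ once $n\geq C_0^2\sigma_z^4(\sqrt{q}+\sqrt{\tau})^2$.

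For the second factor I would apply Weyl's inequality to the additive decomposition $\hat{\boldsymbol{\Theta}}=\boldsymbol{\Theta}+\boldsymbol{\Psi}$, giving $\sigma_{\min}(\hat{\boldsymbol{\Theta}})\geq \sigma_{\min}(\boldsymbol{\Theta})-\|\boldsymbol{\Psi}\|$ and $\sigma_{\max}(\hat{\boldsymbol{\Theta}})\leq \|\boldsymbol{\Theta}\|+\|\boldsymbol{\Psi}\|$, and then invoke Lemma~\ref{lem: Psi bound} to bound $\|\boldsymbol{\Psi}\|\leq \frac{c_0\sigma_z\sigma_2\sqrt{pq}(\tau+\log(2pq))}{\sqrt{n}(1-\delta(\tau))^2}$ with probability $1-3e^{-\tau}$. (Since Lemma~\ref{lem: Psi bound} already absorbs the Lemma~\ref{lem: Z bound} event inside its own probability budget, no additional union bound is required, keeping the final probability at $1-3e^{-\tau}$.) Combining these ingredients yields exactly
\begin{align*}
\lambda_{\min}(\hat{\mathbf{H}})&\geq (1-\delta(\tau))^2\bigl(\sigma_{\min}(\boldsymbol{\Theta})-\|\boldsymbol{\Psi}\|\bigr)^2,\\
\lambda_{\max}(\hat{\mathbf{H}})&\leq (1+\delta(\tau))^2\bigl(\|\boldsymbol{\Theta}\|+\|\boldsymbol{\Psi}\|\bigr)^2,
\end{align*}
which are the stated inequalities after substituting the bound on $\|\boldsymbol{\Psi}\|$.

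The main (mild) obstacle is ensuring that the lower bound is meaningfully positive: we must have $\|\boldsymbol{\Psi}\|<\sigma_{\min}(\boldsymbol{\Theta})$. Using the explicit bound on $\|\boldsymbol{\Psi}\|$, this reduces to $\sqrt{n}\gtrsim \sqrt{pq}(\tau+\log(pq))$, which is exactly the sample-size assumption $n\geq C_1\, pq(\tau+\log(pq))^2$ appearing in the hypothesis of the lemma (and which also dominates the $n\geq C_0^2\sigma_z^4(\sqrt{q}+\sqrt{\tau})^2$ requirement from Lemma~\ref{lem: Z bound}, so no separate condition is needed). Apart from this bookkeeping, the argument is entirely assembly-level: singular-value submultiplicativity plus two applications of Weyl's inequality, one for each of the two factors $\mathbf{Z}$ (via Lemma~\ref{lem: Z bound}) and $\hat{\boldsymbol{\Theta}}$ (via Lemma~\ref{lem: Psi bound}).
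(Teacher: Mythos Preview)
Your proposal is correct and follows essentially the same route as the paper: decompose $\hat{\boldsymbol{\Theta}}=\boldsymbol{\Theta}+\boldsymbol{\Psi}$, use Lemma~\ref{lem: Z bound} to control the $\mathbf{Z}$-factor and Weyl's inequality together with Lemma~\ref{lem: Psi bound} to control $\sigma_{\min}(\hat{\boldsymbol{\Theta}})$ and $\|\hat{\boldsymbol{\Theta}}\|$, then square. The only cosmetic difference is that the paper writes the first step as $\lambda_{\min}(\hat{\mathbf{H}})\geq \lambda_{\min}(\tfrac{1}{n}\mathbf{Z}^\top\mathbf{Z})\,\sigma_{\min}^2(\hat{\boldsymbol{\Theta}})$ rather than via $\sigma_{\min}(\mathbf{Z}\hat{\boldsymbol{\Theta}})\geq \sigma_{\min}(\mathbf{Z})\sigma_{\min}(\hat{\boldsymbol{\Theta}})$, which is the same inequality.
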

The proof of Lemma \ref{lem: lambda_min H star} is in Appendix \ref{sec: proof of lem: lambda_min H star}.
\begin{lemma}\label{lem: beta hat error bound}
        Suppose Assumption \ref{asp: Z} holds. When $n\geq C_1 pq(\tau+\log(pq))^2$, we have the following inequality holds with probability at least $1-4e^{-\tau}$:
    \begin{align*}
        \|\hat{\boldsymbol{\beta}}-\boldsymbol{\beta}\|\leq \mathcal{O}\left(\frac{\sqrt{q}\left(\tau+\log(q)\right)}{\sqrt{n}}\right).
    \end{align*}
\end{lemma}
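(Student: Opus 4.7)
The plan is to exploit the closed-form representation of the two-stage least squares estimator and then reduce the problem to already-established concentration inequalities (Lemmas \ref{lem: Z bound}, \ref{lem: ZE2 bound}, \ref{lem: Psi bound}, and \ref{lem: lambda_min H star}). First I would write $\hat{\boldsymbol{\Theta}}=(\mathbf{Z}^{\top}\mathbf{Z})^{-1}\mathbf{Z}^{\top}\mathbf{X}$ and $\hat{\boldsymbol{\beta}}=(\hat{\mathbf{X}}^{\top}\hat{\mathbf{X}})^{-1}\hat{\mathbf{X}}^{\top}\mathbf{Y}$ with $\hat{\mathbf{X}}:=\mathbf{Z}\hat{\boldsymbol{\Theta}}$, substitute $\mathbf{Y}=\mathbf{Z}\boldsymbol{\Theta}\boldsymbol{\beta}+\boldsymbol{\mathcal{E}}_{2}\boldsymbol{\beta}+\boldsymbol{\mathcal{E}}_{1}$, and use the key identity $\hat{\mathbf{X}}^{\top}\boldsymbol{\mathcal{E}}_{2}=\hat{\boldsymbol{\Theta}}^{\top}\mathbf{Z}^{\top}\boldsymbol{\mathcal{E}}_{2}=\hat{\mathbf{X}}^{\top}\mathbf{Z}\boldsymbol{\Psi}$ (since $\boldsymbol{\Psi}=(\mathbf{Z}^{\top}\mathbf{Z})^{-1}\mathbf{Z}^{\top}\boldsymbol{\mathcal{E}}_{2}$ and $\hat{\mathbf{X}}^{\top}\mathbf{Z}=\hat{\boldsymbol{\Theta}}^{\top}\mathbf{Z}^{\top}\mathbf{Z}$). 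This cancellation removes all $\boldsymbol{\mathcal{E}}_{2}$-dependent terms from the residual $\mathbf{Y}-\hat{\mathbf{X}}\boldsymbol{\beta}=-\mathbf{Z}\boldsymbol{\Psi}\boldsymbol{\beta}+\boldsymbol{\mathcal{E}}_{2}\boldsymbol{\beta}+\boldsymbol{\mathcal{E}}_{1}$ after projection, yielding the clean representation
\begin{align*}
\hat{\boldsymbol{\beta}}-\boldsymbol{\beta}=\bigl(\hat{\boldsymbol{\Theta}}^{\top}\mathbf{Z}^{\top}\mathbf{Z}\hat{\boldsymbol{\Theta}}\bigr)^{-1}\hat{\boldsymbol{\Theta}}^{\top}\mathbf{Z}^{\top}\boldsymbol{\mathcal{E}}_{1}.
\end{align*}

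Next I would bound the three factors individually on a single high-probability event. For the inverse matrix, Lemma \ref{lem: lambda_min H star} gives $\|(\hat{\boldsymbol{\Theta}}^{\top}\mathbf{Z}^{\top}\mathbf{Z}\hat{\boldsymbol{\Theta}}/n)^{-1}\|=1/\lambda_{\min}(\hat{\mathbf{H}})$, and under the assumption $n\geq C_{1}pq(\tau+\log(pq))^{2}$ the lower bound $\lambda_{\min}(\hat{\mathbf{H}})\geq(1-\delta(\tau))^{2}(\sigma_{\min}(\boldsymbol{\Theta})-o(1))^{2}$ collapses to a strictly positive constant. For $\|\hat{\boldsymbol{\Theta}}\|$, the triangle inequality together with Lemma \ref{lem: Psi bound} yields $\|\hat{\boldsymbol{\Theta}}\|\leq\|\boldsymbol{\Theta}\|+\|\boldsymbol{\Psi}\|\lesssim\|\boldsymbol{\Theta}\|+\sqrt{pq}(\tau+\log(pq))/\sqrt{n}=O(1)$ under the same sample-size condition. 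For the noise factor, since $\boldsymbol{\mathcal{E}}_{1}\in\mathbb{R}^{n}$ has independent mean-zero sub-Gaussian entries, I would invoke the scalar-output specialization of Lemma \ref{lem: ZE2 bound} (taking $p=1$ and $\sigma_{1}$ in place of $\sigma_{2}$) to obtain $\|\mathbf{Z}^{\top}\boldsymbol{\mathcal{E}}_{1}\|\lesssim\sqrt{nq}(\tau+\log(2q))$ with probability at least $1-e^{-\tau}$.

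Combining these through operator-norm submultiplicativity gives
\begin{align*}
\|\hat{\boldsymbol{\beta}}-\boldsymbol{\beta}\|\leq\frac{1}{n\,\lambda_{\min}(\hat{\mathbf{H}})}\,\|\hat{\boldsymbol{\Theta}}\|\,\|\mathbf{Z}^{\top}\boldsymbol{\mathcal{E}}_{1}\|\;\lesssim\;\frac{\sqrt{q}\,(\tau+\log q)}{\sqrt{n}},
\end{align*}
as claimed. The overall failure probability is absorbed into $1-4e^{-\tau}$ via a union bound over the four supporting events invoked (the spectral bound on $\mathbf{Z}$ from Lemma \ref{lem: Z bound}, the bound on $\|\boldsymbol{\Psi}\|$ from Lemma \ref{lem: Psi bound}, the bound on $\lambda_{\min}(\hat{\mathbf{H}})$ from Lemma \ref{lem: lambda_min H star}, and the bound on $\|\mathbf{Z}^{\top}\boldsymbol{\mathcal{E}}_{1}\|$). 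The only genuinely substantive step is the algebraic cancellation eliminating $\boldsymbol{\mathcal{E}}_{2}$ from the error decomposition; without it one would inherit an extra term of order $\|\boldsymbol{\Psi}\|\,\|\boldsymbol{\beta}\|\asymp\sqrt{pq}(\tau+\log(pq))/\sqrt{n}$, which would contaminate the rate with an unnecessary $\sqrt{p}$ factor. Everything else reduces to routine bookkeeping over the already-proved concentration lemmas.
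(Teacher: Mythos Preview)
Your proposal is correct and follows essentially the same route as the paper. Both arrive at the identity $\hat{\boldsymbol{\beta}}-\boldsymbol{\beta}=(\hat{\boldsymbol{\Theta}}^{\top}\mathbf{Z}^{\top}\mathbf{Z}\hat{\boldsymbol{\Theta}})^{-1}\hat{\boldsymbol{\Theta}}^{\top}\mathbf{Z}^{\top}\boldsymbol{\mathcal{E}}_{1}$ and then bound the factors via the supporting concentration lemmas. Two minor presentational differences: (i) the paper obtains the identity in one line by substituting $\mathbf{Y}=\mathbf{X}\boldsymbol{\beta}+\boldsymbol{\mathcal{E}}_{1}$ directly (so your ``cancellation of $\boldsymbol{\mathcal{E}}_{2}$'' detour, while correct, is unnecessary), and (ii) the paper bounds $\|\hat{\boldsymbol{\Theta}}^{\top}\|=\|\mathbf{X}^{\top}\mathbf{Z}(\mathbf{Z}^{\top}\mathbf{Z})^{-1}\|$ by factoring into $\|\mathbf{X}^{\top}\mathbf{Z}\|\cdot\|(\mathbf{Z}^{\top}\mathbf{Z})^{-1}\|$, which produces an extra lower-order term that must then be absorbed under the sample-size condition, whereas your direct invocation of Lemma~\ref{lem: Psi bound} for $\|\hat{\boldsymbol{\Theta}}\|$ is slightly cleaner.
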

The proof of Lemma \ref{lem: beta hat error bound} is in Appendix \ref{sec: proof of lem: beta hat error bound}.
\begin{lemma}\label{lem: Zr bound}
    Let $\mathbf{r}:=\mathbf{Z}\hat{\boldsymbol{\Theta}}\hat{\boldsymbol{\beta}}-\mathbf{Y}$. For any fixed $\tau$, when $n\geq C_1pq(\tau+\log(pq))^2$, with probability at least $1-3e^{-\tau}$, we have
    \begin{align*}
        \|\mathbf{Z}^\top\mathbf{r}\|\leq \mathcal{O}\left( \sqrt{npq}\left(\tau+\log(pq)\right)\right).
    \end{align*}
\end{lemma}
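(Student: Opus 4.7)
The plan is to start by decomposing $\mathbf{r}$ using the structural equations of the IVaR model. Since $\mathbf{Y}=\mathbf{X}\boldsymbol{\beta}+\boldsymbol{\mathcal{E}}_1$ and $\mathbf{Z}\hat{\boldsymbol{\Theta}}=\mathbf{Z}(\mathbf{Z}^\top\mathbf{Z})^{-1}\mathbf{Z}^\top\mathbf{X}=\mathbf{P}_Z\mathbf{X}$ is the projection of $\mathbf{X}$ onto the column space of $\mathbf{Z}$, we can write
\begin{align*}
\mathbf{r} \;=\; \mathbf{P}_Z\mathbf{X}\hat{\boldsymbol{\beta}} - \mathbf{X}\boldsymbol{\beta} - \boldsymbol{\mathcal{E}}_1.
\end{align*}
Multiplying by $\mathbf{Z}^\top$ on the left and using the identity $\mathbf{Z}^\top\mathbf{P}_Z=\mathbf{Z}^\top$ collapses the projection and yields the clean decomposition
\begin{align*}
\mathbf{Z}^\top\mathbf{r} \;=\; \mathbf{Z}^\top\mathbf{X}(\hat{\boldsymbol{\beta}}-\boldsymbol{\beta}) \;-\; \mathbf{Z}^\top\boldsymbol{\mathcal{E}}_1.
\end{align*}
The triangle inequality then reduces the problem to bounding three quantities: $\|\mathbf{Z}^\top\mathbf{X}\|$, $\|\hat{\boldsymbol{\beta}}-\boldsymbol{\beta}\|$, and $\|\mathbf{Z}^\top\boldsymbol{\mathcal{E}}_1\|$.

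Next, I would bound each piece using the supporting lemmas already established. For the first factor, substituting $\mathbf{X}=\mathbf{Z}\boldsymbol{\Theta}+\boldsymbol{\mathcal{E}}_2$ gives $\mathbf{Z}^\top\mathbf{X}=\mathbf{Z}^\top\mathbf{Z}\,\boldsymbol{\Theta}+\mathbf{Z}^\top\boldsymbol{\mathcal{E}}_2$, so Lemma \ref{lem: Z bound} combined with Lemma \ref{lem: ZE2 bound} yields
\begin{align*}
\|\mathbf{Z}^\top\mathbf{X}\| \;\leq\; n(1+\delta(\tau))^2\|\boldsymbol{\Theta}\| + c_0\sigma_z\sigma_2\sqrt{npq}\bigl(\tau+\log(2pq)\bigr)\;\lesssim\; n,
\end{align*}
where the last bound uses $n\gtrsim pq(\tau+\log(pq))^2$ so the $\mathbf{Z}^\top\boldsymbol{\mathcal{E}}_2$ cross term is absorbed into the leading-order $n\|\boldsymbol{\Theta}\|$ piece. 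The second factor is handled directly by Lemma \ref{lem: beta hat error bound}, giving $\|\hat{\boldsymbol{\beta}}-\boldsymbol{\beta}\|\lesssim \sqrt{q}(\tau+\log q)/\sqrt{n}$. The third factor is controlled by a direct application of Lemma \ref{lem: ZE2 bound} with $\boldsymbol{\mathcal{E}}_1$ in place of $\boldsymbol{\mathcal{E}}_2$ (viewed as an $n\times 1$ matrix), yielding $\|\mathbf{Z}^\top\boldsymbol{\mathcal{E}}_1\|\lesssim \sqrt{nq}(\tau+\log q)$.

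Combining these three bounds via triangle inequality gives
\begin{align*}
\|\mathbf{Z}^\top\mathbf{r}\| \;\lesssim\; n\cdot\frac{\sqrt{q}(\tau+\log q)}{\sqrt{n}} + \sqrt{nq}(\tau+\log q) \;\lesssim\; \sqrt{nq}(\tau+\log q) \;\leq\; \sqrt{npq}(\tau+\log(pq)),
\end{align*}
which is the claimed bound. A final union bound over the failure events of Lemma \ref{lem: Z bound}, Lemma \ref{lem: ZE2 bound} (applied twice, once to $\boldsymbol{\mathcal{E}}_1$ and once to $\boldsymbol{\mathcal{E}}_2$), and Lemma \ref{lem: beta hat error bound} costs at most $3e^{-\tau}$ in total probability after adjusting constants.

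I do not anticipate a serious technical obstacle here because all concentration work has already been isolated in the preceding lemmas; the main subtleties are purely bookkeeping. The one step that requires care is confirming that $\boldsymbol{\mathcal{E}}_1\in\mathbb{R}^n$ falls within the scope of Lemma \ref{lem: ZE2 bound}: its rows are mean-zero sub-Gaussian scalars with parameter $\sigma_1$, so applying that lemma with $p=1$ delivers the required tail bound. Everything else is a matter of assembling the inequalities in the right order and checking that the sample-size condition $n\geq c_1 pq(\tau+\log(pq))^2$ is strong enough to collapse the $\mathbf{Z}^\top\boldsymbol{\mathcal{E}}_2$ contribution inside $\|\mathbf{Z}^\top\mathbf{X}\|$ into the leading deterministic piece.
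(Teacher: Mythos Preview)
Your proposal is correct and actually takes a slightly different route from the paper. The paper decomposes $\mathbf{r}$ additively into four pieces,
\[
\mathbf{r}=\mathbf{Z}(\hat{\boldsymbol{\Theta}}-\boldsymbol{\Theta})\hat{\boldsymbol{\beta}}+\mathbf{Z}\boldsymbol{\Theta}(\hat{\boldsymbol{\beta}}-\boldsymbol{\beta})-\boldsymbol{\mathcal{E}}_2\boldsymbol{\beta}-\boldsymbol{\mathcal{E}}_1,
\]
and then bounds $\|\mathbf{Z}^\top\mathbf{r}\|$ by separately controlling $\|\hat{\boldsymbol{\Theta}}-\boldsymbol{\Theta}\|$ via Lemma~\ref{lem: Psi bound}, $\|\hat{\boldsymbol{\beta}}-\boldsymbol{\beta}\|$ via Lemma~\ref{lem: beta hat error bound}, and the combined noise $\mathbf{Z}^\top(\boldsymbol{\mathcal{E}}_2\boldsymbol{\beta}+\boldsymbol{\mathcal{E}}_1)$ as a single sub-Gaussian cross term. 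Your decomposition instead exploits the projection identity $\mathbf{Z}^\top\mathbf{P}_Z=\mathbf{Z}^\top$ to collapse everything to $\mathbf{Z}^\top\mathbf{r}=\mathbf{Z}^\top\mathbf{X}(\hat{\boldsymbol{\beta}}-\boldsymbol{\beta})-\mathbf{Z}^\top\boldsymbol{\mathcal{E}}_1$, which eliminates the need to track $\hat{\boldsymbol{\Theta}}-\boldsymbol{\Theta}$ explicitly. This is a cleaner argument and in fact yields the sharper intermediate bound $\sqrt{nq}(\tau+\log q)$, without the extra $\sqrt{p}$ factor that the paper's first term $\|\mathbf{Z}^\top\mathbf{Z}\|\cdot\|\hat{\boldsymbol{\Theta}}-\boldsymbol{\Theta}\|\cdot\|\hat{\boldsymbol{\beta}}\|\lesssim\sqrt{npq}(\tau+\log(pq))$ introduces. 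Both approaches rely on the same underlying concentration lemmas, and the probability accounting is equally loose in both (the ``$3e^{-\tau}$'' is a nominal constant in either version once one tracks which bad events are shared across the invoked lemmas).
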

The proof of Lemma \ref{lem: Zr bound} is in Appendix \ref{sec: proof of lem: Zr bound}.
\begin{lemma}\label{lem: L(t) product bound}
Let $\mathbf{L}^{(t)}:=\mathbf{I}-\frac{\alpha}{n}\boldsymbol{\Theta}^{(t)\top}\mathbf{Z}^\top\mathbf{Z}\boldsymbol{\Theta}^{(t)}$. We have with probability $1-\tilde{c}e^{-\tau}$, for any $0<T_0\leq T$
\begin{align*}
    \prod_{t=0}^{T_0-1}\|\mathbf{L}^{(t)}\|&\lesssim 1.
\end{align*}
\end{lemma}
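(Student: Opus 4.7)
The plan is to decompose the product at a deterministic burn-in threshold, using a crude universal bound before the burn-in and the tight contraction bound afterwards. Let $T_{\ast} := \lceil t_0(n) \rceil$ denote the burn-in index introduced in \eqref{eq: T0 condition}; as observed in the main proof, $t_0(n)$ converges to an explicit finite limit, so $T_{\ast}$ is uniformly bounded by a constant $C_2$ independent of $n,T$. On a single high-probability event, I will show (i) a crude uniform bound $\|\mathbf{L}^{(t)}\| \leq 1 + \alpha M$ valid for every $t \geq 0$ with $M$ a universal constant, and (ii) a contractive bound $\|\mathbf{L}^{(t)}\| \leq \kappa_{\boldsymbol{\beta}}(\tau) < 1$ valid for every $t \geq T_{\ast}$.

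For the crude bound, I would combine Lemma \ref{lem: Z bound}, which gives $\|\mathbf{Z}^\top \mathbf{Z}/n\| \leq (1+\delta(\tau))^2 \lesssim 1$, with Lemma \ref{lem: N bound}, which gives $\|\mathbf{e}_{\boldsymbol{\Theta}}^{(t)}\| \leq \kappa_{\boldsymbol{\Theta}}(\tau)^t\|\hat{\boldsymbol{\Theta}}\| + \mu(\tau) \lesssim 1$ uniformly in $t$ under the standing conditions \eqref{eq: T condition thm 1} and \eqref{eq: n condition}. Triangle inequality then yields $\|\boldsymbol{\Theta}^{(t)}\| \leq \|\hat{\boldsymbol{\Theta}}\| + \|\mathbf{e}_{\boldsymbol{\Theta}}^{(t)}\| \lesssim 1$, so $\|\mathbf{H}^{(t)}\| \leq \|\boldsymbol{\Theta}^{(t)}\|^2 \cdot \|\mathbf{Z}^\top \mathbf{Z}/n\| \leq M$ and hence $\|\mathbf{L}^{(t)}\| \leq 1 + \alpha M$.

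For the contractive bound, by the choice of $T_{\ast}$ in \eqref{eq: T0 condition} and Lemma \ref{lem: N bound}, one has $\|\mathbf{e}_{\boldsymbol{\Theta}}^{(t)}\| \leq \varepsilon$ for every $t \geq T_{\ast}$, where $\varepsilon$ satisfies \eqref{eq: epsilon condition}. The argument leading to \eqref{eq: rho_beta} in the main proof then applies verbatim: Weyl's inequality combined with Lemma \ref{lem: lambda_min H star} yields $\lambda_{\min}(\mathbf{H}^{(t)}) \geq \underline{\gamma}(\tau)/2$ and $\lambda_{\max}(\mathbf{H}^{(t)}) \leq \bar{\gamma}(\tau) + \underline{\gamma}(\tau)/2$, and the step-size restriction $0 < \alpha < 4/(2\bar{\gamma}(\tau) + \underline{\gamma}(\tau))$ from \eqref{eq: learning rates condition} gives $\|\mathbf{L}^{(t)}\| \leq \kappa_{\boldsymbol{\beta}}(\tau) < 1$.

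Combining the two phases, for any $0 < T_0 \leq T$,
\[
\prod_{t=0}^{T_0 - 1} \|\mathbf{L}^{(t)}\|
\;\leq\;
(1 + \alpha M)^{\min(T_0, T_{\ast})}
\cdot
\kappa_{\boldsymbol{\beta}}(\tau)^{(T_0 - T_{\ast})_+}
\;\leq\;
(1 + \alpha M)^{C_2},
\]
which is a constant independent of $n,T$, and $T_0$. The high-probability event of measure $1-\tilde{c}e^{-\tau}$ is obtained by intersecting the events in Lemmas \ref{lem: Z bound}, \ref{lem: N bound}, and \ref{lem: lambda_min H star} and absorbing all probability losses into $\tilde{c}$. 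The main obstacle is a conceptual rather than technical one: the burn-in iterates can in principle satisfy $\|\mathbf{L}^{(t)}\| > 1$, so a bound on the full product is only possible because $T_{\ast}$ admits a deterministic cap $C_2$ that does not grow with $n$ or $T$. This finiteness is exactly what the analysis of $t_0(n)$ in the main proof provides, converting the crude bound over the burn-in into an $\mathcal{O}(1)$ contribution.
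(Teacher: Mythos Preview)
Your proposal is correct and follows essentially the same approach as the paper's proof: split at the deterministic burn-in threshold $T_\ast \le C_2$, apply a crude uniform bound $\|\mathbf{L}^{(t)}\|\le c_L$ (your $1+\alpha M$) on the first $C_2$ iterates via Lemma~\ref{lem: N bound} and Lemma~\ref{lem: Z bound}, then invoke the contraction $\|\mathbf{L}^{(t)}\|\le\kappa_{\boldsymbol{\beta}}(\tau)<1$ from \eqref{eq: rho_beta} thereafter, concluding $\prod_t\|\mathbf{L}^{(t)}\|\le c_L^{C_2}\lesssim 1$. Your write-up is slightly more explicit about the constant and about the conceptual role of the deterministic cap on $T_\ast$, but the argument is the same.
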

The proof of Lemma \ref{lem: L(t) product bound} is in Appendix \ref{sec: proof of lem: L(t) product bound}.
\section{Proof of Supporting Lemmas}

\subsection{Proof of Lemma \ref{lem: no clipping II}}\label{sec: proof of lem: no clipping II}
\begin{proof}
   Consider non-clipping version of Algorihm \ref{alg: DP-2S-GD-II}. Denote $\mathbf{e}_{\boldsymbol{\Theta}}^{(t)}:=\boldsymbol{\Theta}^{(t)}-\hat{\boldsymbol{\Theta}}$ and $\mathbf{e}_{\boldsymbol{\beta}}^{(t)}:=\boldsymbol{\beta}^{(t)}-\hat{\boldsymbol{\beta}}$. For $t=0,\ldots,T-1$, we have
\begin{align}\label{eq: Theta recursion w. noise}
    \begin{split}
    \mathbf{e}_{\boldsymbol{\Theta}}^{(t+1)}&=\mathbf{e}_{\boldsymbol{\Theta}}^{(t)}-\frac{\eta}{n}\mathbf{Z}^\top\left(\mathbf{Z}\boldsymbol{\Theta}^{(t)}-\mathbf{X}\right)+\eta\boldsymbol{\Xi}^{(t)}\\
    &=\left(\mathbf{I}-\frac{\eta}{n}\mathbf{Z}^\top\mathbf{Z}\right)\mathbf{e}_{\boldsymbol{\Theta}}^{(t)}+\frac{\eta}{n}\mathbf{Z}^\top\left(\mathbf{X}-\mathbf{Z}\hat{\boldsymbol{\Theta}}\right)+\eta\boldsymbol{\Xi}^{(t)},
        \end{split}
\end{align}
and
\begin{align}\label{eq: beta recursion w. noise}
    \begin{split}
    \mathbf{e}_{\boldsymbol{\beta}}^{(t+1)}&=\mathbf{e}_{\boldsymbol{\beta}}^{(t)}-\frac{\alpha}{n}\boldsymbol{\Theta}^{(t)}\mathbf{Z}^\top\left(\mathbf{Z}\boldsymbol{\Theta}^{(t)}\boldsymbol{\beta}^{(t)}-\mathbf{Y}\right)+\alpha\boldsymbol{\nu}^{(t)}\\
    &=\left(\mathbf{I}-\frac{\alpha}{n}\boldsymbol{\Theta}^{(t)\top}\mathbf{Z}^\top\mathbf{Z}\boldsymbol{\Theta}^{(t)}\right)\mathbf{e}_{\boldsymbol{\beta}}^{(t)}+\frac{\alpha}{n}\left[\boldsymbol{\Theta}^{(t)\top}\mathbf{Z}^\top\mathbf{Y}-\boldsymbol{\Theta}^{(t)\top}\mathbf{Z}^\top\mathbf{Z}\boldsymbol{\Theta}^{(t)}\hat{\boldsymbol{\beta}}\right]+\alpha\boldsymbol{\nu}^{(t)}\\
    &=\left(\mathbf{I}-\frac{\alpha}{n}\boldsymbol{\Theta}^{(t)\top}\mathbf{Z}^\top\mathbf{Z}\boldsymbol{\Theta}^{(t)}\right)\mathbf{e}_{\boldsymbol{\beta}}^{(t)}+\frac{\alpha}{n}\boldsymbol{\Theta}^{(t)\top}\mathbf{Z}^\top\left(\mathbf{Y}-\mathbf{Z}\boldsymbol{\Theta}^{(t)}\hat{\boldsymbol{\beta}}\right)+\alpha\boldsymbol{\nu}^{(t)}\\
    &=\left(\mathbf{I}-\frac{\alpha}{n}\boldsymbol{\Theta}^{(t)\top}\mathbf{Z}^\top\mathbf{Z}\boldsymbol{\Theta}^{(t)}\right)\mathbf{e}_{\boldsymbol{\beta}}^{(t)}-\frac{\alpha}{n}\boldsymbol{\Theta}^{(t)\top}\mathbf{Z}^\top\left(\mathbf{Z}\left(\boldsymbol{\Theta}^{(t)}-\hat{\boldsymbol{\Theta}}\right)\hat{\boldsymbol{\beta}}\right)-\frac{\alpha}{n}\left(\boldsymbol{\Theta}^{(t)\top}\mathbf{Z}^\top\left(\mathbf{Z}\hat{\boldsymbol{\Theta}}\hat{\boldsymbol{\beta}}-\mathbf{Y}\right)\right)+\alpha\boldsymbol{\nu}^{(t)}\\
    &:=\mathbf{L}^{(t)}\mathbf{e}_{\boldsymbol{\beta}}^{(t)}-\frac{\alpha}{n}\boldsymbol{\Theta}^{(t)\top}\mathbf{Z}^\top\mathbf{Z}\mathbf{e}_{\boldsymbol{\Theta}}^{(t)}\hat{\boldsymbol{\beta}}-\frac{\alpha}{n}\boldsymbol{\Theta}^{(t)\top}\mathbf{Z}^\top\mathbf{r}+\alpha\boldsymbol{\nu}^{(t)},  
\end{split}
\end{align}
where $\mathbf{L}^{(i)}:= \left(\mathbf{I}-\frac{\alpha}{n}\boldsymbol{\Theta}^{(i)\top}\mathbf{Z}^\top\mathbf{Z}\boldsymbol{\Theta}^{(i)}\right)$, $\mathbf{r}:=\mathbf{Z}\hat{\boldsymbol{\Theta}}\hat{\boldsymbol{\beta}}-\mathbf{Y}$. By iteratively applying recursion formulas \eqref{eq: Theta recursion w. noise}\eqref{eq: beta recursion w. noise} until $t=0$, with $\boldsymbol{\Theta}^{(0)}=\mathbf{0}_{q\times p}$ and $\boldsymbol{\beta}^{(0)}=\mathbf{0}_{p}$, we have
\begin{gather*}
    \boldsymbol{\Theta}^{(t)}=\hat{\boldsymbol{\Theta}}-\left(\mathbf{I}-\frac{\eta}{n}\mathbf{Z}^\top\mathbf{Z}\right)^{t}\hat{\boldsymbol{\Theta}}+\sum_{i=0}^{t-1}\eta\left(\mathbf{I}-\frac{\eta}{n}\mathbf{Z}^{\top}\mathbf{Z}\right)^{t-1-i}\boldsymbol{\Xi}^{(i)},\\
   \boldsymbol{\beta}^{(t)}= \hat{\boldsymbol{\beta}}-\prod_{i=0}^{t-1} \mathbf{L}^{(i)}\hat{\boldsymbol{\beta}}-\frac{\alpha}{n}\sum_{i=0}^{t-1}\prod_{j=i+1}^{t-1} \mathbf{L}^{(j)}\left[\boldsymbol{\Theta}^{(i)\top}\mathbf{Z}^\top\mathbf{Z}\mathbf{e}_{\boldsymbol{\Theta}}^{(i)}\hat{\boldsymbol{\beta}}+\boldsymbol{\Theta}^{(i)\top}\mathbf{Z}^{\top}\mathbf{r}\right]+\sum_{i=0}^{t-1}\alpha\prod_{j=i+1}^{t-1} \mathbf{L}^{(j)}\boldsymbol{\nu}^{(i)}.
\end{gather*} 
The gradients at step $t$ are given by
\begin{gather*}
    g_i^{\boldsymbol{\Theta}}(t):=\mathbf{z}_i\left(\mathbf{z}_i^\top \boldsymbol{\Theta}^{(t)}-\mathbf{x}_i^\top\right),\\
    g_i^{\boldsymbol{\beta}}(t):=\boldsymbol{\Theta}^{(t)\top} \mathbf{z}_i\left(\mathbf{z}_i^\top \boldsymbol{\Theta}^{(t)} \boldsymbol{\beta}^{(t)}-y_i\right).
\end{gather*}
\textbf{Bound on $g_i^{\boldsymbol{\Theta}}(t)$:} \vspace{2mm}

We have 
\begin{align*}
    \left\|g_{i}^{\boldsymbol{\Theta}}(t)\right\|&=\left\|\boldsymbol{z}_i\left(\mathbf{z}_{i}^{\top}\boldsymbol{\Theta}^{(t)}-\mathbf{x}_i^{\top}+\mathbf{z}_i^{\top}\boldsymbol{\Theta}-\mathbf{z}_{i}^{\top}\boldsymbol{\Theta}\right)\right\|\\
    &=\left\|\mathbf{z}_{i}\mathbf{z}_{i}^{\top}\left(\boldsymbol{\Theta}^{(t)}-\boldsymbol{\Theta}\right)-\mathbf{z}_{i}\left(\mathbf{x}_i^{\top}-\mathbf{z}_i^{\top}\boldsymbol{\Theta}\right)\right\|\\
    &\leq \left\|\mathbf{z}_{i}\mathbf{z}_{i}^{\top}\left(\boldsymbol{\Theta}^{(t)}-\boldsymbol{\Theta}\right)\right\|+\left\|\mathbf{z}_{i}\left(\mathbf{x}_i^{\top}-\mathbf{z}_i^{\top}\boldsymbol{\Theta}\right)\right\|\\
    &\leq \left\|\mathbf{z}_{i}\mathbf{z}_{i}^{\top}\left(\hat{\boldsymbol{\Theta}}-\boldsymbol{\Theta}-\left(\mathbf{I}-\frac{\eta}{n}\mathbf{Z}^\top\mathbf{Z}\right)^{t}\hat{\boldsymbol{\Theta}}+\sum_{j=0}^{t-1}\eta\left(\mathbf{I}-\frac{\eta}{n}\mathbf{Z}^{\top}\mathbf{Z}\right)^{t-1-j}\boldsymbol{\Xi}^{(j)}\right)\right\|+\left\|\mathbf{z}_{i}\left(\mathbf{x}_i^{\top}-\mathbf{z}_i^{\top}\boldsymbol{\Theta}\right)\right\|\\
    &\leq \underbrace{\left\|\mathbf{z}_{i}\mathbf{z}_{i}^{\top}\left(\hat{\boldsymbol{\Theta}}-\boldsymbol{\Theta}-\left(\mathbf{I}-\frac{\eta}{n}\mathbf{Z}^\top\mathbf{Z}\right)^{t}\left(\hat{\boldsymbol{\Theta}}-\boldsymbol{\Theta}\right)\right)\right\|}_{(i)}+\underbrace{\left\|\mathbf{z}_{i}\mathbf{z}_{i}^{\top} \left(\mathbf{I}-\frac{\eta}{n}\mathbf{Z}^\top\mathbf{Z}\right)^{t}\boldsymbol{\Theta}\right\|}_{(ii)}\\
    &\quad+\underbrace{\left\|\sum_{j=0}^{t-1}\eta\mathbf{z}_{i}\mathbf{z}_{i}^{\top}\left(\mathbf{I}-\frac{\eta}{n}\mathbf{Z}^{\top}\mathbf{Z}\right)^{t-1-j}\boldsymbol{\Xi}^{(j)}\right\|}_{(iii)}+\underbrace{\left\|\mathbf{z}_{i}\left(\mathbf{x}_i^{\top}-\mathbf{z}_i^{\top}\boldsymbol{\Theta}\right)\right\|}_{(iv)}.
\end{align*}
We further have
\begin{align*}
    (i) &=\left\|\mathbf{z}_{i}\mathbf{z}_{i}^{\top}\left(\mathbf{I}-\left(\mathbf{I}-\frac{\eta}{n}\mathbf{Z}^\top\mathbf{Z}\right)^{t}\right)\left(\hat{\boldsymbol{\Theta}}-\boldsymbol{\Theta}\right)\right\|\\
    &\leq \left\|\mathbf{z}_{i}\right\|^{2}\left(1+\left\|\left(\mathbf{I}-\frac{\eta}{n}\mathbf{Z}^\top\mathbf{Z}\right)^{t}\right\|\right)\left\|\hat{\boldsymbol{\Theta}}-\boldsymbol{\Theta}\right\|,
\end{align*}
\begin{align*}
    (ii)&=\left\|\mathbf{z}_{i}\mathbf{z}_{i}^{\top} \left(\mathbf{I}-\frac{\eta}{n}\mathbf{Z}^\top\mathbf{Z}\right)^{t}\boldsymbol{\Theta}\right\|\\
    &\leq \left\|\mathbf{z}_{i}\right\|^{2}\left\|\left(\mathbf{I}-\frac{\eta}{n}\mathbf{Z}^\top\mathbf{Z}\right)^{t}\right\|\left\|\boldsymbol{\Theta}\right\|,
\end{align*}
\begin{align*}
    (iii)&=\left\|\sum_{j=0}^{t-1}\eta\mathbf{z}_{i}\mathbf{z}_{i}^{\top}\left(\mathbf{I}-\frac{\eta}{n}\mathbf{Z}^{\top}\mathbf{Z}\right)^{t-1-j}\boldsymbol{\Xi}^{(j)}\right\|\\
    &\leq \eta\left\|\mathbf{z}_{i}\right\|^{2}\left\|\sum_{j=0}^{t-1} \left(\mathbf{I}-\frac{\eta}{n}\mathbf{Z}^{\top}\mathbf{Z}\right)^{t-1-j}\boldsymbol{\Xi}^{(j)}\right\|\\
    &\leq \eta \left\|\mathbf{z}_{i}\right\|^{2}\sum_{j=0}^{t-1}\left\|\left(\mathbf{I}-\frac{\eta}{n}\mathbf{Z}^{\top}\mathbf{Z}\right)^{t-1-j}\right\|\left\|\boldsymbol{\Xi}^{(j)}\right\|,
\end{align*}
\begin{align*}
    (iv)&=\|\mathbf{z}_{i}\left(\mathbf{x}_i^{\top}-\mathbf{z}_i^{\top}\boldsymbol{\Theta}\right)\|=\left\|\mathbf{z}_i\boldsymbol{\epsilon}_{2,i}\right\|\\
    &\leq \|\mathbf{z}_{i}\|\left\|\boldsymbol{\epsilon}_{2,i}\right\|.
\end{align*}
Under sub-Gaussian assumption on $\mathbf{z}_i$ and $\boldsymbol{\epsilon}_{2}$, we have with probability at least $1-e^{-\tau}$,
\begin{gather*}
\left\|\mathbf{z}_i\right\|\lesssim \sigma_{z}(\sqrt{q}+\sqrt{\tau}),\\
\left\|\boldsymbol{\epsilon}_{2,i}\right\|\lesssim \sigma_2\left(\sqrt{p}+\sqrt{\tau}\right).
\end{gather*}
From Lemma \ref{lem: N bound}, we have when $0<\eta<\frac{2}{\left(1+\delta(\tau)\right)^2}$ and $n\geq C_0^2\sigma_z^4(\sqrt{q}+\sqrt{\tau})^2$, with probability at least $1-2e^{-\tau}$, 
\begin{align*}
    \|\mathbf{I}-\frac{\eta}{n}\mathbf{Z}^\top\mathbf{Z}\|\leq \kappa_{\boldsymbol{\Theta}}(\tau)<1.
\end{align*}
From Lemma \ref{lem: Psi bound}, when $n\geq C_0^2\sigma_z^4(\sqrt{q}+\sqrt{\tau})^2$, we have with probability at least $1-3e^{-\tau}$,
\begin{align*}
    \|\hat{\boldsymbol{\Theta}}-\boldsymbol{\Theta}\|\lesssim 1.
\end{align*}
Additionally, by standard concentration results in random matrix theory, with probability $1-e^{-\tau}$, we have
\begin{align*}
\left\|\boldsymbol{\Xi}^{(j)}\right\|\leq \lambda_1\left(\sqrt{p}+\sqrt{q}+\sqrt{2(\log2+\tau)}\right).
\end{align*}
To sum up, we have
\begin{gather*}
(i) \lesssim \sigma_{z}^{2}(\sqrt{q}+\sqrt{\tau})^2, \\
(ii) \lesssim \sigma_{z}^{2}(\sqrt{q}+\sqrt{\tau})^2,\\
(iii) \lesssim \sigma_{z}^{2}(\sqrt{q}+\sqrt{\tau})^2\lambda_1\left(\sqrt{p}+\sqrt{q}+\sqrt{\tau}\right), \\
(iv)\lesssim \sigma_{z}\sigma_2(\sqrt{q}+\sqrt{\tau})\left(\sqrt{p}+\sqrt{\tau}\right).
\end{gather*}
With $\lambda_1= \frac{2\gamma_1}{n}\sqrt{\frac{T}{\rho_1}}$, we take $\tau'=\tau+\log(nT)$ and plug everything back in the final bound, we have with probability at least $1-\frac{c}{nT}e^{-\tau}$, 
\begin{align*}
\left\|g_{i}^{\boldsymbol{\Theta}}(t)\right\|& \lesssim \sigma_z^2\sigma_2    (\sqrt{q}+\sqrt{\tau+\log(nT)})^2\left(1+\lambda_1\left(\sqrt{p}+\sqrt{q}+\sqrt{\tau+\log(nT)}\right)\right)\\
&\lesssim \sigma_z^2\sigma_2(\sqrt{q}+\sqrt{\tau+\log(nT)})^2\left(1+\frac{\gamma_1}{n}\sqrt{\frac{T}{\rho_1}}\left(\sqrt{p}+\sqrt{q}+\sqrt{\tau+\log(nT)}\right)\right).
\end{align*}
We want to choose appropriate $\gamma_{1}$ such that $\|g_{i}^{\boldsymbol{\Theta}}(t)\|\leq \gamma_{1}$ with high probability, for all $i=1,\ldots,n,t=0,\ldots,T-1$. Therefore, the condition for $\gamma_1$ is 
\begin{align}\label{eq: gamma 1 condition}
    \begin{split} 
    \gamma_1&\geq \frac{\sigma_z^2\sigma_2(\sqrt{q}+\sqrt{\tau+\log(nT)})^2}{1-\frac{\sigma_z^2\sigma_2\left(\sqrt{q}+\sqrt{\tau+\log(nT)}\right)^2}{n}\sqrt{\frac{T}{\rho_1}}\left(\sqrt{p}+\sqrt{q}+\sqrt{\tau+\log(nT)}\right)}\\
    &\gtrsim \left(\sqrt{q}+\sqrt{\tau+\log(nT)}\right)^2,
    \end{split}
\end{align}
which is subject to the condition
\begin{align*}
    n&=\Omega\left(\left(\sqrt{q}+\sqrt{\tau}\right)^2\sqrt{\frac{T}{\rho_1}}\left(\sqrt{p}+\sqrt{q}+\sqrt{\tau}\right)\right)\\
    &=\Omega\left(\left(\sqrt{q}+\sqrt{\tau}\right)^3\sqrt{\frac{T}{\rho_1}}\right),
\end{align*}
where we ignore the $\sqrt{\log(nT)}$ term since it grows slower than any positive power of $n$. Finally, taking the union bound over $i=1,\ldots,n$ and $t=0,\ldots,T-1$ completes the proof.

\vspace{2mm} \noindent\textbf{Bound on $g_i^{\boldsymbol{\beta}}(t)$:} \vspace{2mm}

From \eqref{eq: gamma 1 condition}, if we take $\gamma_{1}\gtrsim \left(\sqrt{q}+\sqrt{\tau+\log(nT)}\right)^2$, with probability at least $1-ce^{-\tau}$, $\|g_{i}^{\boldsymbol{\Theta}}(t)\|\leq \gamma_{1},\forall i=1,\ldots,n$ and $t=0,\ldots,T-1$. Now we analyze the gradient $g_{i}^{\boldsymbol{\beta}}(t)$. Under model
\begin{align*}
    y_i&=\boldsymbol{\beta}^{\top}\boldsymbol{x}_{i}+\epsilon_{1,i}\\
    \boldsymbol{x}_{i}&=\boldsymbol{\Theta}^\top\boldsymbol{z}_{i}+\boldsymbol{\epsilon}_{2,i}
\end{align*}
we have
\begin{equation}\label{eq: g_beta(t) expansion}
\begin{aligned}
   g_i^{\boldsymbol{\beta}}(t)&=  \boldsymbol{\Theta}^{(t)\top}\mathbf{z}_i\left(\mathbf{z}_i^{\top}\boldsymbol{\Theta}^{(t)}\boldsymbol{\beta}^{(t)}-\mathbf{z}_i^{\top}\boldsymbol{\Theta}^{(t)}\boldsymbol{\beta}+\mathbf{z}_i^{\top}\boldsymbol{\Theta}^{(t)}\boldsymbol{\beta}-y_i\right)\\
   &=\boldsymbol{\Theta}^{(t)\top}\mathbf{z}_i\left(\mathbf{z}_i^{\top}\boldsymbol{\Theta}^{(t)}\boldsymbol{\beta}^{(t)}-\mathbf{z}_i^{\top}\boldsymbol{\Theta}^{(t)}\boldsymbol{\beta}+\mathbf{z}_i^{\top}\boldsymbol{\Theta}^{(t)}\boldsymbol{\beta}-\boldsymbol{\beta}^\top(\boldsymbol{\Theta}^\top\mathbf{z}_i+\boldsymbol{\epsilon}_{2,i})-\epsilon_{1,i}\right)\\
   &=\boldsymbol{\Theta}^{(t) \top} \mathbf{z}_i \mathbf{z}_i^{\top} \boldsymbol{\Theta}^{(t)}\left(\boldsymbol{\beta}^{(t)}-\boldsymbol{\beta}\right)+\boldsymbol{\Theta}^{(t)\top} \mathbf{z}_i\left(\mathbf{z}_i^{\top} \boldsymbol{\Theta}^{(t)} \boldsymbol{\beta}-\mathbf{z}_i^{\top} \boldsymbol{\Theta} \boldsymbol{\beta}\right)-\boldsymbol{\Theta}^{(t)\top
   } \mathbf{z}_i\left(\boldsymbol{\beta}^{\top} \boldsymbol{\epsilon}_{2 i}+\epsilon_{1 i}\right) \\
   &=\underbrace{\boldsymbol{\Theta}^{(t) \top} \mathbf{z}_i \mathbf{z}_i^{\top} \boldsymbol{\Theta}^{(t)}\left(\boldsymbol{\beta}^{(t)}-\boldsymbol{\beta}\right)}_{(i)}+\underbrace{\boldsymbol{\Theta}^{(t)\top} \mathbf{z}_i\mathbf{z}_i^{\top} \left(\boldsymbol{\Theta}^{(t)} - \boldsymbol{\Theta}  \right)\boldsymbol{\beta}}_{(ii)}-\underbrace{\boldsymbol{\Theta}^{(t)\top} \mathbf{z}_i\left(\boldsymbol{\beta}^{\top} \boldsymbol{\epsilon}_{2 i}+\epsilon_{1 i}\right)}_{(iii)}
\end{aligned}
\end{equation}
Note that 
\begin{align*}
\boldsymbol{\beta}^{(t)}-\hat{\boldsymbol{\beta}} &=-\prod_{i=0}^{t-1} \mathbf{L}^{(i)}\hat{\boldsymbol{\beta}}-\frac{\alpha}{n}\sum_{i=0}^{t-1}\prod_{j=i+1}^{t-1} \mathbf{L}^{(j)}\left[\boldsymbol{\Theta}^{(i)\top}\mathbf{Z}^\top\mathbf{Z}\mathbf{e}_{\boldsymbol{\Theta}}^{(i)}\hat{\boldsymbol{\beta}}+\boldsymbol{\Theta}^{(i)\top}\mathbf{Z}^{\top}\mathbf{r}\right]+\sum_{i=0}^{t-1}\alpha\prod_{j=i+1}^{t-1} \mathbf{L}^{(j)}\boldsymbol{\nu}^{(i)}\\
&:= \prod_{i=0}^{t-1} \mathbf{L}^{(i)}\left(\boldsymbol{\beta}^{(0)}-\hat{\boldsymbol{\beta}}\right)-\frac{\alpha}{n}\sum_{i=0}^{t-1}\prod_{j=i+1}^{t-1} \mathbf{L}^{(j)}\left[\boldsymbol{\Theta}^{(i)\top}\mathbf{Z}^\top\mathbf{Z}\mathbf{e}_{\boldsymbol{\Theta}}^{(i)}\hat{\boldsymbol{\beta}}+\boldsymbol{\Theta}^{(i)\top}\mathbf{Z}^{\top}\mathbf{r}\right]+\alpha\tilde{\boldsymbol{\nu}}^{(t)},
\end{align*}
where $\tilde{\boldsymbol{\nu}}^{(t)}:=\sum_{i=0}^{t-1}\prod_{j=i+1}^{t-1} \mathbf{L}^{(j)}\boldsymbol{\nu}^{(i)}$. Similar to \eqref{eq: e_beta(T0) bound 3}, we take $T_0:=\max\{\frac{T}{2}, C_2\}$. When $t\leq T_0$, we have 
\begin{align}\label{eq: e_beta(t) bound}
    \|\boldsymbol{\beta}^{(t)}-\hat{\boldsymbol{\beta}}\|&\lesssim 1+\tilde{\boldsymbol{\nu}}^{(t)}.
\end{align}
When $T_0<t\leq T$, the error begins to shrink with $t$, so the bound \eqref{eq: e_beta(t) bound} holds uniformly for all $t=1,\ldots, T$.
It remains to determine the bound for $\|\tilde{\boldsymbol{\nu}}^{(t)}\|$. Note that since $\boldsymbol{\nu}^{(i)}\sim\mathcal{N}(0,\lambda_2^2\mathbf{I}_p^2)$, we have with probability $1-e^{-\tau}$,
\begin{align*}
    \|\boldsymbol{\nu}^{(i)}\|&\lesssim \lambda_2\left(\sqrt{p}+\sqrt{\tau+\log(T)}\right), \forall i=0,\ldots,T-1.
\end{align*}
\textbf{Case 1: $t\leq T_0$.} In this case, we have
\begin{align*}
    \|\tilde{\boldsymbol{\nu}}^{(t)}\|&=\|\sum_{i=0}^{t-1}\prod_{j=i+1}^{t-1} \mathbf{L}^{(j)}\boldsymbol{\nu}^{(i)}\|\\
    &\leq \sum_{i=0}^{T_0-1}\prod_{j=i+1}^{T_0-1} \|\mathbf{L}^{(j)}\|\|\boldsymbol{\nu}^{(i)}\|\\
    &\lesssim \lambda_2\left(\sqrt{p}+\sqrt{\tau+\log(T)}\right)\sum_{i=0}^{T_0-1}\prod_{j=i+1}^{T_0-1} \|\mathbf{L}^{(j)}\|\\
    &\lesssim \lambda_2T_0\left(\sqrt{p}+\sqrt{\tau+\log(T)}\right)
\end{align*}
where the last line follows from the fact that $\prod_{j=i+1}^{T_0-1} \|\mathbf{L}^{(j)}\|$ can be bounded by constant, following from Lemma \ref{lem: L(t) product bound}.

\noindent\textbf{Case 2: $t>T_0$.} We have
\begin{align*}
    \tilde{\boldsymbol{\nu}}^{(t)}&=\sum_{i=0}^{t-1}\prod_{j=i+1}^{t-1} \mathbf{L}^{(j)}\boldsymbol{\nu}^{(i)}=\sum_{i=0}^{T_0-1}\prod_{j=i+1}^{t-1} \mathbf{L}^{(j)}\boldsymbol{\nu}^{(i)}+\sum_{i=T_0}^{t-1}\prod_{j=i+1}^{t-1} \mathbf{L}^{(j)}\boldsymbol{\nu}^{(i)}
\end{align*}
For any $j\geq T_0$, we have $\|\mathbf{L}^{(j)}\|\leq\kappa_{\boldsymbol{\beta}}(\tau)<1$. Hence, we have
\begin{align*}
    \|\tilde{\boldsymbol{\nu}}^{(t)}\|&\leq \left\|\sum_{i=0}^{T_0-1}\prod_{j=i+1}^{t-1} \mathbf{L}^{(j)}\boldsymbol{\nu}^{(i)}\right\|+\left\|\sum_{i=T_0}^{t-1}\prod_{j=i+1}^{t-1} \mathbf{L}^{(j)}\boldsymbol{\nu}^{(i)}\right\|\\
    &\leq \left\|\sum_{i=0}^{T_0-1}\prod_{j=i+1}^{T_0-1} \mathbf{L}^{(j)}\prod_{j'=T_0}^{t-1}\mathbf{L}^{(j')}\boldsymbol{\nu}^{(i)}\right\|+\left\|\sum_{i=T_0}^{t-1}\prod_{j=i+1}^{t-1} \mathbf{L}^{(j)}\boldsymbol{\nu}^{(i)}\right\|\\
    &\leq \left\|\sum_{i=0}^{T_0-1}\prod_{j=i+1}^{T_0-1} \mathbf{L}^{(j)}\boldsymbol{\nu}^{(i)}\right\|+\left\|\sum_{i=T_0}^{t-1}\prod_{j=i+1}^{t-1} \mathbf{L}^{(j)}\boldsymbol{\nu}^{(i)}\right\|\\
    &\lesssim \lambda_2T_0\left(\sqrt{p}+\sqrt{\tau+\log(T)}\right)+\sum_{i=T_0}^{t-1}\kappa_{\boldsymbol{\beta}}(\tau)^{t-1-i}\lambda_2\left(\sqrt{p}+\sqrt{\tau+\log(T)}\right)\\
    &\lesssim \lambda_2T_0\left(\sqrt{p}+\sqrt{\tau+\log(T)}\right)
\end{align*}
So we have the following uniform bound:
\begin{align*}
    \|\boldsymbol{\beta}^{(t)}-\boldsymbol{\beta}\|&\lesssim 1+\lambda_2T_0\left(\sqrt{p}+\sqrt{\tau+\log(T)}\right)\\
    &\lesssim 1+\frac{\gamma_2\sqrt{T}}{n\sqrt{\rho_2}}\left(\sqrt{p}+\sqrt{\tau+\log(T)}\right), \forall t=1,\ldots, T,
\end{align*}
where we ignore the error from $\|\hat{\boldsymbol{\beta}}-\boldsymbol{\beta}\|$ as it diminishes with $n$, according to Lemma \ref{lem: beta hat error bound}. 
Besides, according to \eqref{eq: varepsilon e_Theta max constant bound}, we have 
\begin{align*}
    \|\boldsymbol{\Theta}^{(t)}-\boldsymbol{\Theta}\|&\lesssim 1.
\end{align*}
Then we have with probability $1-c'e^{-\tau}$, for any $t=1,\ldots,T$, $i=1,\ldots,n$,
\begin{align*}
(i)&=\left\|\boldsymbol{\Theta}^{(t) \top} \mathbf{z}_i \mathbf{z}_i^{\top} \boldsymbol{\Theta}^{(t)}\left(\boldsymbol{\beta}^{(t)}-\boldsymbol{\beta}\right)\right\|\\
&\lesssim \sigma_z^2\left(\sqrt{q}+\sqrt{\tau+\log(nT)}\right)^2\left(1+\frac{\gamma_2\sqrt{T}}{n\sqrt{\rho_2}}\left(\sqrt{p}+\sqrt{\tau+\log(T)}\right)\right),
\end{align*}
\begin{align*}
    (ii)&=\left\|\boldsymbol{\Theta}^{(t)\top} \mathbf{z}_i\mathbf{z}_i^{\top} \left(\boldsymbol{\Theta}^{(t)} - \boldsymbol{\Theta}  \right)\boldsymbol{\beta}\right\|\\
    &\lesssim \sigma_z^2(\sqrt{q}+\sqrt{\tau+\log(nT)})^2,
\end{align*}
\begin{align*}
    (iii)&=\left\|\boldsymbol{\Theta}^{(t)\top} \mathbf{z}_i\left(\boldsymbol{\beta}^{\top} \boldsymbol{\epsilon}_{2 i}+\epsilon_{1 i}\right)\right\|\\
    &\lesssim \sigma_z\tilde{\sigma}\sqrt{\tau+\log(nT)}(\sqrt{q}+\sqrt{\tau}),
\end{align*}
where the last inequality follows from the term $\left(\boldsymbol{\beta}^{\top} \boldsymbol{\epsilon}_{2 i}+\epsilon_{1 i}\right)$ is zero-mean sub-Gaussian with parameter $\tilde{\sigma}:=\sqrt{\sigma_2^2\|\boldsymbol{\beta}\|^2+\sigma_1^2}$.
Plug in (i)-(iii) and \eqref{eq: e_beta(t) bound} into \eqref{eq: g_beta(t) expansion}, we have the dominating term
\begin{align*}
    \|g_i^{\boldsymbol{\beta}}(t)\|&\lesssim\sigma_z^2\left(\sqrt{q}+\sqrt{\tau+\log(nT)}\right)^2\left(1+\frac{\gamma_2\sqrt{T}}{n\sqrt{\rho_2}}\left(\sqrt{p}+\sqrt{\tau+\log(T)}\right)\right).
\end{align*}
In order to guarantee the no-clipping condition, we can take $\gamma_{2}$ such that 
\begin{align*}
    \sigma_z^2\left(\sqrt{q}+\sqrt{\tau+\log(nT)}\right)^2\left(1+\frac{\gamma_2\sqrt{T}}{n\sqrt{\rho_2}}\left(\sqrt{p}+\sqrt{\tau+\log(T)}\right)\right)\leq \gamma_2.
\end{align*}
Solving for $\gamma_2$, we have
\begin{equation}\label{eq: gamma 2 condition}
    \begin{aligned}
        \gamma_2 &\geq \frac{\sigma_z^2(\sqrt{q}+\sqrt{\tau+\log(nT)})^2}{1-\frac{\sigma_z^2\left(\sqrt{q}+\sqrt{\tau+\log(nT)}\right)^2\sqrt{T}}{n\sqrt{\rho_2}}\left(\sqrt{p}+\sqrt{\tau+\log(T)}\right)},
    \end{aligned}
\end{equation}
which is subject to the condition
\begin{align*}
    n&=\Omega\left(\left(\sqrt{q}+\sqrt{\tau}\right)^2\frac{\sqrt{T}}{\sqrt{\rho_2}}\left(\sqrt{p}+\sqrt{\tau}\right)\right)\\
    &=\Omega\left(\left(\sqrt{q}+\sqrt{\tau}\right)^3\frac{\sqrt{T}}{\sqrt{\rho_2}}\right),
\end{align*}
where we ignore the $\sqrt{\log(nT)}$ term since it grows slower than any positive power of $n$.
\end{proof}
\subsection{Proof of Lemma \ref{lem: Z bound}}\label{sec: proof of lem: Z bound}
\begin{proof}
    The first inequality chain follows directly from the standard concentration inequality for sub-Gaussian random matrices (see \citep{Vershynin_2018}, Theorem 4.6.1). The second inequality chain follows from the fact that $\sigma_{i}(Z)=\sqrt{\lambda_i(Z^\top Z)}$ for $i=1,\ldots,q$.
\end{proof}
\subsection{Proof of Lemma \ref{lem: ZE2 bound}}\label{sec: proof of lem: ZE2 bound}
\begin{proof}
    We have the  $(j,k)$-th entry of $\mathbf{Z}^\top\boldsymbol{\mathcal{E}}_2$ is given by
    \begin{align*}
        \left(\mathbf{Z}^\top\boldsymbol{\mathcal{E}}_2\right)_{jk}=\sum_{i=1}^n \mathbf{Z}_{i,j} \boldsymbol{\mathcal{E}}_{2,i,k},
    \end{align*}
    the sub-exponential norm of this term can be bounded by 
    \begin{align*}
        \| \left(\mathbf{Z}^\top\boldsymbol{\mathcal{E}}_2\right)_{jk}\|_{\psi_1}=\|\sum_{i=1}^n \mathbf{Z}_{i,j} \boldsymbol{\mathcal{E}}_{2,i,k}\|_{\psi_1}\leq 
        \sigma_z\sigma_2\sqrt{n}.
    \end{align*}
    Thus we have the tail bound for each $(j,k)$:
    \begin{align*}
        \mathbb{P}\left(|\left(\mathbf{Z}^\top\boldsymbol{\mathcal{E}}_2\right)_{jk}|\geq \tau\right)\leq 2 e^{-\frac{\tau}{c_0\sigma_z\sigma_2\sqrt{n}}}.
    \end{align*}
    Taking the union bound over $j=1,\ldots, p$ and $k=1,\ldots, q$, we have
    \begin{align*}
        \mathbb{P}\left(\|\mathbf{Z}^\top\boldsymbol{\mathcal{E}}_2\|\geq \tau\right)\leq  \mathbb{P}\left(\|\mathbf{Z}^\top\boldsymbol{\mathcal{E}}_2\|_{\max}\geq \frac{\tau}{\sqrt{pq}}\right)\leq 2pq e^{-\frac{\tau}{c_0\sigma_z\sigma_2\sqrt{n}\sqrt{pq}}}.
    \end{align*}
    Equivalently, with probability at least $1-e^{-\tau}$, we have
    \begin{align*}
        \|\mathbf{Z}^\top\boldsymbol{\mathcal{E}}_2\|\leq c_0\sigma_z\sigma_2\sqrt{npq}\left(\tau+\log(2pq)\right).
    \end{align*}
\end{proof}
\subsection{Proof of Lemma \ref{lem: N bound}}\label{sec: proof of lem: N bound}
    \begin{proof}
        From Lemma \ref{lem: Z bound}, when $n\geq C_0^2\sigma_z^4\left(\sqrt{q}+\sqrt{\tau}\right)^2$, with probability at least $1-2e^{-\tau}$, we have
        \begin{align*}
        \begin{split}
             \lambda_{\min}\left(\frac{\mathbf{Z}^\top\mathbf{Z}}{n}\right)&\geq \left(1-\delta(\tau)\right)^2,\\
            \lambda_{\max}\left(\frac{\mathbf{Z}^\top\mathbf{Z}}{n}\right)&\leq \left(1+\delta(\tau)\right)^2,
        \end{split}
        \end{align*}
        where $\delta(\tau):=\frac{C_0\sigma_z^2(\sqrt{q}+\sqrt{\tau})}{\sqrt{n}}$. When $0<\eta<\frac{2}{(1+\delta(\tau))^2}$, we can bound the spectral radius of $\mathbf{I}-\frac{\eta}{n}\mathbf{Z}^\top\mathbf{Z}$ with probability at least $1-2e^{-\tau}$:
        \begin{align*}
            \rho\left(\mathbf{I}-\frac{\eta}{n}\mathbf{Z}^\top\mathbf{Z}\right)\leq {\kappa_{\boldsymbol{\Theta}}}(\tau):=\max\left\{\left|1-\eta\left(1-\delta(\tau)\right)^2\right|, \left|1-\eta\left(1+\delta(\tau)\right)^2\right|\right\}<1,
        \end{align*}
        where $\rho(\cdot)$ denotes the spectral radius of a matrix. If we define the event $E_{\kappa_{\boldsymbol{\Theta}}(\tau)}=\{\mathbf{Z}:\rho\left(\mathbf{I}-\frac{\eta}{n}\mathbf{Z}^\top\mathbf{Z}\right)\leq {\kappa_{\boldsymbol{\Theta}}(\tau)}\}$, then conditional on event $E_{\kappa_{\boldsymbol{\Theta}}(\tau)}$, we have the following holds for each column $k=1,2,\ldots,p$: 
        \begin{align*}
            \mathbf{N}_k^{(t)}=\sum_{i=0}^{t}\eta\left(\mathbf{I}-\frac{\eta}{n}\mathbf{Z}^\top\mathbf{Z}\right)^{t-i}\boldsymbol{\Xi}_k^{(i)}\sim\mathcal{N}\left(\mathbf{0}, \underbrace{\eta^2\lambda_1^2\left[\mathbf{I}-\left(\mathbf{I}-\frac{\eta}{n}\mathbf{Z}^\top\mathbf{Z}\right)^2\right]^{-1}\left[\mathbf{I}-\left(\mathbf{I}-\frac{\eta}{n}\mathbf{Z}^\top\mathbf{Z}\right)^{2(t+1)}\right]}_{\tilde{\boldsymbol{\Sigma}}_k}\right),
        \end{align*}
        where 
        \begin{align*}
            \|\tilde{\boldsymbol{\Sigma}}_k\|&\leq \eta^2\lambda_1^2\left[\sum_{i=0}^{t}\left\|\left(\mathbf{I}-\frac{\eta}{n}\mathbf{Z}^\top\mathbf{Z}\right)^{2(t-i)}\right\|\right] \\
            &\leq \eta^2\lambda_1^2 \sum_{i=0}^{t}\kappa_{\boldsymbol{\Theta}}^{2i}(\tau)\\
            &\leq \frac{\eta^2\lambda_1^2}{1-\kappa_{\boldsymbol{\Theta}}^2(\tau)}.
        \end{align*}
        A standard result following Lemma 1 of \citep{laurent2000adaptive} gives the following bound holds with probability at least $1-\frac{1}{p}e^{-\tau}$:
        \begin{align*}
            \begin{split}
            \|\mathbf{N}_k^{(t)}\|&\leq \sqrt{tr(\tilde{\Sigma}_k)}+\sqrt{2\|\tilde{\Sigma}_k\|\left(\log(p)+\tau\right)}\\
            &\leq \sqrt{q\|\tilde{\Sigma}_k\|}+\sqrt{2\|\tilde{\Sigma}_k\|\left(\log(p)+\tau\right)}\\
            &\leq \frac{\eta\lambda_1}{\sqrt{1-\kappa_{\boldsymbol{\Theta}}^2(\tau)}}\left(\sqrt{q}+\sqrt{2\left(\log(p)+\tau\right)}\right)
            \end{split}
        \end{align*}
        Taking the union bound over each column $k=1,\ldots, p$, we have the following holds with probability at least $1-e^{-\tau}$, conditional on $E_{\kappa_{\boldsymbol{\Theta}}(\tau)}$:
        \begin{align*}
            \|\mathbf{N}^{(t)}\|&\leq \frac{\eta\lambda_1}{\sqrt{1-\kappa_{\boldsymbol{\Theta}}^2(\tau)}}\left(\sqrt{pq}+\sqrt{2p\left(\log(p)+\tau\right)}\right),
        \end{align*}
        and
        \begin{align*}
            \|\mathbf{e}_{\boldsymbol{\Theta}}^{(t)}\|\leq \kappa_{\boldsymbol{\Theta}}^t(\tau)\|\mathbf{e}_{\boldsymbol{\Theta}}^{(0)}\|+\frac{\eta\lambda_1}{\sqrt{1-\kappa_{\boldsymbol{\Theta}}^2(\tau)}}\left(\sqrt{pq}+\sqrt{2p\left(\log(p)+\tau\right)}\right)
        \end{align*}
        Finally, uncondition on $E_{\kappa_{\boldsymbol{\Theta}}(\tau)}$ and take the union bound over the event $E_{\kappa_{\boldsymbol{\Theta}}(\tau)}$ gives the desired result.
    \end{proof}
\subsection{Proof of Lemma \ref{lem: Psi bound}}\label{sec: proof of lem: Psi bound}
\begin{proof}
    We have
    \begin{equation}\label{eq: Psi norm}
        \begin{aligned}
            \|\boldsymbol{\Psi}\|&=\|(\mathbf{Z}^\top\mathbf{Z})^{-1}\mathbf{Z}^\top\mathbf{E_2}\|\\
            &\leq \frac{\|\mathbf{Z}^\top\boldsymbol{\mathcal{E}}_2\|}{\sigma_{\min}^2(\mathbf{Z})}
        \end{aligned}
    \end{equation}
    From Lemma \ref{lem: Z bound}, we have when $n\geq C_0^2\sigma_z^4(\sqrt{q}+\sqrt{\tau})^2,$ with probability at least $1-2e^{-\tau}$, we have 
    \begin{align}\label{eq: sigma_min Z}
        \sigma_{\min}^2(\mathbf{Z})=\lambda_{\min}\left(\mathbf{Z}^\top\mathbf{Z}\right)\geq n(1-\delta(\tau))^2,
    \end{align}
    For the numerator, from Lemma \ref{lem: ZE2 bound}, we have with probability at least $1-e^{-\tau}$,
    \begin{align}\label{eq: ZE2 bound}
        \|\mathbf{Z}^\top\boldsymbol{\mathcal{E}}_2\|\leq c_0\sigma_z\sigma_2\sqrt{npq}\left(\tau+\log(2pq)\right).
    \end{align}
    Finally, plug in \eqref{eq: sigma_min Z} and \eqref{eq: ZE2 bound} into \eqref{eq: Psi norm}, we have with probability at least $1-3e^{-\tau}$,
    \begin{align*}
        \|\boldsymbol{\Psi}\|\leq \frac{c_0\sigma_z\sigma_2\sqrt{npq}\left(\tau+\log(2pq)\right)}{n\left(1-\delta(\tau)\right)^2}=\frac{c_0\sigma_z\sigma_2\sqrt{pq}\left(\tau+\log(2pq)\right)}{\sqrt{n}\left(1-\delta(\tau)\right)^2}.
    \end{align*}
\end{proof}

\subsection{Proof of Lemma \ref{lem: lambda_min H star}}\label{sec: proof of lem: lambda_min H star}
\begin{proof}
    We decompose $\hat{\boldsymbol{\Theta}}:=\boldsymbol{\Theta}+\boldsymbol{\Psi}$, where $\boldsymbol{\Psi}:=(\mathbf{Z}^\top\mathbf{Z})^{-1}\mathbf{Z}^\top\mathbf{E_2}$. We have
    \begin{align}\label{eq: lambda_min H star}
    \begin{split}
         \lambda_{\min}(\hat{\mathbf{H}})&=\lambda_{\min}\left(\frac{1}{n}(\boldsymbol{\Theta}+\boldsymbol{\Psi})^\top\mathbf{Z}^\top\mathbf{Z}(\boldsymbol{\Theta}+\boldsymbol{\Psi})\right)\\
        &\geq\lambda_{\min}(\frac{\mathbf{Z}^\top\mathbf{Z}}{n})\lambda_{\min}\left((\boldsymbol{\Theta}+\boldsymbol{\Psi})^\top(\boldsymbol{\Theta}+\boldsymbol{\Psi})\right)\\
        &=\lambda_{\min}(\frac{\mathbf{Z}^\top\mathbf{Z}}{n})\sigma_{\min}^2(\boldsymbol{\Theta}+\boldsymbol{\Psi})
    \end{split}
    \end{align}
    It remains to give a high probability bound for $\sigma_{\min}^2(\mathbf{Z})$ and $\sigma_{\min}^2(\boldsymbol{\Theta}+\boldsymbol{\Psi})$. For the first term, from Lemma \ref{lem: Z bound}, we have when $n\geq C_0^2\sigma_z^4(\sqrt{q}+\sqrt{\tau})^2,$ with probability at least $1-2e^{-\tau}$, we have 
    \begin{align}\label{eq: sigma_min Z 2}
        \sigma_{\min}^2(\mathbf{Z})=\lambda_{\min}\left(\frac{\mathbf{Z}^\top\mathbf{Z}}{n}\right)\geq (1-\delta(\tau))^2,
    \end{align}
    where $\delta(\tau):=\frac{C_0\sigma_z^2(\sqrt{q}+\sqrt{\tau})}{\sqrt{n}}$.
    For the second term, we apply Werl's inquality:
    \begin{align}\label{eq: Werl's ineq}
        \sigma_{\min}(\boldsymbol{\Theta}+\boldsymbol{\Psi})\geq \sigma_{\min}(\boldsymbol{\Theta})-\|\boldsymbol{\Psi}\|.
    \end{align}
    From Lemma \ref{lem: Psi bound}, we have with probability at least $1-3e^{-\tau}$,
    \begin{align}\label{eq: Psi bound 2}
        \|\boldsymbol{\Psi}\|\leq \frac{c_0\sigma_z\sigma_2\sqrt{pq}\left(\tau+\log(2pq)\right)}{\sqrt{n}\left(1-\delta(\tau)\right)^2}.
    \end{align}
    Note that the RHS of \eqref{eq: Werl's ineq} should be greater than $0$, which requires $n=\Omega\left(pq(\tau+\log(pq))^2\right)$. Plug in \eqref{eq: sigma_min Z 2}\eqref{eq: Werl's ineq}\eqref{eq: Psi bound 2} into \eqref{eq: lambda_min H star}, we have:
    \begin{align}\label{eq: lambda min H star bound}
    \begin{split}
        \lambda_{\min}(\hat{\mathbf{H}})&\geq (1-\delta(\tau))^2\left(\sigma_{\min}(\boldsymbol{\Theta})-\|\boldsymbol{\Psi}\|\right)^2\\
        &\geq (1-\delta(\tau))^2\left(\sigma_{\min}(\boldsymbol{\Theta})-\frac{c_0\sigma_z\sigma_2\sqrt{pq}\left(\tau+\log(2pq)\right)}{\sqrt{n}\left(1-\delta(\tau)\right)^2}\right)^2.
    \end{split}
    \end{align}
    Similarly, we have
    \begin{align}\label{eq: lambda_max H star}
    \begin{split}
         \lambda_{\max}(\hat{\mathbf{H}})&=\lambda_{\max}\left(\frac{1}{n}(\boldsymbol{\Theta}+\boldsymbol{\Psi})^\top\mathbf{Z}^\top\mathbf{Z}(\boldsymbol{\Theta}+\boldsymbol{\Psi})\right)\\
        &\leq\lambda_{\max}(\frac{\mathbf{Z}^\top\mathbf{Z}}{n})\lambda_{\max}\left((\boldsymbol{\Theta}+\boldsymbol{\Psi})^\top(\boldsymbol{\Theta}+\boldsymbol{\Psi})\right)\\
        &\leq (1+\delta(\tau))^2\left(\|\boldsymbol{\Theta}\|+\|\boldsymbol{\Psi}\|\right)^2\\
        &\leq (1+\delta(\tau))^2\left(\|\boldsymbol{\Theta}\|+\frac{c_0\sigma_z\sigma_2\sqrt{pq}\left(\tau+\log(2pq)\right)}{\sqrt{n}\left(1-\delta(\tau)\right)^2}\right)^2,
    \end{split}
    \end{align}
    which completes the proof.
\end{proof}
\subsection{Proof of Lemma \ref{lem: beta hat error bound}}\label{sec: proof of lem: beta hat error bound}
\begin{proof}
    We have 
    \begin{align*}
        \hat{\boldsymbol{\beta}}-\boldsymbol{\beta}&=\left(\hat{\boldsymbol{\Theta}}^\top\mathbf{Z}^\top\mathbf{Z}\hat{\boldsymbol{\Theta}}\right)^{-1}\hat{\boldsymbol{\Theta}}^\top\mathbf{Z}^\top\mathbf{Y}-\boldsymbol{\beta}\\
        &=\left(\mathbf{X}^\top\mathbf{Z}(\mathbf{Z}^\top\mathbf{Z})^{-1}\mathbf{Z}^\top\mathbf{X}\right)^{-1}\mathbf{X}^\top\mathbf{Z}(\mathbf{Z}^\top\mathbf{Z})^{-1}\mathbf{Z}^\top\mathbf{Y}-\boldsymbol{\beta}\\
        &=\left(\mathbf{X}^\top\mathbf{Z}(\mathbf{Z}^\top\mathbf{Z})^{-1}\mathbf{Z}^\top\mathbf{X}\right)^{-1}\mathbf{X}^\top\mathbf{Z}(\mathbf{Z}^\top\mathbf{Z})^{-1}\mathbf{Z}^\top\boldsymbol{\mathcal{E}}_1\\
        &=\frac{1}{n}(\hat{\mathbf{H}})^{-1}\mathbf{X}^\top\mathbf{Z}(\mathbf{Z}^\top\mathbf{Z})^{-1}\mathbf{Z}^\top\boldsymbol{\mathcal{E}}_1.
    \end{align*}
    So that
    \begin{align}\label{eq: beta hat error}
        \|\hat{\boldsymbol{\beta}}-\boldsymbol{\beta}\|&\leq \frac{1}{n}\|(\hat{\mathbf{H}})^{-1}\|\|\mathbf{X}^\top\mathbf{Z}\|\|(\mathbf{Z}^\top\mathbf{Z})^{-1}\|\|\mathbf{Z}^\top\boldsymbol{\mathcal{E}}_1\|
    \end{align}
    From Lemma \ref{lem: Z bound} and Lemma \ref{lem: lambda_min H star}, when $n\geq C_1 pq(\tau+\log(pq))^2$, with probability at least $1-3e^{-\tau}$, we have the following bounds:
    \begin{align}\label{eq: beta hat bound term 1}
         \|(\mathbf{Z}^\top\mathbf{Z})^{-1}\|\lesssim \frac{1}{n}, \quad \|(\hat{\mathbf{H}})^{-1}\|&\lesssim 1.
    \end{align}
    Similar to \eqref{eq: ZE2 bound}, we have with probability at least $1-e^{-\tau}$,
    \begin{align}\label{eq: beta hat bound term 3}
        \|\mathbf{Z}^\top\boldsymbol{\mathcal{E}}_1\|\leq c_0\sigma_z\sigma_1\sqrt{nq}\left(\tau+\log(2q)\right)=\mathcal{O}\left(\sqrt{nq}\left(\tau+\log(q)\right)\right).
    \end{align}
    It remains to derive a bound for $\|\mathbf{X}^\top\mathbf{Z}\|$. We have
    \begin{align*}
        \mathbf{X}^\top\mathbf{Z}&=(\mathbf{Z}\boldsymbol{\Theta})^\top\mathbf{Z}+\boldsymbol{\mathcal{E}}_2^\top\mathbf{Z}\\
        &=\boldsymbol{\Theta}^\top\mathbf{Z}^\top\mathbf{Z}+\boldsymbol{\mathcal{E}}_2^\top\mathbf{Z},
    \end{align*}
    where from Lemma \ref{lem: Z bound}, we have with probability at least $1-2e^{-\tau}$, 
    \begin{align*}
        \|\mathbf{Z}^\top\mathbf{Z}\|\leq n(1+\delta(\tau))^2,
    \end{align*}
    and from \eqref{eq: ZE2 bound}, with probability at least $1-e^{-\tau}$,
    \begin{align*}
        \|\mathbf{Z}^\top\boldsymbol{\mathcal{E}}_2\|\leq c_0\sigma_z\sigma_2\sqrt{npq}\left(\tau+\log(2pq)\right)=\mathcal{O}\left(\sqrt{npq}\left(\tau+\log(pq)\right)\right),
    \end{align*}
    so we have with probability at least $1-3e^{-\tau}$,
    \begin{align}\label{eq: beta hat bound term 4}
        \|\mathbf{X}^\top\mathbf{Z}\|&\leq n(1+\delta(\tau))^2\|\boldsymbol{\Theta}\|+c_0\sigma_z\sigma_2\sqrt{npq}\left(\tau+\log(2pq)\right)=\mathcal{O}\left(n+\sqrt{npq}\left(\tau+\log(pq)\right)\right).
    \end{align}
    From \eqref{eq: beta hat error}, with \eqref{eq: beta hat bound term 1}\eqref{eq: beta hat bound term 3}\eqref{eq: beta hat bound term 4}, when $n\geq C_0^2\sigma_z^4(\sqrt{q}+\sqrt{\tau})^2$, with probability at least $1-4e^{-\tau}$, 
    \begin{equation*}
    \begin{aligned}
         \|\hat{\boldsymbol{\beta}}-\boldsymbol{\beta}\|&\lesssim \frac{\sqrt{nq}\left(\tau+\log(q)\right)\left(n+\sqrt{npq}\left(\tau+\log(pq)\right)\right)}{n^2}\\
         &=\frac{\sqrt{q}\left(\tau+\log(q)\right)}{\sqrt{n}}+\frac{q\sqrt{p}(\tau+\log(q))(\tau+\log(pq))}{n}.
    \end{aligned}
    \end{equation*}
    When $n =\Omega\left(pq(\tau+\log(pq))^2\right)$, the above expression can be further simplified to 
    \begin{align}\label{eq: beta hat - beta bound}
        \|\hat{\boldsymbol{\beta}}-\boldsymbol{\beta}\|&\lesssim \frac{\sqrt{q}(\tau+\log(q))}{\sqrt{n}},
    \end{align}
    which concludes the proof.
\end{proof}
\subsection{Proof of Lemma \ref{lem: Zr bound}}\label{sec: proof of lem: Zr bound}
\begin{proof}
         We can decompose $\mathbf{r}$ as:
    \begin{equation*}
    \begin{aligned}
    \mathbf{r}&=\mathbf{Z}\hat{\boldsymbol{\Theta}}\hat{\boldsymbol{\beta}}-\mathbf{Y}=\mathbf{Z}\hat{\boldsymbol{\Theta}}\hat{\boldsymbol{\beta}}-(\mathbf{Z}\boldsymbol{\Theta}+\boldsymbol{\mathcal{E}}_{2})\boldsymbol{\beta}-\boldsymbol{\mathcal{E}}_{1}\\
    &=\mathbf{Z}\hat{\boldsymbol{\Theta}}\hat{\boldsymbol{\beta}}-\mathbf{Z}\boldsymbol{\Theta}\boldsymbol{\beta}-\boldsymbol{\mathcal{E}}_{2}\boldsymbol{\beta}-\boldsymbol{\mathcal{E}}_{1}\\
    &=\mathbf{Z}\hat{\boldsymbol{\Theta}}\hat{\boldsymbol{\beta}}-\mathbf{Z}\boldsymbol{\Theta}\hat{\boldsymbol{\beta}}+\mathbf{Z}\boldsymbol{\Theta}\hat{\boldsymbol{\beta}}-\mathbf{Z}\boldsymbol{\Theta}\boldsymbol{\beta}-\boldsymbol{\mathcal{E}}_{2}\boldsymbol{\beta}-\boldsymbol{\mathcal{E}}_{1}\\
    &=\mathbf{Z}\left(\hat{\boldsymbol{\Theta}}-\boldsymbol{\Theta}\right)\hat{\boldsymbol{\beta}} +\mathbf{Z}\boldsymbol{\Theta}\left(\hat{\boldsymbol{\beta}}-\boldsymbol{\beta}\right)-\boldsymbol{\mathcal{E}}_{2}\boldsymbol{\beta}-\boldsymbol{\mathcal{E}}_{1}
    \end{aligned}
    \end{equation*}
    and 
    \begin{equation} \label{eq: Zr}
        \begin{aligned}
            \mathbf{Z}^\top\mathbf{r}&=\mathbf{Z}^\top\mathbf{Z}\left(\hat{\boldsymbol{\Theta}}-\boldsymbol{\Theta}\right)\hat{\boldsymbol{\beta}} +\mathbf{Z}^\top\mathbf{Z}\boldsymbol{\Theta}\left(\hat{\boldsymbol{\beta}}-\boldsymbol{\beta}\right)-\mathbf{Z}^\top\left(\boldsymbol{\mathcal{E}}_{2}\boldsymbol{\beta}+\boldsymbol{\mathcal{E}}_{1}\right)
        \end{aligned}
    \end{equation}
    It suffices to bound $\|\mathbf{Z}^\top\mathbf{Z}\|$, $\|\hat{\boldsymbol{\Theta}}-\boldsymbol{\Theta}\|$, $\|\hat{\boldsymbol{\beta}}-\boldsymbol{\beta}\|$, and $\|\mathbf{Z}^\top\left(\boldsymbol{\mathcal{E}}_2\boldsymbol{\beta}+\boldsymbol{\mathcal{E}}_{1}\right)\|$.
    From Lemma \ref{lem: Z bound}, we have with probability at least $1-2e^{-\tau}$,
    \begin{align*}
        \|\mathbf{Z}^\top\mathbf{Z}\|&\leq n\left(1+\delta(\tau)\right)^2\lesssim n.
    \end{align*}
    From Lemma \ref{lem: Psi bound}, we can take
    \begin{align*}
        \left\|\hat{\boldsymbol{\Theta}}-\boldsymbol{\Theta}\right\|\leq \frac{c_0\sigma_z\sigma_2\sqrt{pq}\left(\tau+\log(2pq)\right)}{\sqrt{n}\left(1-\delta(\tau)\right)^2}=\mathcal{O}\left(\frac{\sqrt{pq}\left(\tau+\log(pq)\right)}{\sqrt{n}}\right).
    \end{align*}
    From Lemma \ref{lem: beta hat error bound},
    \begin{align*}
    \left\|\hat{\boldsymbol{\beta}}-\boldsymbol{\beta}\right\|&\lesssim\frac{\sqrt{q}\left(\tau+\log(q)\right)}{\sqrt{n}}.
    \end{align*}
    For the error $\mathbf{E}_{total}:=\boldsymbol{\mathcal{E}}_2\boldsymbol{\beta}+\boldsymbol{\mathcal{E}}_1$, note that 
    $\mathbf{E}_{total,i}=\sum_{j=1}^p \boldsymbol{\mathcal{E}}_{2,ij}\boldsymbol{\beta}_j+\boldsymbol{\mathcal{E}}_{1,i}$ is zero-mean sub-Gaussian with parameter $\tilde{\sigma}:=\sqrt{\sigma_2^2\|\boldsymbol{\beta}\|^2+\sigma_1^2}$, and hence the sub-exponential norm of $\mathbf{Z}^\top\mathbf{E}_{total}$ can be bounded by
    \begin{align*}
        \|(\mathbf{Z}^\top\mathbf{E}_{total})_k\|_{\psi_1}=\|\sum_{i=1}^n \mathbf{Z}_{i,k}\mathbf{E}_{total,i}\|_{\psi_1}\leq \sigma_z\tilde{\sigma}\sqrt{n}
    \end{align*}
    Thus we have the tail bound:
    \begin{align*}
        \mathbb{P}\left(|(\mathbf{Z}^\top\mathbf{E}_{total})_k|\geq \tau\right)\leq 2e^{-\frac{\tau}{c_0\sigma_z\tilde{\sigma}\sqrt{n}}}.
    \end{align*}
    Taking the union bound over $k=1,\ldots, q$, we have
    \begin{align*}
        \mathbb{P}\left(\|\mathbf{Z}^\top\mathbf{E}_{total}\|\geq \tau\right)\leq\mathbf{P}\left(\|\mathbf{Z}^\top\mathbf{E}_{total}\|_{\infty}\geq \frac{\tau}{\sqrt{q}}\right)\leq 2q e^{-\frac{\tau}{c_0\sigma_z\tilde{\sigma}\sqrt{nq}}}.
    \end{align*}
    Equivalently, with probability at least $1-e^{-\tau}$,
    \begin{align*}
        \|\mathbf{Z}^\top\mathbf{E}_{total}\|\leq c_0\sigma_z\tilde{\sigma}\sqrt{nq}\left(\tau+\log(2q)\right)=\mathcal{O}\left(\sqrt{nq}\left(\tau+\log(q)\right)\right).
    \end{align*}
    Plugging these bounds into \eqref{eq: Zr}, we have with probability at least $1-3e^{-\tau}$,
    \begin{align*}
        \|\mathbf{Z}^\top\mathbf{r}\| &\leq \|\mathbf{Z}^\top\mathbf{Z}\|\delta_{\hat{\Theta}}(\|\boldsymbol{\beta}\|+\delta_{\hat{\beta}})+\|\mathbf{Z}^\top\mathbf{Z}\|\|\boldsymbol{\Theta}\|\delta_{\hat{\beta}}+\|\mathbf{Z}^\top\mathbf{E}_{total}\|\\
        &=\|\mathbf{Z}^\top\mathbf{Z}\|(\|\boldsymbol{\Theta}\|+\delta_{\hat{\Theta}})\delta_{\hat{\beta}}+\|\mathbf{Z}^\top\mathbf{Z}\|\|\boldsymbol{\beta}\|\delta_{\hat{\Theta}}+\|\mathbf{Z}^\top\mathbf{E}_{total}\|\\
        &\lesssim n\left(1+\frac{\sqrt{pq}\left(\tau+\log(pq)\right)}{\sqrt{n}}\right)\frac{\sqrt{q}\left(\tau+\log(q)\right)}{\sqrt{n}}+n\frac{\sqrt{pq}\left(\tau+\log(pq)\right)}{\sqrt{n}}+\sqrt{nq}\left(\tau+\log(q)\right)\\
        &\lesssim \sqrt{npq}\left(\tau+\log(pq)\right).
    \end{align*}
\end{proof}
\subsection{Proof of Lemma \ref{lem: L(t) product bound}}\label{sec: proof of lem: L(t) product bound}
\begin{proof}
    We have
\begin{align*}
    \|\mathbf{L}^{(t)}\|&=\|\mathbf{I}-\frac{\alpha}{n}\boldsymbol{\Theta}^{(t)\top}\mathbf{Z}^\top\mathbf{Z}\boldsymbol{\Theta}^{(t)}\|\\
    &=\|\mathbf{I}-\frac{\alpha}{n}\left(\boldsymbol{\Theta}^{(t)}-\hat{\boldsymbol{\Theta}}+\hat{\boldsymbol{\Theta}}\right)\mathbf{Z}^\top\mathbf{Z}\left(\boldsymbol{\Theta}^{(t)}-\hat{\boldsymbol{\Theta}}+\hat{\boldsymbol{\Theta}}\right)^\top\|\\
    &=\|\mathbf{I}-\frac{\alpha}{n}\left(\mathbf{e}_{\boldsymbol{\Theta}}^{(t)}+\hat{\boldsymbol{\Theta}}\right)\mathbf{Z}^\top\mathbf{Z}\left(\mathbf{e}_{\boldsymbol{\Theta}}^{(t)}+\hat{\boldsymbol{\Theta}}\right)^\top\|\\
    &\leq 1+\frac{\alpha}{n}\|\mathbf{Z}^\top\mathbf{Z}\|\left(\|\mathbf{e}_{\boldsymbol{\Theta}}^{(t)}\|+\|\hat{\boldsymbol{\Theta}}\|\right)^2\\
    &\lesssim 1+\left(\|\mathbf{e}_{\boldsymbol{\Theta}}^{(t)}\|+1\right)^2,
\end{align*}
where $\mathbf{e}_{\boldsymbol{\Theta}}^{(t)}:=\boldsymbol{\Theta}^{(t)}-\hat{\boldsymbol{\Theta}}$. Note that from Lemma \ref{lem: N bound}, with parameters choice \eqref{eq: parameter settings} and sample size condition \eqref{eq: n condition}, we have $\|\mathbf{e}_{\boldsymbol{\Theta}}^{(t)}\|\lesssim 1, \forall t=0,1,\ldots,\lceil C_2\rceil-1$. So that there exists a constant $c_L$, such that $\|\mathbf{L}^{(t)}\|\leq c_L, \forall t=0,1,\ldots,\lceil C_2\rceil-1$, where $C_2$ is the upper bound of $t_0(n)$ in \eqref{eq: T0 condition}. Besides, when $0<\alpha<\frac{4}{2\bar{\gamma}(\tau)+\underline{\gamma}(\tau)}$, from \eqref{eq: rho_beta}, we have $\|\mathbf{L}^{(t)}\|<1, \forall t=\lceil C_2\rceil, \ldots, T_0-1$. Therefore, we have
\begin{align*}
    \prod_{t=0}^{T_0-1}\|\mathbf{L}^{(t)}\|&\leq c_L^{\lceil C_2\rceil}\lesssim 1,
\end{align*}
which concludes the proof.
\end{proof}

\section{Additional Discussions}
\subsection{Privacy for $\beta$ only}\label{sec: privacy for beta only}
In Algorithm \ref{alg: DP-2S-GD-II}, the privacy parameter $\rho$ is with respect to $\boldsymbol{\Theta}^{(1)},\ldots,\boldsymbol{\Theta}^{(T)}, \boldsymbol{\beta}^{(1)}, \ldots, \boldsymbol{\beta}^{(T)}$. However, in some applications, we may only care about the privacy of the major estimator $\boldsymbol{\beta}^{(1)}, \ldots, \boldsymbol{\beta}^{(T)}$. We note that in Algorithm \ref{alg: DP-2S-GD-II}, one can modify the output to only include $\boldsymbol{\beta}^{(1)}, \ldots, \boldsymbol{\beta}^{(T)}$ while still maintaining the privacy guarantees. We have the following lemma:

\begin{lemma}
    For $\rho_1 \in (0, \infty]$ and $\lambda_1 \in [0,\infty)$  Algorithm \ref{alg: DP-2S-GD-II} is $\rho$-zCDP for output $\boldsymbol{\beta}^{(1)},\ldots,\boldsymbol{\beta}^{(T)}$, where $\rho:=\rho_2=\frac{2 T\gamma_2^2}{n^2\lambda_2^2}$.
\end{lemma}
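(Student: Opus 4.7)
The plan is to adapt the per-iteration Gaussian-mechanism accounting used in the proof of Proposition~\ref{lem: privacy II - main}, retaining only the second-stage $\boldsymbol{\beta}$-release since the first-stage iterates $\{\boldsymbol{\Theta}^{(t)}\}_{t=1}^T$ are now kept internal to the curator. In particular, the first-stage noise scale $\lambda_1$ no longer needs to be calibrated against a privacy budget and is permitted to take the degenerate value $0$ (corresponding to $\rho_1=\infty$), because the $\boldsymbol{\Theta}^{(t)}$ iterates are never disclosed to the adversary.

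First I would isolate the $\boldsymbol{\beta}^{(t+1)}$-update as a single-shot Gaussian mechanism. Per-sample clipping by $\text{CLIP}_{\gamma_2}$ bounds each summand in $\sum_{i=1}^n \text{CLIP}_{\gamma_2}(\boldsymbol{\Theta}^{(t)\top}\mathbf{z}_i(\mathbf{z}_i^\top\boldsymbol{\Theta}^{(t)}\boldsymbol{\beta}^{(t)}-y_i))$ by $\gamma_2$ in $L_2$ norm, so replacing a single record in $D$ perturbs the clipped sum by at most $2\gamma_2$, yielding an averaged-gradient sensitivity $\Delta_2=2\gamma_2/n$ after the $1/n$ scaling---exactly as in the $\boldsymbol{\beta}$-bullet of Proposition~\ref{lem: privacy II - main}. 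Injecting the Gaussian perturbation $\alpha\boldsymbol{\nu}^{(t)}$ with $\boldsymbol{\nu}^{(t)}\sim\mathcal{N}(\mathbf{0},\lambda_2^2\mathbf{I}_p)$, the standard Gaussian-mechanism-to-zCDP calibration then gives a per-iteration cost of $\tfrac{\Delta_2^2}{2\lambda_2^2}=\tfrac{2\gamma_2^2}{n^2\lambda_2^2}$.

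Summing the $T$ per-iteration contributions via linear composition of zCDP yields the advertised total budget $\rho_2=\tfrac{2T\gamma_2^2}{n^2\lambda_2^2}$; the fact that $\lambda_1$ (and the corresponding $\rho_1$) is free to take any value in $[0,\infty)$ follows because the $\boldsymbol{\Theta}^{(t)}$-iterates are consumed only internally and are never output, so they contribute to the computation but not to the tally of public releases.

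The principal subtlety is that the sensitivity $\Delta_2=2\gamma_2/n$ is strictly a \emph{conditional} quantity, obtained while holding the internal state $(\boldsymbol{\Theta}^{(t)},\boldsymbol{\beta}^{(t)})$ fixed, even though $\boldsymbol{\Theta}^{(t)}$ is itself data-dependent when $\lambda_1=0$. To formalize this I would follow the adaptive-composition framework of \citet{bun2016concentrated}, analyzing each $\boldsymbol{\beta}$-release as a conditional Gaussian mechanism given the internal state carried over from the previous iteration, and then closing the argument by post-processing so that marginalizing $\{\boldsymbol{\Theta}^{(t)}\}$ out of the adversary's view does not increase the R\'enyi divergence of the released sequence $\{\boldsymbol{\beta}^{(t)}\}_{t=1}^T$.
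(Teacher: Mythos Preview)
The paper states this lemma without proof, so there is no argument to compare against directly; the question is whether your sketch is sound.

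Your per-iteration accounting is fine \emph{conditional on} $(\boldsymbol{\Theta}^{(t)},\boldsymbol{\beta}^{(t)})$, and you correctly flag the key difficulty in your last paragraph. However, the resolution you propose does not work. Adaptive composition in \citet{bun2016concentrated} allows a later mechanism to depend on the \emph{released} outputs of earlier mechanisms; it does not license conditioning the sensitivity calculation on a data-dependent internal state that is never released. Concretely, take $\lambda_1=0$: then $\boldsymbol{\Theta}^{(t)}$ is a deterministic function of the full dataset, and swapping record $j$ changes $\boldsymbol{\Theta}^{(t)}$, which perturbs \emph{every} clipped summand $\text{CLIP}_{\gamma_2}\!\bigl(\boldsymbol{\Theta}^{(t)\top}\mathbf{z}_i(\mathbf{z}_i^\top\boldsymbol{\Theta}^{(t)}\boldsymbol{\beta}^{(t)}-y_i)\bigr)$ for $i\neq j$, not just the $j$-th one. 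The averaged-gradient sensitivity is therefore not $2\gamma_2/n$ in general. Your post-processing closure---marginalizing $\{\boldsymbol{\Theta}^{(t)}\}$ out of the adversary's view---would be valid only if the \emph{joint} release $(\{\boldsymbol{\Theta}^{(t)}\},\{\boldsymbol{\beta}^{(t)}\})$ were already $\rho_2$-zCDP; but with $\lambda_1=0$ the $\boldsymbol{\Theta}$-releases carry no privacy at all, and with $\lambda_1>0$ Proposition~\ref{lem: privacy II - main} gives only $(\rho_1+\rho_2)$-zCDP for the joint release, so post-processing yields $(\rho_1+\rho_2)$-zCDP for $\{\boldsymbol{\beta}^{(t)}\}$, not $\rho_2$.

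To obtain the claimed $\rho_2$ bound uniformly in $\lambda_1\in[0,\infty)$ one would need a genuinely different argument that tracks how the one-record perturbation propagates through the unreleased $\boldsymbol{\Theta}^{(t)}$ iterates into all $n$ second-stage summands and shows that this extra contribution is absorbed by the $2\gamma_2/n$ budget---something neither the paper nor your outline supplies.
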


Suppose that $\rho_1=\infty$, i.e. we remove $\boldsymbol{\Xi}$, the additive noise of the first stage. One can show that we can get a slightly tighter bound for \eqref{eq: e_beta(T) bound thm2}. However, for any fixed $\rho_2$, we observe that there is no improvement on the rate of convergence than Theorem \ref{thm: main result II}.

Consider the following algorithm:
\begin{algorithm}[H]
    \caption{DP-2S-GD-$\boldsymbol{\beta}$}\label{alg: DP-2S-GD-I}
    \begin{algorithmic}[1]
    \State \textbf{Input:} Data $\mathbf{Z}\in\mathbb{R}^{n\times q}$, $\mathbf{X}\in\mathbb{R}^{n\times p}$, $\mathbf{Y}\in\mathbb{R}^n$
    \State \textbf{Parameters:} Clipping threshold $\gamma_2 > 0$, noise scale $\lambda_2 > 0$, step sizes $\alpha, \eta > 0$, number of iterations $T$, initial estimates $\boldsymbol{\beta}^{(0)} =\mathbf{0}_p$, $\boldsymbol{\Theta}^{(0)} =\mathbf{0}_{q \times p}$
    \For{$t = 0, 1, \ldots, T-1$}
        \State Draw $\boldsymbol{\nu}^{(t)}\sim \mathcal{N}(0, \lambda_{2}^2\mathbf{I}_{p})$.
        \State $\boldsymbol{\Theta}^{(t+1)}=\boldsymbol{\Theta}^{(t)}-\frac{\eta}{n}\sum_{i=1}^n\mathbf{z}_i(\mathbf{z}_i^\top\boldsymbol{\Theta}^{(t)}-\mathbf{x}_i^\top)$
        \State $\boldsymbol{\beta}^{(t+1)}=\boldsymbol{\beta}^{(t)}-\frac{\alpha}{n}\sum_{i=1}^{n}\text{CLIP}_{\gamma_2}\left\{\boldsymbol{\Theta}^{(t)\top}\mathbf{z}_i\left(\mathbf{z}_{i}^{\top}\boldsymbol{\Theta}^{(t)}\boldsymbol{\beta}^{(t)}-y_{i}\right)\right\}+\alpha\boldsymbol{\nu}^{(t)}$
    \EndFor        
    \State \Return $\boldsymbol{\beta}^{(1)}, \ldots, \boldsymbol{\beta}^{(T)}$
    \end{algorithmic}
\end{algorithm}
We have the following theorem:
\begin{thm}\label{thm: main result I}
    For any fixed $\boldsymbol{\Theta}\in\mathbb{R}^{q\times p}$ and $\boldsymbol{\beta}\in\mathbb{R}^{p}$, consider the Algorithm \ref{alg: DP-2S-GD-I} with step sizes satisfying 
\begin{align*}
0<\eta<\frac{2}{(1+\delta(\tau))^2}, \quad 0<\alpha<\frac{4}{2\bar{\gamma}(\tau)+\underline{\gamma}(\tau)},
\end{align*}
under Assumption \ref{asp: Z}, with parameters 
\begin{equation*}
   \begin{aligned}
       \lambda_1 = \frac{2\gamma_{1}}{n}\sqrt{\frac{T}{\rho_1}},\quad\lambda_2 = \frac{2\gamma_{2}}{n}\sqrt{\frac{T}{\rho_2}},\quad
    \gamma_1 = \gamma_2 = c_0\left(\sqrt{q}+\sqrt{\tau+\log(nT)}\right)^2,
   \end{aligned}
\end{equation*}
if
\begin{align*}
        n\geq c_1\max\left\{pq(\tau+\log(pq))^2, \frac{\left(\sqrt{q}+\sqrt{\tau}\right)^3}{\sqrt{\rho_2}}\right\}, 
\end{align*}
for any fixed $\tau$, with probability $1-c_2e^{-\tau}$, we have 
\begin{align*}
\begin{split}
    \|\boldsymbol{\beta}^{(T)}-\hat{\boldsymbol{\beta}}\|
    &\lesssim \kappa(\tau)^{\frac{T}{2}}+\frac{\sqrt{p}(\sqrt{q}+\sqrt{\tau})^3}{n\sqrt{\rho_2}}\sqrt{T}+\frac{\sqrt{pq}(\tau+\log(pq))}{\sqrt{n}},
    \end{split}
\end{align*}
where the definitions of $\kappa(\tau)$, $\bar{\gamma}(\tau)$, $\underline{\gamma}(\tau)$ and $\delta(\tau)$ are the same as in Theorem \ref{thm: main result II}.
\end{thm}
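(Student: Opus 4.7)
The plan is to specialize the argument of Theorem \ref{thm: main result II} to the regime $\rho_1 = \infty$ (equivalently, $\lambda_1 = 0$), in which the first-stage iteration becomes purely deterministic:
\[
\mathbf{e}_{\boldsymbol{\Theta}}^{(t+1)} = \left(\mathbf{I} - \frac{\eta}{n}\mathbf{Z}^\top \mathbf{Z}\right)\mathbf{e}_{\boldsymbol{\Theta}}^{(t)}.
\]
By Lemma \ref{lem: Z bound} and the step-size condition $0 < \eta < 2/(1+\delta(\tau))^2$, this operator has norm at most $\kappa_{\boldsymbol{\Theta}}(\tau) < 1$ with probability $1 - 2e^{-\tau}$, so $\|\mathbf{e}_{\boldsymbol{\Theta}}^{(t)}\| \leq \kappa_{\boldsymbol{\Theta}}(\tau)^t \|\hat{\boldsymbol{\Theta}}\|$ uniformly for every $t \in \{0, 1, \ldots, T\}$, \emph{without} any condition on $T$. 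This is the only substantive place where the proof differs from Theorem \ref{thm: main result II}: the privacy-induced noise floor $\mu(\tau) \lesssim R(\tau) \sqrt{T}/(n \sqrt{\rho_1})$ that forced the iteration-budget constraint \eqref{eq: T condition thm 1} no longer appears.

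From this point the proof follows Appendix \ref{sec: proof of main result II} essentially verbatim. I would first show that once $t \geq T_0$, where $T_0 = C_2$ is a constant chosen so that $\kappa_{\boldsymbol{\Theta}}(\tau)^{T_0} \|\hat{\boldsymbol{\Theta}}\| \leq \bar{\varepsilon}$ with $\bar{\varepsilon}$ from \eqref{eq: epsilon prime condition}, Lemma \ref{lem: lambda_min H star} combined with Weyl's inequality gives $\lambda_{\min}(\mathbf{H}^{(t)}) \geq \underline{\gamma}(\tau)/2$ and $\lambda_{\max}(\mathbf{H}^{(t)}) \leq \bar{\gamma}(\tau) + \underline{\gamma}(\tau)/2$, so that $\|\mathbf{I} - \alpha \mathbf{H}^{(t)}\| \leq \kappa_{\boldsymbol{\beta}}(\tau) < 1$ under the stated condition on $\alpha$. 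The no-clipping condition for $\gamma_2$ is inherited from the $\boldsymbol{\beta}$-gradient part of Lemma \ref{lem: no clipping II} (the $\boldsymbol{\Theta}$-gradient bound is not needed here), and this is precisely the source of the $n \gtrsim (\sqrt{q} + \sqrt{\tau})^3/\sqrt{\rho_2}$ term in the sample-size condition.

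Next, I would unroll the $\boldsymbol{\beta}$-recursion \eqref{eq: beta recursion 2} from $T_0$ to $T$ and bound $\|\mathbf{e}_{\boldsymbol{\beta}}^{(T)}\|$ by four contributions, exactly as in \eqref{eq: e_beta recursion}: (i) the initial-error term $\kappa_{\boldsymbol{\beta}}(\tau)^{T - T_0} \|\mathbf{e}_{\boldsymbol{\beta}}^{(T_0)}\|$, with $\|\mathbf{e}_{\boldsymbol{\beta}}^{(T_0)}\| = \mathcal{O}(1 + \lambda_2(\sqrt{p} + \sqrt{\tau}))$ since $T_0$ is bounded by a constant and $\|\hat{\boldsymbol{\beta}}\|, \|\boldsymbol{\Theta}^{(k)}\|$ are $\mathcal{O}(1)$ by Lemmas \ref{lem: Psi bound} and \ref{lem: beta hat error bound}; (ii) the cross-term $\tfrac{\alpha}{n} \|\mathbf{Z}^\top \mathbf{Z}\| \|\hat{\boldsymbol{\beta}}\| \sum_{k \geq T_0} \kappa_{\boldsymbol{\beta}}(\tau)^{T-1-k} \|\boldsymbol{\Theta}^{(k)}\| \|\mathbf{e}_{\boldsymbol{\Theta}}^{(k)}\|$, which collapses to $\mathcal{O}(\kappa_{\boldsymbol{\Theta}}(\tau)^{T_0}/(1 - \kappa_{\boldsymbol{\beta}}(\tau)))$ and can be absorbed into the $\kappa(\tau)^{T/2}$ term; (iii) the residual-driven term $\tfrac{\alpha}{n(1 - \kappa_{\boldsymbol{\beta}}(\tau))} \|\mathbf{Z}^\top \mathbf{r}\| \lesssim \sqrt{pq}(\tau + \log(pq))/\sqrt{n}$ via Lemma \ref{lem: Zr bound}; and (iv) the Gaussian-noise term $\tfrac{\alpha}{1 - \kappa_{\boldsymbol{\beta}}(\tau)} \|\tilde{\boldsymbol{\nu}}^{(T)}\|$, which, after substituting $\lambda_2 = 2\gamma_2 \sqrt{T/\rho_2}/n$ and $\gamma_2 = c_0 (\sqrt{q} + \sqrt{\tau + \log(nT)})^2$, yields $\sqrt{p}(\sqrt{q} + \sqrt{\tau})^3 \sqrt{T}/(n \sqrt{\rho_2})$.

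The main obstacle, such as it is, is bookkeeping rather than a genuinely new technical difficulty: verifying that every intermediate bound in the proof of Theorem \ref{thm: main result II} that depended on $\lambda_1$ degrades gracefully (in fact becomes sharper) when $\lambda_1 = 0$. Lemma \ref{lem: N bound} already yields the clean statement $\|\mathbf{e}_{\boldsymbol{\Theta}}^{(t)}\| \leq \kappa_{\boldsymbol{\Theta}}^t(\tau) \|\hat{\boldsymbol{\Theta}}\|$ once the additive-noise contribution is dropped, and the $\boldsymbol{\beta}$-analysis uses randomness in $\boldsymbol{\Theta}^{(t)}$ only through $\mathbf{Z}$. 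Consequently the final bound reads exactly as \eqref{eq: e_beta(T) bound final} with the block $R(\tau)\sqrt{T}/(n\sqrt{\rho_1})$ removed, which is precisely the rate claimed in Theorem \ref{thm: main result I}.
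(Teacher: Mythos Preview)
Your overall strategy matches the paper's: specialize the proof of Theorem \ref{thm: main result II} to $\lambda_1=0$, so that $\|\mathbf{e}_{\boldsymbol{\Theta}}^{(t)}\|\le\kappa_{\boldsymbol{\Theta}}(\tau)^t\|\hat{\boldsymbol{\Theta}}\|$ deterministically and the constraint \eqref{eq: T condition thm 1} on $T$ disappears. There is, however, a small but genuine slip in your item (ii). With $T_0=C_2$ a \emph{fixed} constant, the quantity $\kappa_{\boldsymbol{\Theta}}(\tau)^{T_0}/(1-\kappa_{\boldsymbol{\beta}}(\tau))$ that you obtain for the cross-term is itself a constant independent of $T$, and hence cannot be ``absorbed into the $\kappa(\tau)^{T/2}$ term'' as $T\to\infty$. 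The paper resolves this by taking the partition point $\tilde{T}_0:=\max\{T/2,C_2\}$ instead of $C_2$: then $\varepsilon=\kappa_{\boldsymbol{\Theta}}(\tau)^{\tilde{T}_0}\|\hat{\boldsymbol{\Theta}}\|\lesssim\kappa_{\boldsymbol{\Theta}}(\tau)^{T/2}$, which does give the desired decay, while the bound $\|\mathbf{e}_{\boldsymbol{\beta}}^{(\tilde{T}_0)}\|\lesssim 1+\lambda_2(\sqrt{p}+\sqrt{\tau})$ survives because the $\boldsymbol{\beta}$-error is nonincreasing once $t\ge C_2$. An equally valid fix that stays closer to your sketch is to keep $T_0=C_2$ but use the sharp pointwise decay $\|\mathbf{e}_{\boldsymbol{\Theta}}^{(k)}\|\le\kappa_{\boldsymbol{\Theta}}(\tau)^k\|\hat{\boldsymbol{\Theta}}\|$ inside the sum, giving $\sum_{k=T_0}^{T-1}\kappa_{\boldsymbol{\beta}}(\tau)^{T-1-k}\kappa_{\boldsymbol{\Theta}}(\tau)^k\le T\,\kappa(\tau)^{T-1}\lesssim\kappa(\tau)^{T/2}$. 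With either of these one-line repairs in place, the rest of your argument is correct and coincides with the paper's.
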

\begin{proof}
    The proof follows from similar approach as in the proof of Theorem \ref{thm: main result II}. However, in \eqref{eq: Theta recursion 2}, we can simplify as follows:
    \begin{align*}
        \mathbf{e}_{\boldsymbol{\Theta}}^{(t+1)}=\left(\mathbf{I}-\frac{\eta}{n}\mathbf{Z}^\top\mathbf{Z}\right)^{t+1}\mathbf{e}_{\boldsymbol{\Theta}}^{(0)}.   
    \end{align*}
    So in \eqref{eq: varepsilon bound}, we take
    \begin{align*}
        \varepsilon=\kappa_{\boldsymbol{\Theta}}(\tau)^{T_0}\|\hat{\boldsymbol{\Theta}}\|\lesssim \kappa_{\boldsymbol{\Theta}}(\tau)^{T_0},
    \end{align*}
    and in \eqref{eq: e_Theta max bound},
    \begin{align*}
        \max_{0\leq k\leq T_0-1}\|\mathbf{e}_{\boldsymbol{\Theta}}^{(k)}\|\leq \|\hat{\boldsymbol{\Theta}}\|\lesssim 1.
    \end{align*}
    Thus, to satisfy condition \eqref{eq: epsilon prime condition}, we only need
    \begin{align*}
        \kappa_{\boldsymbol{\Theta}}(\tau)^{T_0}\|\hat{\boldsymbol{\Theta}}\|\leq\bar{\varepsilon},
    \end{align*}
    where $\bar{\varepsilon}:=\sqrt{\|\hat{\boldsymbol{\Theta}}\|^2+\frac{\underline{\gamma}(\tau)}{2(1+\delta(\tau))^2}}-\|\hat{\boldsymbol{\Theta}}\|$. Comparing this with \eqref{eq: T condition}, we can see that there is no constraint on $T$. We only need to take 
    \begin{align*}
        T_0\geq t_0(n),
    \end{align*}
    where $t_0(n)$ is defined in \eqref{eq: T0 condition}. We still take partition point $\tilde{T}_0:=\max\{\frac{T}{2}, C_2\}$, similar to \eqref{eq: e_beta(T0) bound 3}, we have 
    \begin{equation*}
        \begin{aligned}
        \|\mathbf{e}_{\boldsymbol{\beta}}^{(\tilde{T}_0)}\|
        &\lesssim 1+\lambda_2\left(\sqrt{p}+\sqrt{\tau}\right).
    \end{aligned}
    \end{equation*}
    Further, from \eqref{eq: e_beta(T) bound}, we have
    \begin{equation}\label{eq: e_beta(T) bound - Theta no noise}
        \begin{aligned}
            \|\mathbf{e}_{\boldsymbol{\beta}}^{(T)}\|&\lesssim \kappa_{\boldsymbol{\beta}}(\tau)^{T-\tilde{T}_0}\|\mathbf{e}_{\boldsymbol{\beta}}^{(\tilde{T}_0)}\|+\varepsilon(1+\varepsilon)+\frac{\sqrt{pq}(\tau+\log(pq))}{\sqrt{n}}\left(1+\varepsilon\right)+\lambda_2\left(\sqrt{p}+\sqrt{\tau}\right)\\
            &\lesssim \kappa_{\boldsymbol{\beta}}(\tau)^{\frac{T}{2}}\left(1+\lambda_2\left(\sqrt{p}+\sqrt{\tau}\right)\right)+\kappa_{\boldsymbol{\Theta}}(\tau)^{\frac{T}{2}}+\frac{\sqrt{pq}(\tau+\log(pq))}{\sqrt{n}}+\lambda_2\left(\sqrt{p}+\sqrt{\tau}\right)\\
            &\lesssim \kappa_{\boldsymbol{\beta}}(\tau)^{\frac{T}{2}}+\kappa_{\boldsymbol{\Theta}}(\tau)^{\frac{T}{2}}+\frac{\sqrt{pq}(\tau+\log(pq))}{\sqrt{n}}+\lambda_2\left(\sqrt{p}+\sqrt{\tau}\right),
        \end{aligned}
    \end{equation}
    where $\lambda_2=\frac{2\gamma_2}{n}\sqrt{\frac{T}{\rho_2}}$, and $\gamma_2=c_2\left(\sqrt{q}+\sqrt{\tau+\log(nT)}\right)^2$. Plug in $\lambda_2$ into \eqref{eq: e_beta(T) bound - Theta no noise}, we have
    \begin{equation}\label{eq: e_beta(T) final bound - Theta no noise}
        \begin{aligned}
            \|\mathbf{e}_{\boldsymbol{\beta}}^{(T)}\|&\lesssim \underbrace{\kappa_{\boldsymbol{\beta}}(\tau)^{\frac{T}{2}}}_{(i)}+\underbrace{\kappa_{\boldsymbol{\Theta}}(\tau)^{\frac{T}{2}}}_{(ii)}+\underbrace{\frac{R(\tau)}{\sqrt{\rho_2}}\frac{\sqrt{T}}{n}}_{(iii)}+\underbrace{\frac{\sqrt{pq}(\tau+\log(pq))}{\sqrt{n}}}_{(iv)}
        \end{aligned}
    \end{equation}
    Comparing \eqref{eq: e_beta(T) final bound - Theta no noise} with \eqref{eq: e_beta(T) bound final}, we observe that the error term in (ii) is reduced due to the absence of noise in $\boldsymbol{\Theta}^{(t)}$ update. When $T=\mathcal{O}(n)$, this improvement is insignificant as the order of the bound \eqref{eq: e_beta(T) final bound - Theta no noise} is dominated by (iv). However, in Theorem \ref{thm: main result I}, since there is no restriction on $T$, \eqref{eq: e_beta(T) final bound - Theta no noise} holds for all $T$.
\end{proof}
    We conduct experiments to compare the performance of Algorithm~\ref{alg: DP-2S-GD-II} and Algorithm~\ref{alg: DP-2S-GD-I} under the same setup as in Section~\ref{sec: experiments}. We fix $n=1000$ and $p=q=r=5$. For Algorithm~\ref{alg: DP-2S-GD-II}, we set $\rho_1=1$ and vary $\rho_2 \in \{0.1, 1, 10\}$, while running both algorithms for a range of iterations. The results are shown in Figure~\ref{fig: compare_algorithms}. We observe that when $T=\mathcal{O}(n)$, the two algorithms exhibit comparable performance. However, when $T$ grows larger, Algorithm~\ref{alg: DP-2S-GD-II} diverges, whereas Algorithm~\ref{alg: DP-2S-GD-I} continues to maintain a stable error trajectory.
    \begin{figure}[H]
        \centering
        \begin{subfigure}{0.46\textwidth}
            \centering
            \includegraphics[width=\linewidth]{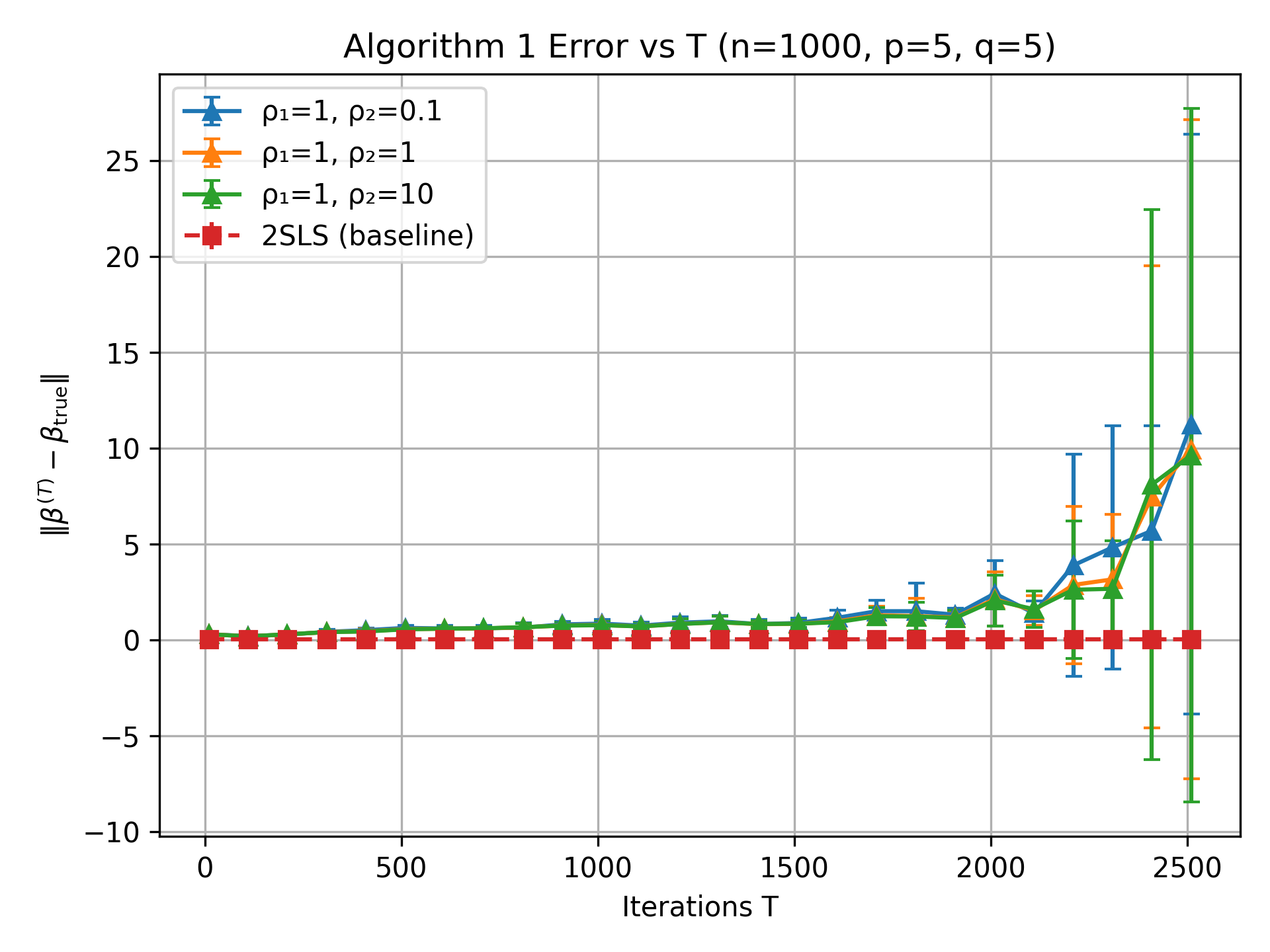}
            \caption{}
            \label{fig:compare_alg1}
        \end{subfigure}
        \hfill
        \begin{subfigure}{0.46\textwidth}
            \centering
            \includegraphics[width=\linewidth]{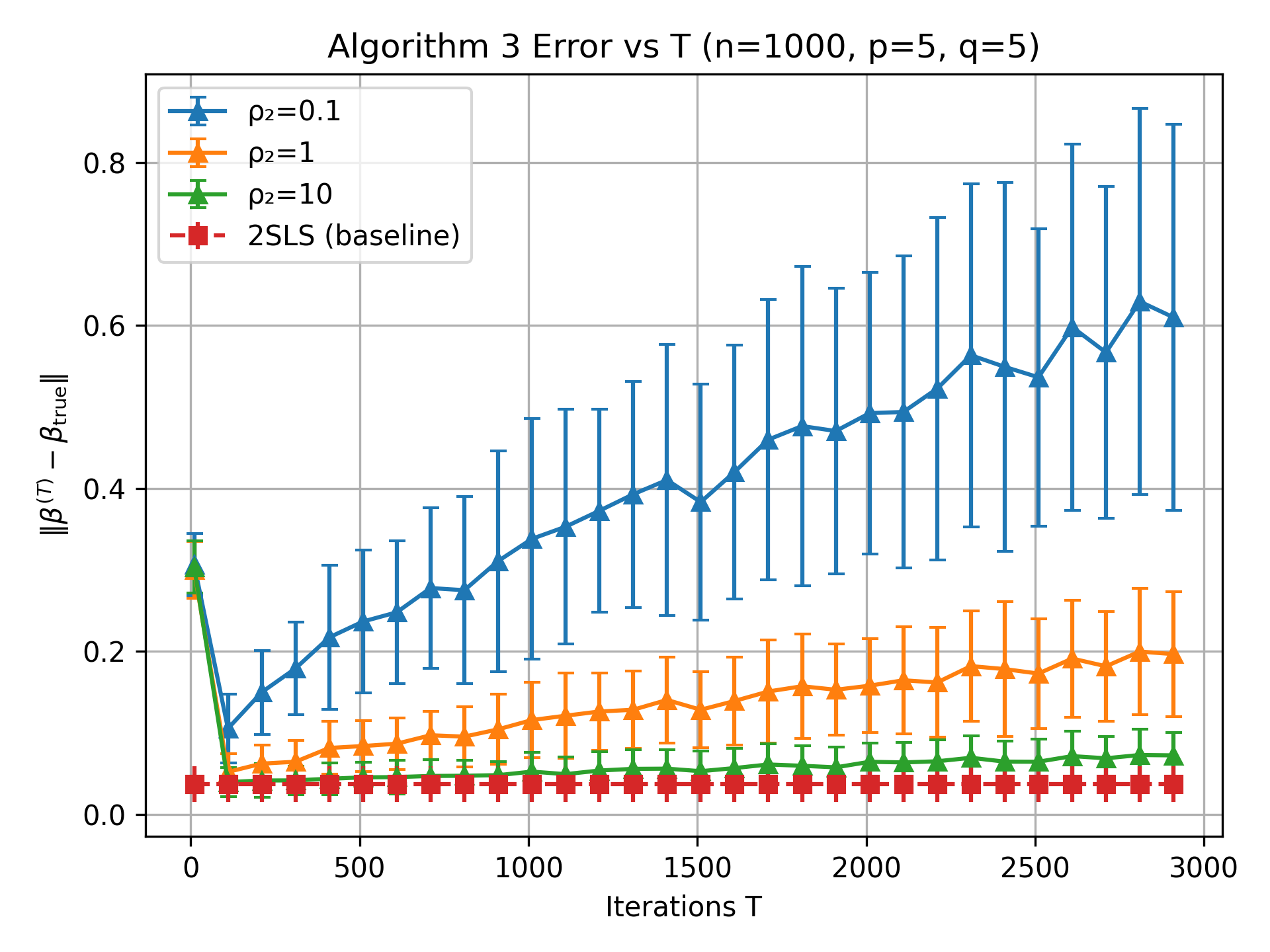}
            \caption{}
            \label{fig:compare_alg3}
        \end{subfigure}
        \caption{Comparison of Algorithm \ref{alg: DP-2S-GD-II} and Algorithm \ref{alg: DP-2S-GD-I}. We fix $n=1000, p=q=r=5$, and vary $\rho_2 \in \{0.1, 1, 10\}$. (a) Error curve for Algorithm \ref{alg: DP-2S-GD-II}, where we set $\rho_1=1$. (b) Error curve for Algorithm \ref{alg: DP-2S-GD-I}. All the curves are averaged over 100 runs, with vertical bars representing the standard errors.}
        \label{fig: compare_algorithms}
    \end{figure}
\subsection{Comparison to FriendlyCore Approach}\label{sec: comparison gradient and sufficient statistics perturbation}
In this section, we provide a brief comparison between our two-stage gradient perturbation approach and a computationally friendly approach. \cite{tsfadia2022friendlycore} proposes a general DP meta-framework for aggregation tasks (e.g., averaging, clustering, covariance estimation) on data in a metric space. The key idea is:  Given a dataset $D=\left(\mathbf{x}_1, \ldots, \mathbf{x}_n\right)$ in a metric space and a "friendship" predicate $f(\mathbf{x}, \mathbf{y})$ that encodes when two points are close / well-behaved, FriendlyCore is a DP procedure that extracts a subset $C \subseteq D$ (the "core") with two properties:
    \begin{itemize}
        \item Structural niceness: $C$ is $f$-friendly (e.g., all points in $C$ lie in a ball of radius $r$, or satisfy a separation condition useful for clustering). Outliers that violate the predicate are removed.
        \item Stability and DP: For neighboring datasets $D, D^{\prime}$, the cores $C, C^{\prime}$ differ only on a small number of points, and FriendlyCore itself is $(\varepsilon, \delta)$-DP or $\rho$-zCDP. This lets you plug $C$ into any friendly-instance DP algorithm without re-doing a worst-case sensitivity analysis.
    \end{itemize} 
Section 5 of \cite{tsfadia2022friendlycore} shows how to use this framework for private averaging, clustering, and covariance estimation. For averaging, Algorithm 5.1 \textsf{FC\_Avg} does:
\begin{itemize}
    \item Split the privacy budget as $\rho_1=0.1 \rho, \rho_2=0.9 \rho$.
    \item Run FriendlyCore on $D$ with a predicate that enforces an effective diameter $r$ (all but a few outliers lie in a ball of radius $r$ ). 
    \item On the core $C$, run FriendlyAvg, which is essentially a Gaussian-mechanism mean estimator tuned for zCDP .
\end{itemize}
Algorithm 5.1 from \cite{tsfadia2022friendlycore} can be used as a building block to make a DP version of this 2SLS analysis: The 2SLS estimator depends on sample means of sufficient statistics: $\frac{1}{n} \mathbf{Z}^{\top} \mathbf{Z},  \frac{1}{n} \mathbf{Z}^{\top} \mathbf{X},  \frac{1}{n} \mathbf{Z}^{\top} \mathbf{Y}$. Stack and vectorize these matrices into a vector in $\mathbb{R}^d$ with $d = q^2+pq+q=\mathcal{O}(q^2)$. Each data point contributes a vector of this form; call these contributions $\tilde{\mathbf{x}}_i$.  Under the sub-Gaussian design, each $\tilde{\mathbf{x}}_i$ has bounded effective diameter 

\begin{align*}
    r=\Theta(\sqrt{d}+\sqrt{\tau})
\end{align*}

with high probability. Using that $r$, Algorithm 5.1 (FC Avg) gives a $\rho$-zCDP estimate of the mean of the sufficient-statistics vector with error 
\begin{align*}
    \mathcal{O}\left(\frac{r}{n} \sqrt{\frac{d}{\rho}}\right)=\mathcal{O}\left(\frac{\sqrt{d}+\sqrt{\tau}}{n} \sqrt{\frac{d}{\rho}}\right)=\mathcal{O}\left(\frac{d+\sqrt{d\tau}}{n \sqrt{\rho}}\right).
\end{align*}
Lemma \ref{lem: beta hat error bound}'s proof shows that $\hat{\beta}$ is a smooth function of those sample covariances. The final estimate $\hat{\boldsymbol{\beta}}_{\text{FC}}$ is then a post-processing step that applies the usual 2SLS formula to these privatized moments. If we replace the non-private moment estimates in Lemma \ref{lem: beta hat error bound} by the \textsf{FC\_Avg} privatized moments, the Lipschitz dependence of $\hat{\beta}$ on the moments converts the \textsf{FC\_Avg} error into an additional term in the error bound, scaling like at the order of $L\frac{q^2+q\sqrt{\tau}}{n\sqrt{\rho}}$, where $L$ is the Lipschitz constant (see Lemma \ref{lem: lipschitz}). Hence, the error bound for a FriendlyCore-based DP 2SLS estimator would be the combination of the non-private rate from Lemma \ref{lem: beta hat error bound} plus this additional term. In particular,
\begin{align*}
    \|\hat{\boldsymbol{\beta}}_{\text{FC}} - \boldsymbol{\beta}\| \leq \|\hat{\boldsymbol{\beta}}_{\text{FC}} - \hat{\boldsymbol{\beta}}\| +\|\hat{\boldsymbol{\beta}} - \boldsymbol{\beta}\| =\mathcal{O}\left(L\frac{q^2+q\sqrt{\tau}}{n\sqrt{\rho}}+\frac{\sqrt{pq}(\tau+\log(pq))}{\sqrt{n}} \right).
\end{align*}
\begin{rmk}
    Comparing this bound with our bound in Theorem \ref{thm: main result II}, we see that the orders are similar. At the same time, the FriendlyCore-based estimator reflects a different set of algorithmic and statistical trade-offs from our DP-2S-GD method:
\begin{itemize}
    \item \textbf{Black-box generality vs.\ algorithm-aware design.} FriendlyCore offers a highly general subsample–aggregate framework that applies to a broad class of moment-based estimators, including 2SLS, by treating them as black-box functions of empirical moments. In contrast, DP-2S-GD directly privatizes the two-stage gradient procedure itself. This algorithm-aware approach enables us to analyze how privacy noise propagates through the optimization dynamics and to obtain an explicit privacy–iteration trade-off (e.g., the “too many iterations hurt’’ behavior in Figure \ref{fig: sketch}), which does not emerge in one-shot subsample–aggregate pipelines.
    \item \textbf{Subsample-and-aggregate vs.\ full-data iterative updates.} FriendlyCore relies on repeated subsampling to identify a stable “core’’ before applying the estimator on that core, which provides robustness and generality but increases computational cost and effectively reduces the sample size available to each subsample. DP-2S-GD uses all samples at every iteration with per-sample gradient clipping and a simple zCDP accountant, making it attractive in large-scale, iterative, or streaming settings where repeated subsampling may be impractical.
\end{itemize}
\end{rmk}
\begin{lemma}\label{lem: lipschitz}
     Let $\mathbf{s}:=\text{vec}(\frac{\mathbf{Z}^\top\mathbf{Z}}{n},\frac{\mathbf{Z}^\top\mathbf{X}}{n},\frac{\mathbf{Z}^\top\mathbf{Y}}{n})$, and $\boldsymbol{\beta}(\mathbf{s})$ denotes the mapping from $\mathbf{s}$ to $\hat{\boldsymbol{\beta}}_{2SLS}$. There exists a constant $L>0$ such that for any two sets of moments $\mathbf{s}_1, \mathbf{s}_2$, we have
    \begin{align*}
        \|\boldsymbol{\beta}(\mathbf{s}_1)-\boldsymbol{\beta}(\mathbf{s}_2)\|\leq L\|\mathbf{s}_1-\mathbf{s}_2\|.
    \end{align*} 
\end{lemma}
\begin{proof}
    Let $\boldsymbol{\Sigma}_{zz} = \frac{1}{n}\mathbf{Z}^\top \mathbf{Z},\qquad
    \boldsymbol{\Sigma}_{zx} = \frac{1}{n}\mathbf{Z}^\top \mathbf{X},\qquad
    \boldsymbol{\Sigma}_{zy} = \frac{1}{n}\mathbf{Z}^\top \mathbf{Y}$ denote the empirical second moments and define the “moment vector”
    \begin{align*}
        \mathbf{s} := (\boldsymbol{\Sigma}_{zz},\boldsymbol{\Sigma}_{zx},\boldsymbol{\Sigma}_{zy}).
    \end{align*}
    From these moments we form the usual 2SLS normal equations
    \begin{align*}
        \mathbf{G}(\mathbf{s})\boldsymbol{\beta}(\mathbf{s})=\mathbf{h}(\mathbf{s}),
    \end{align*}
    where $\mathbf{G}(\mathbf{s}):=\mathbf{X}^\top\mathbf{P}_z\mathbf{X}, \quad \mathbf{h}(\mathbf{s}):=\mathbf{X}^\top\mathbf{P}_z\mathbf{Y}, \quad \mathbf{P}_z:=\mathbf{Z}(\mathbf{Z}^\top\mathbf{Z})^{-1}\mathbf{Z}^\top.$ Hence,
    \begin{align*}
        \mathbf{G}(\mathbf{s}) = \boldsymbol{\Sigma}_{xz}\boldsymbol{\Sigma}_{zz}^{-1}\boldsymbol{\Sigma}_{zx},
        \qquad \mathbf{h}(\mathbf{s}) = \boldsymbol{\Sigma}_{xz}\boldsymbol{\Sigma}_{zz}^{-1}\boldsymbol{\Sigma}_{zy},
    \end{align*}
    with $\boldsymbol{\Sigma}_{xz} = \boldsymbol{\Sigma}_{zx}^\top$. Thus the 2SLS estimator can be written as
    \begin{align*}
    \boldsymbol{\beta}(\mathbf{s}) = \mathbf{G}(\mathbf{s})^{-1} \mathbf{h}(\mathbf{s}).
    \end{align*}
    Consider two sets of moments $s_1,s_2$ with corresponding
    \begin{align*}
        \mathbf{G}_i := \mathbf{G}(\mathbf{s}_i),\quad \mathbf{h}_i := \mathbf{h}(\mathbf{s}_i),\quad \boldsymbol{\beta}_i := \boldsymbol{\beta}(\mathbf{s}_i) = \mathbf{G}_i^{-1} \mathbf{h}_i
    \quad (i=1,2).
    \end{align*}
    We have
    \begin{align*}
        \boldsymbol{\beta}_1 - \boldsymbol{\beta}_2
        &= \mathbf{G}_1^{-1} \mathbf{h}_1 - \mathbf{G}_2^{-1} \mathbf{h}_2 \\
        &= \mathbf{G}_1^{-1}(\mathbf{h}_1 - \mathbf{h}_2) + (\mathbf{G}_1^{-1} - \mathbf{G}_2^{-1})\mathbf{h}_2 \\
        &= \mathbf{G}_1^{-1}(\mathbf{h}_1 - \mathbf{h}_2) + \mathbf{G}_1^{-1}(\mathbf{G}_2 - \mathbf{G}_1)\mathbf{G}_2^{-1} \mathbf{h}_2,
    \end{align*}
    where in the last equality we used the identity
    $$\mathbf{G}_1^{-1} - \mathbf{G}_2^{-1} = \mathbf{G}_1^{-1}(\mathbf{G}_2-\mathbf{G}_1)\mathbf{G}_2^{-1}.$$
    Taking norms and using submultiplicativity,
    \begin{equation}\label{eq:beta-diff}
        \|\boldsymbol{\beta}_1 - \boldsymbol{\beta}_2\|
        \le
        \|\mathbf{G}_1^{-1}\|\,\|\mathbf{h}_1 - \mathbf{h}_2\|
        +
        \|\mathbf{G}_1^{-1}\|\,\|\mathbf{G}_2 - \mathbf{G}_1\|\,\|\mathbf{G}_2^{-1}\|\,\|\mathbf{h}_2\|.
    \end{equation}
    Assume (as in Assumption~\ref{asp: Z}) that the population Gram matrix is well conditioned, so that on a high-probability event
    $$\lambda_{\min}(\mathbf{G}_i) \ge \lambda_0 > 0
    \quad\Rightarrow\quad
    \|\mathbf{G}_i^{-1}\| \le \lambda_0^{-1}, \quad i=1,2,$$
    and that the moments are uniformly bounded so that $\|\mathbf{h}_2\|\le C_h$. Here the specific formula of $\lambda_0$ could be referred to Lemma \ref{lem: lambda_min H star}. Moreover, $\mathbf{G}(\mathbf{s})$ and $h(\mathbf{s})$ are smooth (in fact, rational) functions of
    the entries of $(\boldsymbol{\Sigma}_{zz},\boldsymbol{\Sigma}_{zx},\boldsymbol{\Sigma}_{zy})$, and in a neighbourhood of the true moments there exist constants $C_G,C_h>0$
    such that
    $$\|\mathbf{G}_2 -\mathbf{G}_1\| \le C_G \|\mathbf{s}_2 - \mathbf{s}_1\|,
    \qquad
    \|\mathbf{h}_2 - \mathbf{h}_1\| \le C_h \|\mathbf{s}_2 - \mathbf{s}_1\|.$$
    Plugging these bounds into \eqref{eq:beta-diff} yields
    $$\|\boldsymbol{\beta}_1 - \boldsymbol{\beta}_2\|
    \le\frac{1}{\lambda_0} C_h \|\mathbf{s}_1 - \mathbf{s}_2\|+\frac{1}{\lambda_0^2} C_G C_h \|\mathbf{s}_1 - \mathbf{s}_2\|
    = L \|\mathbf{s}_1 - \mathbf{s}_2\|,$$
    where
    $$L := \frac{C_h}{\lambda_0} + \frac{C_G C_h}{\lambda_0^2}.$$
    Hence the 2SLS map $\boldsymbol{\beta}:\mathbf{s}\mapsto\hat{\boldsymbol{\beta}}_{\text{2SLS}}(\mathbf{s})$
    is Lipschitz in the sample moments:
    $$\|\boldsymbol{\beta}(\mathbf{s}_1)-\boldsymbol{\beta}(\mathbf{s}_2)\| \le L \|\mathbf{s}_1 - \mathbf{s}_2\|.$$
\end{proof}
\section{Additional Experiments}
\subsection{Tuning Step Size}\label{sec: exp step size tuning}
    In this section, we empirically examine how the step sizes $\alpha$ and $\eta$ affect the convergence of Algorithm \ref{alg: DP-2S-GD-II}. Using the same setup as in Section \ref{sec: synthetic experiments}, we fix $n=2000$ and $p=q=r=5$, and run Algorithm \ref{alg: DP-2S-GD-II} for $T=20$ iterations, with $\rho_1=\rho_2=5$. In each plot, we vary one of $\eta,\alpha$ over its admissible range given by \eqref{eq: learning rates condition}, while fixing the other step size at a sub-optimal level (close to its upper bound). The results, shown in Figure~\ref{fig: step size tuning}, indicate that as long as $\eta$ and $\alpha$ lie within the theoretically justified region, the convergence behavior is fairly insensitive to the exact step-size choice. As noted in Remark~\ref{rmk: comparison to gd}, our theoretical upper bound for $\alpha$ is slightly conservative due to the need to control the randomness introduced by the first-stage estimates $\boldsymbol{\Theta}^{(t)}$.
    \begin{figure}[H]
        \centering
        \begin{subfigure}{0.46\textwidth}
            \centering
            \includegraphics[width=\linewidth]{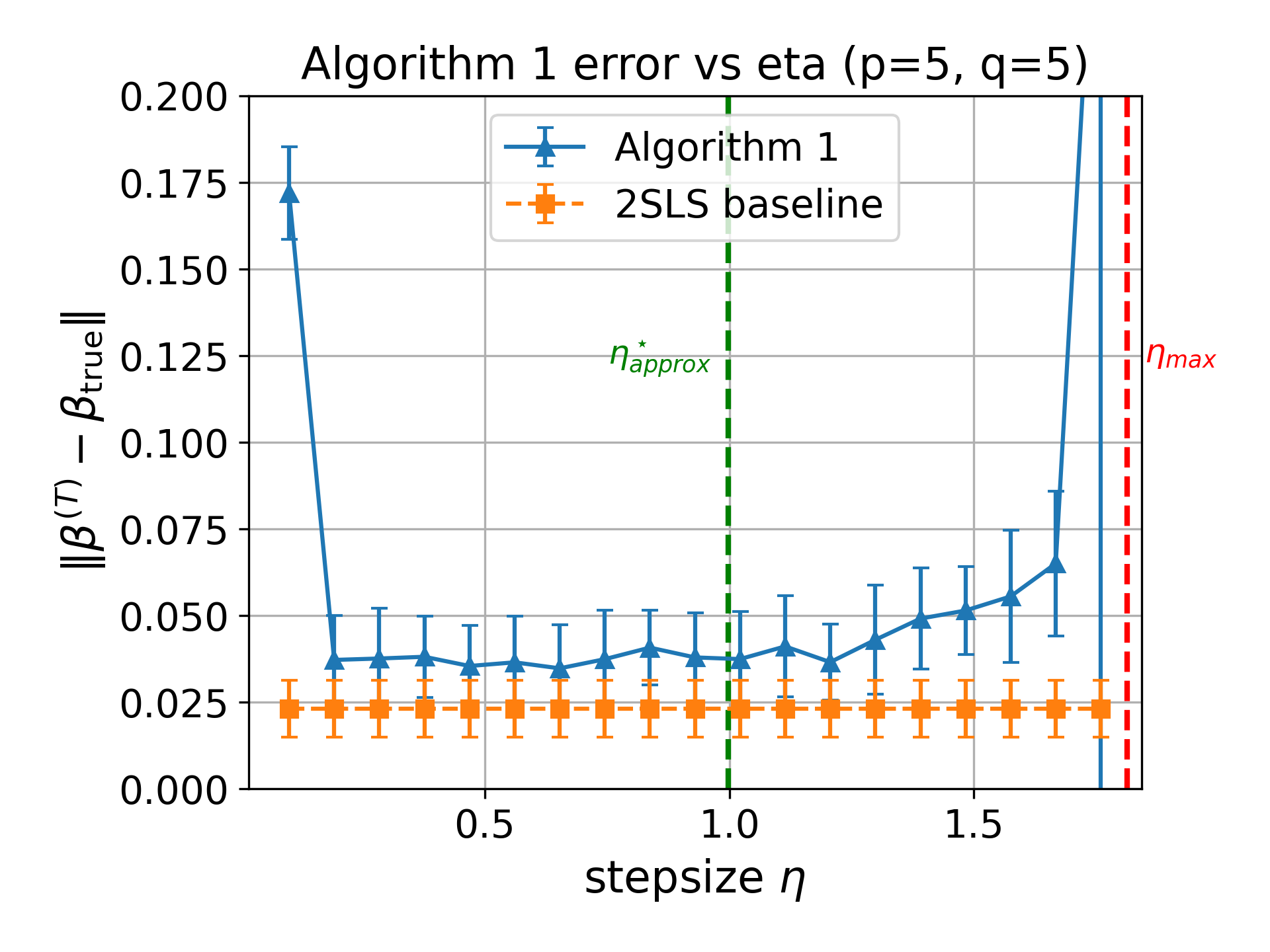}
            \caption{}
            \label{fig: step size eta}
        \end{subfigure}
        \hfill
        \begin{subfigure}{0.46\textwidth}
            \centering
            \includegraphics[width=\linewidth]{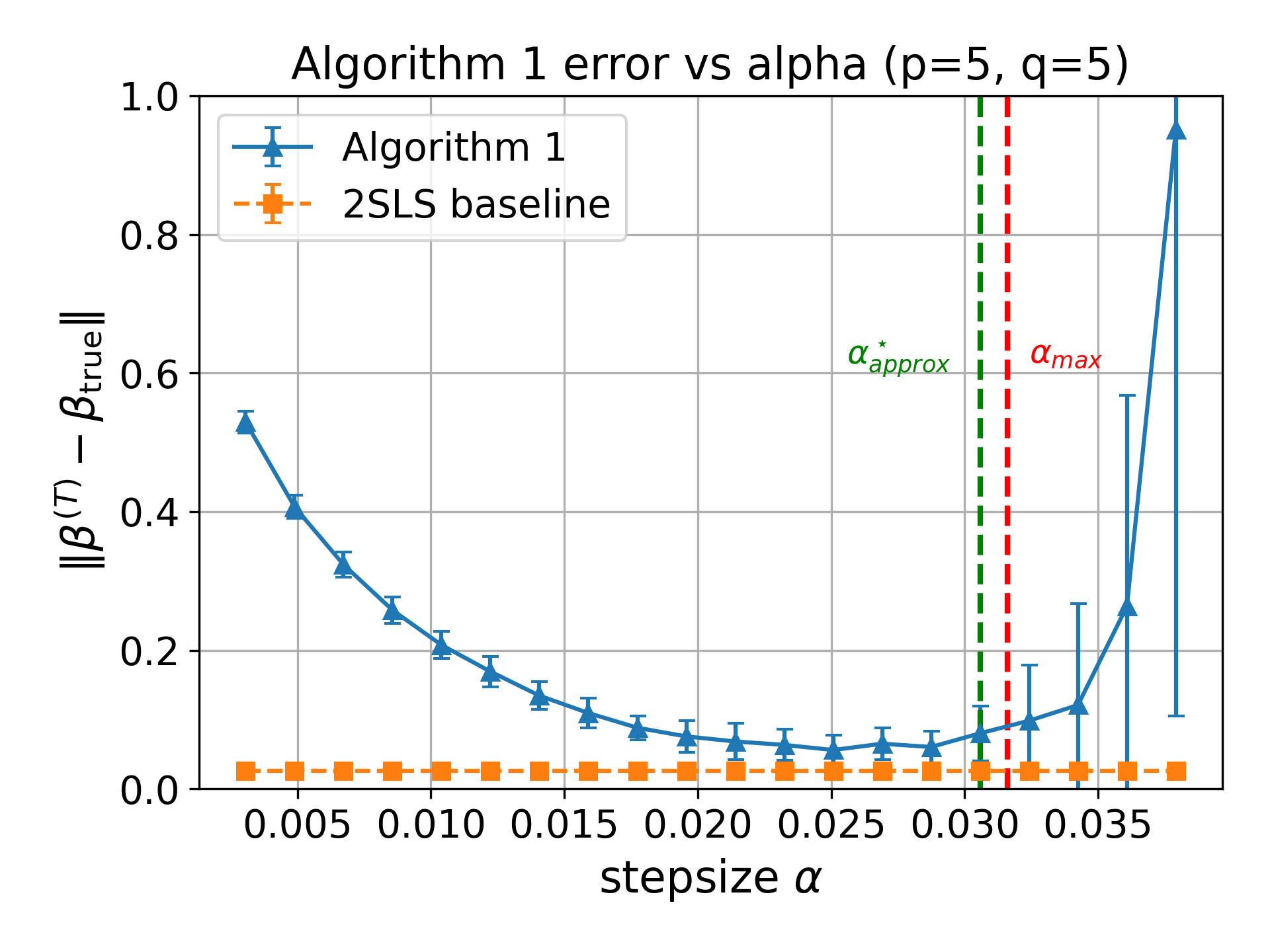}
            \caption{}
            \label{fig: step size alpha}
        \end{subfigure}
        \caption{Convergence behavior under different step sizes $\eta, \alpha$. The (theoretical) upper bounds $\eta_{\text{max}}$ and $\alpha_{\text{max}}$ are given by \eqref{eq: learning rates condition}. The (approximate) optimal $\eta_{\text{approx}}^\star$ and $\alpha_{\text{approx}}^\star$ are calculated according to \eqref{eq: optimum step sizes}. (a) Varying $\eta$ while fixing $\alpha=\alpha_{\text{max}}$. (b) Varying $\alpha$ while fixing $\eta=0.9\eta_{\text{max}}$. All the curves are averaged over 100 runs, with vertical bars representing the standard errors.}
        \label{fig: step size tuning}
    \end{figure}

\subsection{Effect of clipping threshold}\label{sec: exp clipping threshold}
In this section, we empirically examine how the clipping thresholds $\gamma_1$ and $\gamma_2$ influence the utility of Algorithm~\ref{alg: DP-2S-GD-II}. Using the same setup as in Section~\ref{sec: synthetic experiments}, we fix $n=2000$ and $p=q=r=5$, and run Algorithm~\ref{alg: DP-2S-GD-II} for $T=20$ iterations under privacy budgets $\rho_1=\rho_2=5$. For simplicity, we set $\gamma_1=\gamma_2=\gamma$ and vary $\gamma$ over the range $[1,1000]$. The results are reported in Figure~\ref{fig: clipping threshold}.

We observe that when $\gamma$ is set too small, the per-sample gradients are frequently clipped, causing the updates to be severely distorted and resulting in larger estimation error. As $\gamma$ increases, clipping becomes less frequent and the estimation error decreases. However, once $\gamma$ exceeds a certain level, the sensitivity of the gradients grows, which requires injecting larger noise to satisfy the target privacy budget. This increased noise leads to larger fluctuations in the final estimates. Consequently, the most effective choice of $\gamma$ is the smallest value that ensures gradient clipping does not occur with high probability.

\begin{figure}[H]
    \centering
    \includegraphics[width=0.45\textwidth]{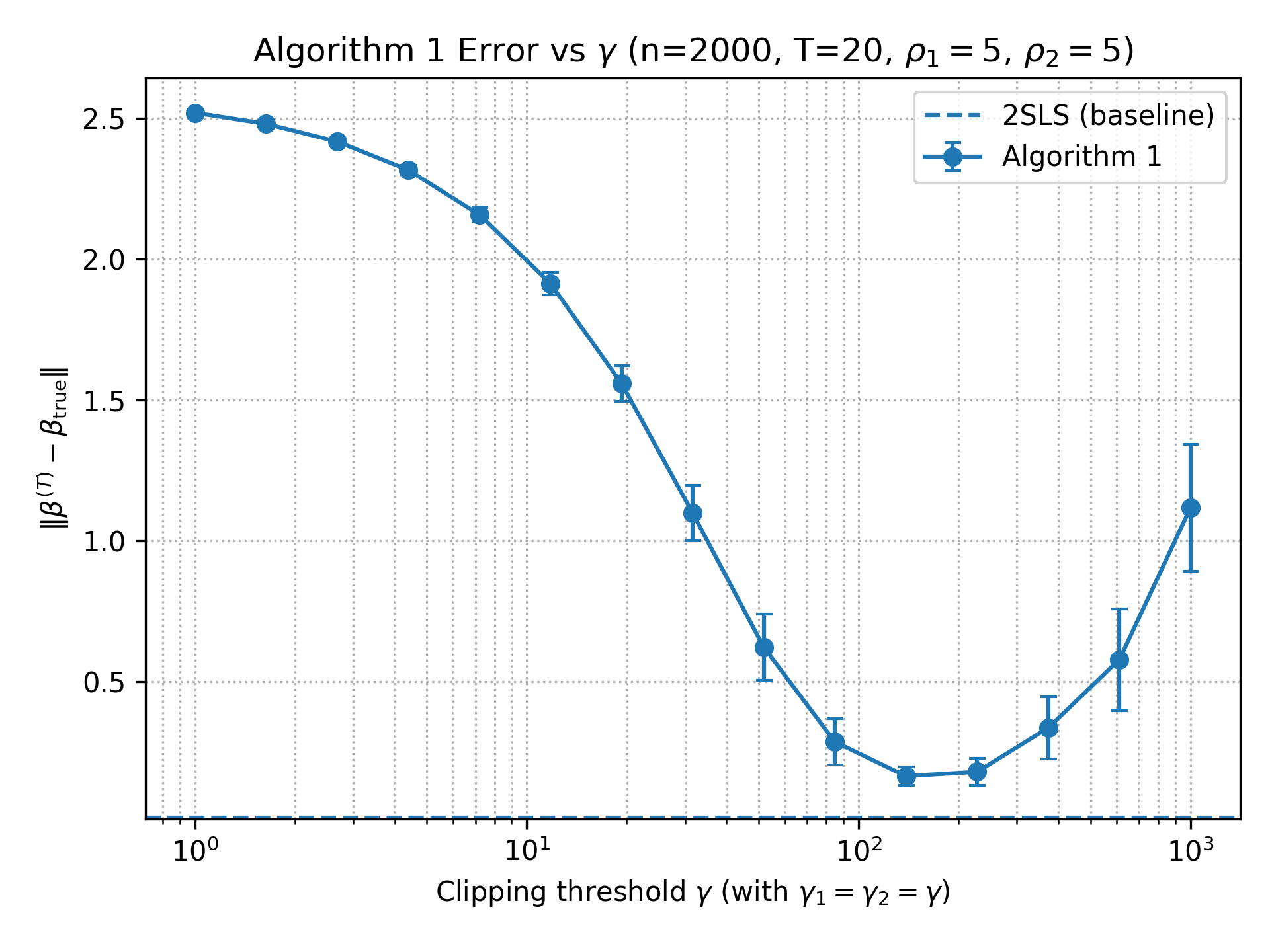}
    \caption{Effect of clipping threshold $\gamma$ on the utility of Algorithm \ref{alg: DP-2S-GD-II}. We fix $n=2000, p=q=r=5, T=20, \rho_1=\rho_2=5$, and set $\gamma_1=\gamma_2=\gamma$. The error curve is averaged over 100 runs, with vertical bars representing the standard errors.}
    \label{fig: clipping threshold}
\end{figure}
\subsection{Convergence Rate Comparison}\label{sec: exp convergence rate compare}
In this section, we empirically compare the convergence rate of \texttt{2S-GD} (Algorithm \ref{alg: 2S-GD}) and the standard 2SLS estimator. The experiment setup is exactly the same as in Section \ref{sec: experiments}. We set $p=q=r=20$, and vary $n$ from 500 to 5000. For the \texttt{2S-GD} estimator, we run $T=100$ iterations so that it converges sufficiently. The results are shown in Figure \ref{fig: convergence rate compare}. We observe that the convergence rate of \texttt{2S-GD} is slower than that of 2SLS.
\begin{figure}[H]
    \centering
    \includegraphics[width=0.45\textwidth]{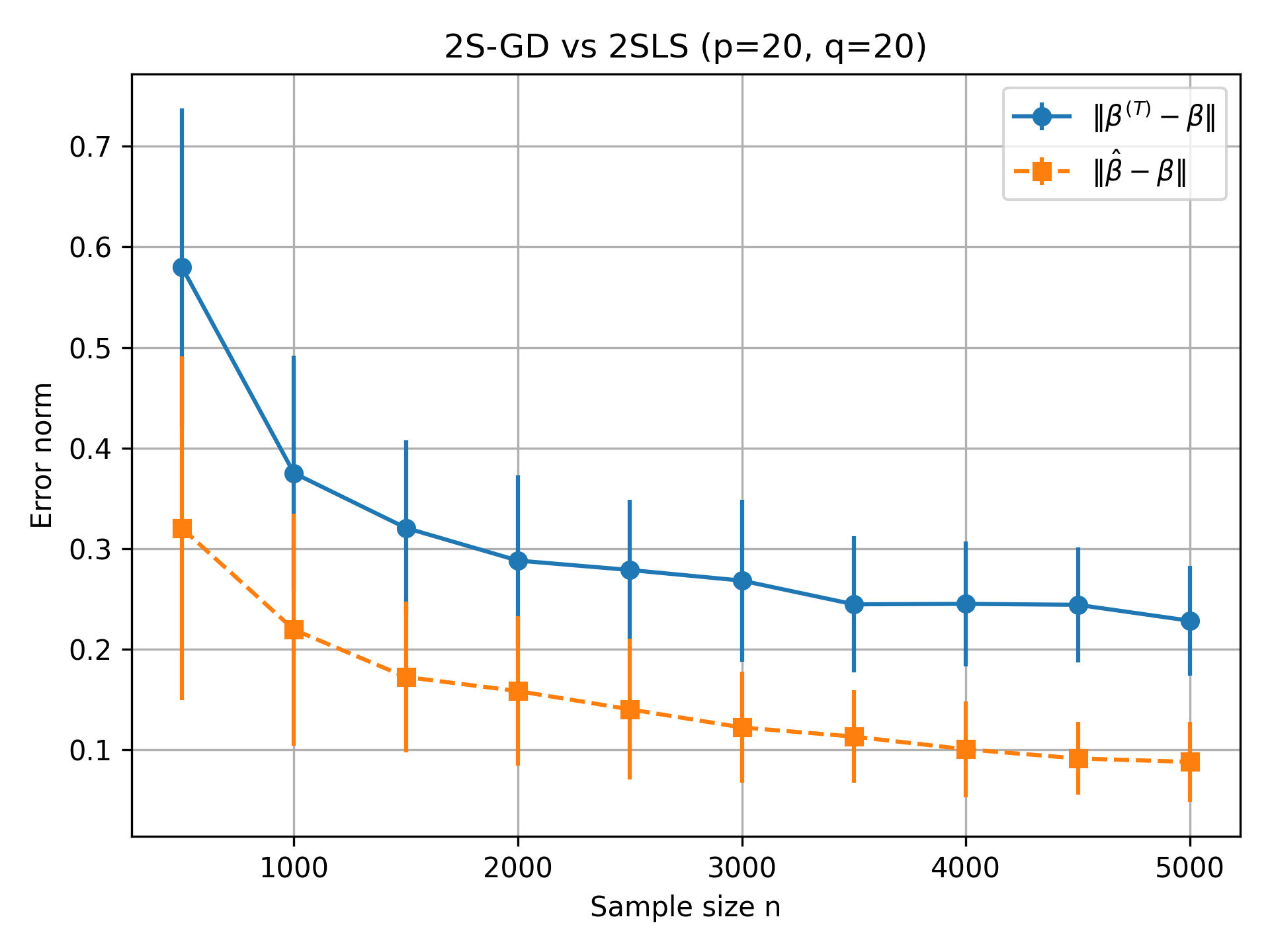}
    \caption{Comparison of the convergence rates of \texttt{2S-GD} and 2SLS. The error curves $\|\boldsymbol{\beta}^{(T)}-\boldsymbol{\beta}\|$ (for \texttt{2S-GD}) and $\|\hat{\boldsymbol{\beta}} - \boldsymbol{\beta}\|$ (for 2SLS) are averaged over 100 runs, with vertical bars representing the standard errors.
}
\label{fig: convergence rate compare}
\end{figure}

\subsection{Additional Experiments on Angrist Dataset}\label{sec: additional Angrist exp}
We provide additional experimental results on the Angrist dataset with different privacy parameters $\rho_1, \rho_2$. We consider two settings of privacy parameters: (i) $\rho_1=0.1, \rho_2=0.1$; (ii) $\rho_1=10, \rho_2=10$. The results are shown in Figures \ref{fig:Angrist_results_rho1=0.1_rho2=0.1} and \ref{fig:Angrist_results_rho1=10_rho2=10}. We observe that when $\rho_1, \rho_2$ are small, the estimates of $\boldsymbol{\beta}^{(T)}$ have larger variance. When $\rho_1, \rho_2$ are larger, the estimates of $\boldsymbol{\beta}^{(T)}$ are more concentrated around the expected value. In both settings, the estimates of $\boldsymbol{\beta}^{(t)}$ converge in expectation within $T=20$ iterations.
\begin{figure}[H]
    \centering
    \begin{subfigure}{0.48\linewidth}
        \centering
        \includegraphics[width=0.95\textwidth]{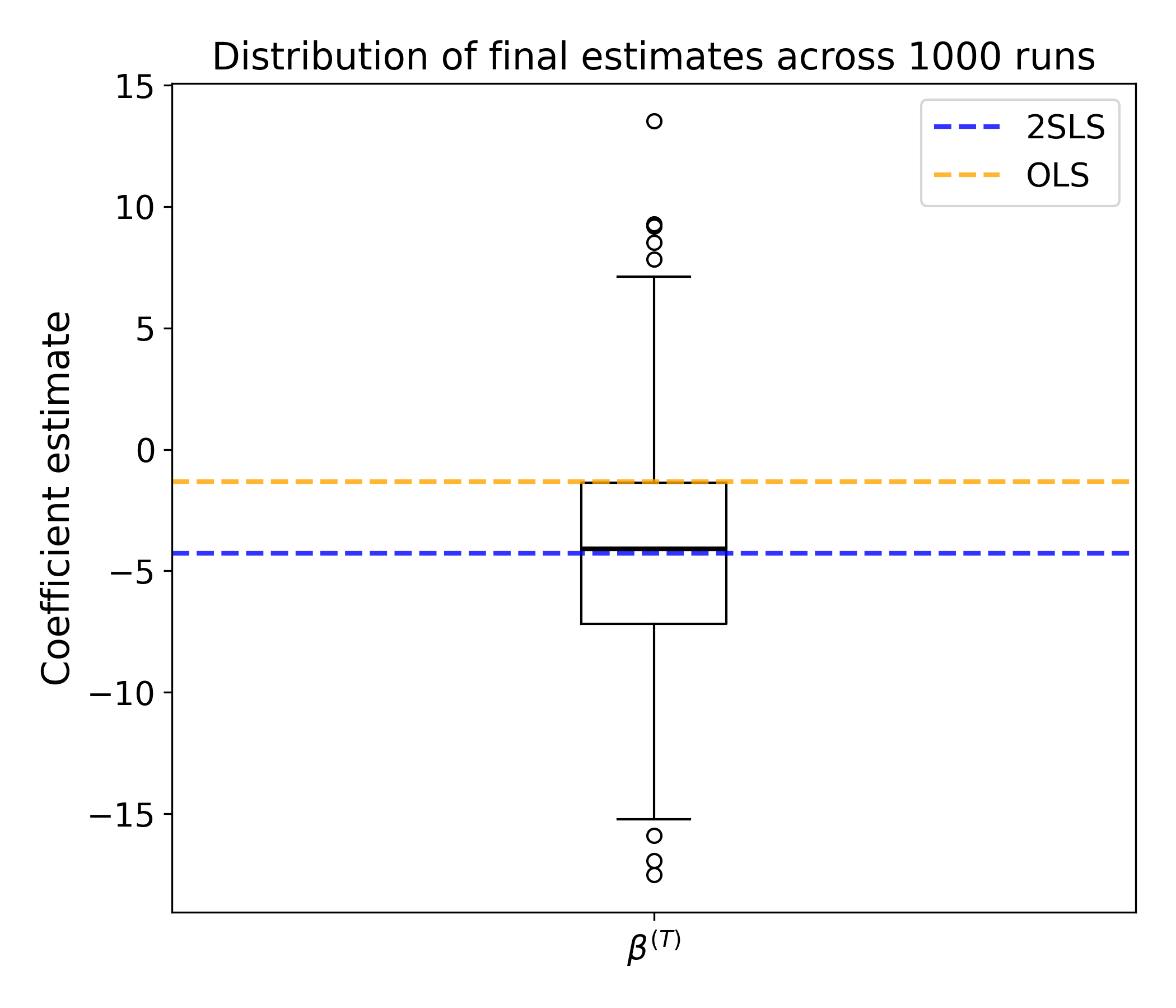}
        \subcaption{}
        \label{fig:Angrist_boxplot_rho1=0.1_rho2=0.1}
    \end{subfigure}
    \hfill
    \begin{subfigure}{0.48\linewidth}
        \centering
        \includegraphics[width=0.9\textwidth]{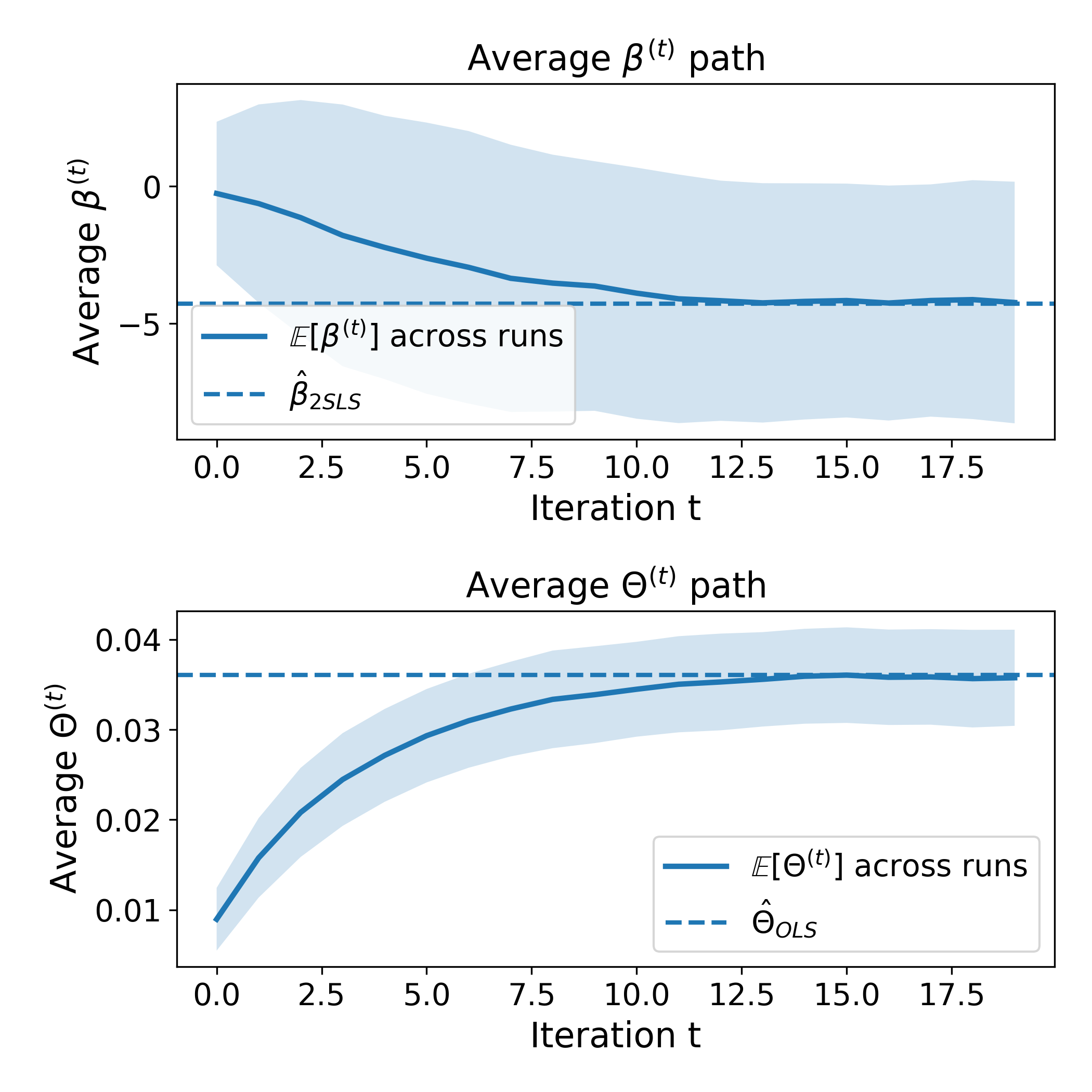}
        \subcaption{}
        \label{fig:Angrist_parameter_path_rho1=0.1_rho2=0.1}
    \end{subfigure}

    \caption{Results on the Angrist dataset with $T=20, \rho_1=0.1, \rho_2=0.1$. 
    (a) Boxplot of estimated $\boldsymbol{\beta}^{(T)}$, over 1000 runs. (b) Learning paths of parameters $\boldsymbol{\beta}^{(t)}, \boldsymbol{\Theta}^{(t)}$, over 1000 runs. The shaded area represents the standard error.} 
    \label{fig:Angrist_results_rho1=0.1_rho2=0.1}
\end{figure}

\begin{figure}[H]
    \centering
    \begin{subfigure}{0.48\linewidth}
        \centering
        \includegraphics[width=0.95\textwidth]{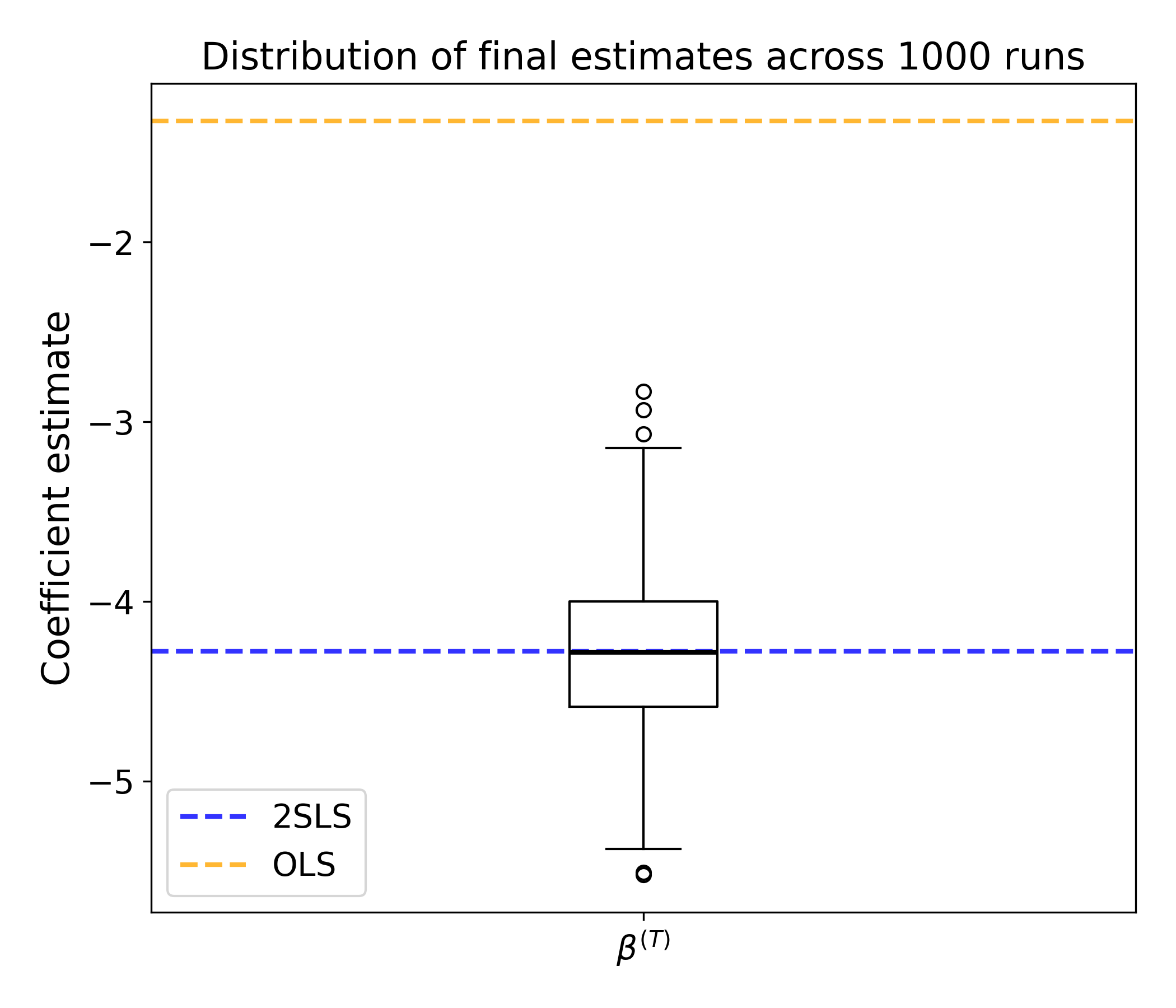}
        \subcaption{}
        \label{fig:Angrist_boxplot_rho1=10_rho2=10}
    \end{subfigure}
    \hfill
    \begin{subfigure}{0.48\linewidth}
        \centering
        \includegraphics[width=0.9\textwidth]{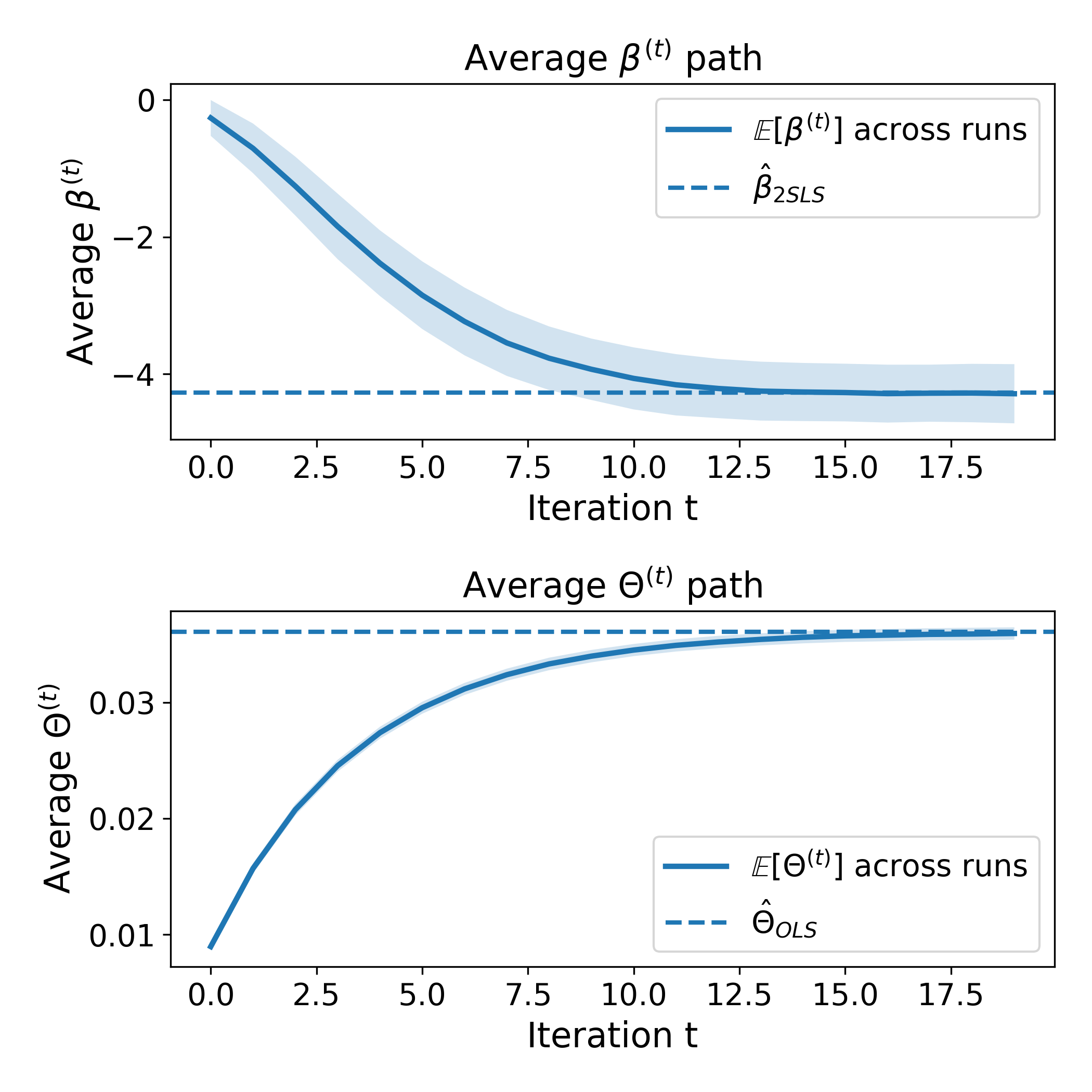}
        \subcaption{}
        \label{fig:Angrist_parameter_path_rho1=10_rho2=10}
    \end{subfigure}

    \caption{Results on the Angrist dataset with $T=20, \rho_1=10, \rho_2=10$. 
    (a) Boxplot of estimated $\boldsymbol{\beta}^{(T)}$, over 1000 runs. (b) Learning paths of parameters $\boldsymbol{\beta}^{(t)}, \boldsymbol{\Theta}^{(t)}$, over 1000 runs. The shaded area represents the standard error.} 
    \label{fig:Angrist_results_rho1=10_rho2=10}
\end{figure}

\subsection{Experiments on Card Dataset}\label{sec: experiments on Card dataset}
The Card dataset \citep{card1995returns} is a widely used empirical dataset in labor economics for studying the causal effect of education on earnings. In this study, the endogenous regressor is individuals' years of schooling (\textbf{educ}), and the outcome variable is log earnings (\textbf{lwage}). There are several instruments available, most notably the college-proximity indicators (\textbf{nearc2} and \textbf{nearc4}), which capture whether an individual grew up near a two-year or four-year college. Additional instruments include parental education—father's and mother's years of schooling (\textbf{fatheduc} and \textbf{motheduc})—which provide further exogenous variation in educational attainment.

There are 2191 samples in total. We consider the following covariates: $\mathbf{Z}=$[\textbf{nearc2}, \textbf{nearc4}, \textbf{fatheduc}, \textbf{motheduc}], $\mathbf{X}=$[\textbf{educ}], $\mathbf{Y}=$[\textbf{lwage}]. We standardize each column of $\mathbf{Z}$ to have zero mean and unit variance. We run Algorithm \ref{alg: DP-2S-GD-II} with privacy parameters $\rho_1, \rho_2\in\{0.1,1,10\}$, and number of iterations $T=15$. We report the boxplot of final estimates and the learning path for $\boldsymbol{\beta}^{(t)}$. The results are shown in Figure \ref{fig: Card results}. We observe that as $\rho_1, \rho_2$ increase, the estimates of $\boldsymbol{\beta}^{(T)}$ become more concentrated. In all settings, the estimates of $\boldsymbol{\beta}^{(t)}$ converge in expectation within $T=10$ iterations.
\begin{figure}[H]
    \centering
    \begin{subfigure}{0.3\linewidth}
        \centering
        \begin{subfigure}{\linewidth}
            \centering
            \includegraphics[width=\linewidth]{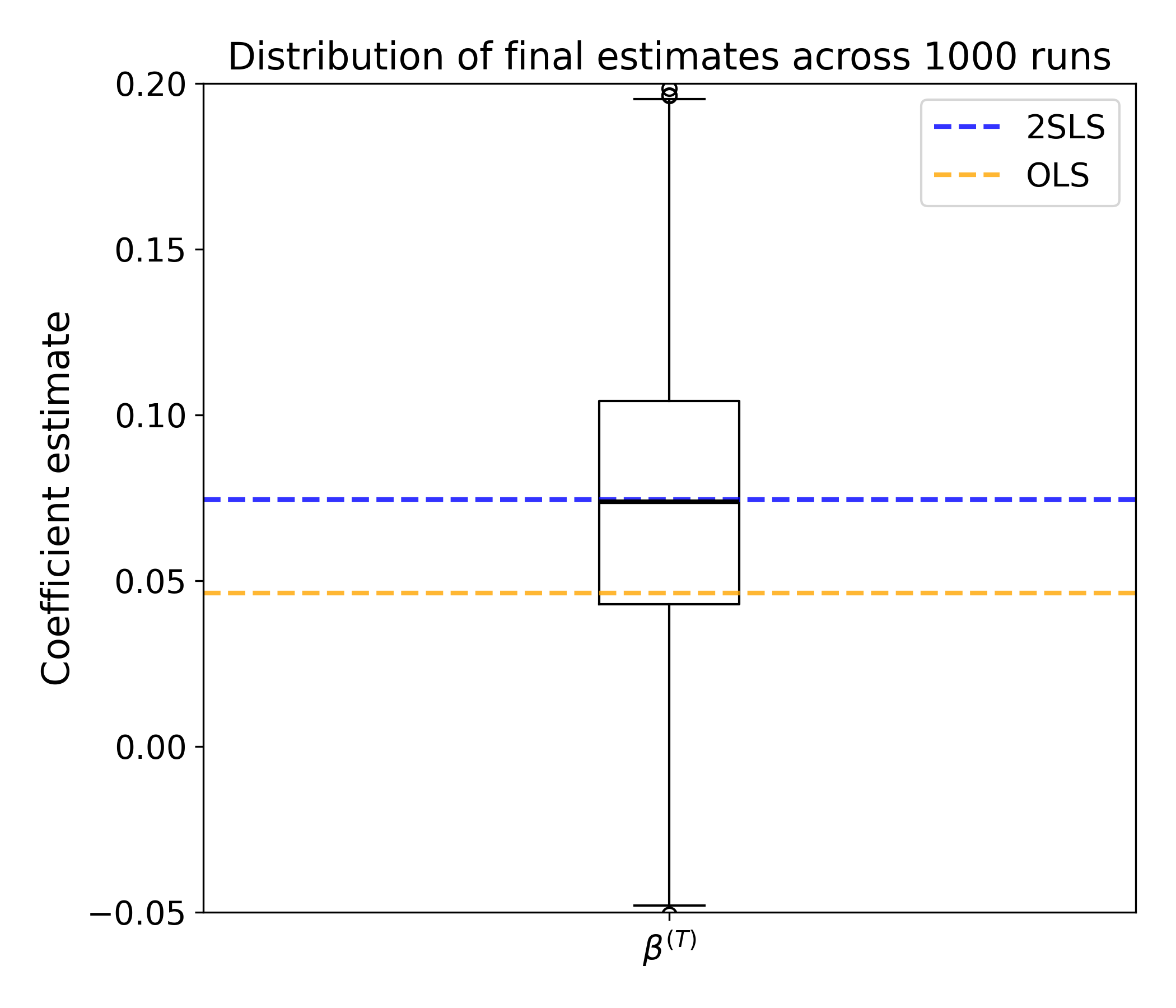}
        \end{subfigure}
        \begin{subfigure}{\linewidth}
            \centering
            \includegraphics[width=\linewidth]{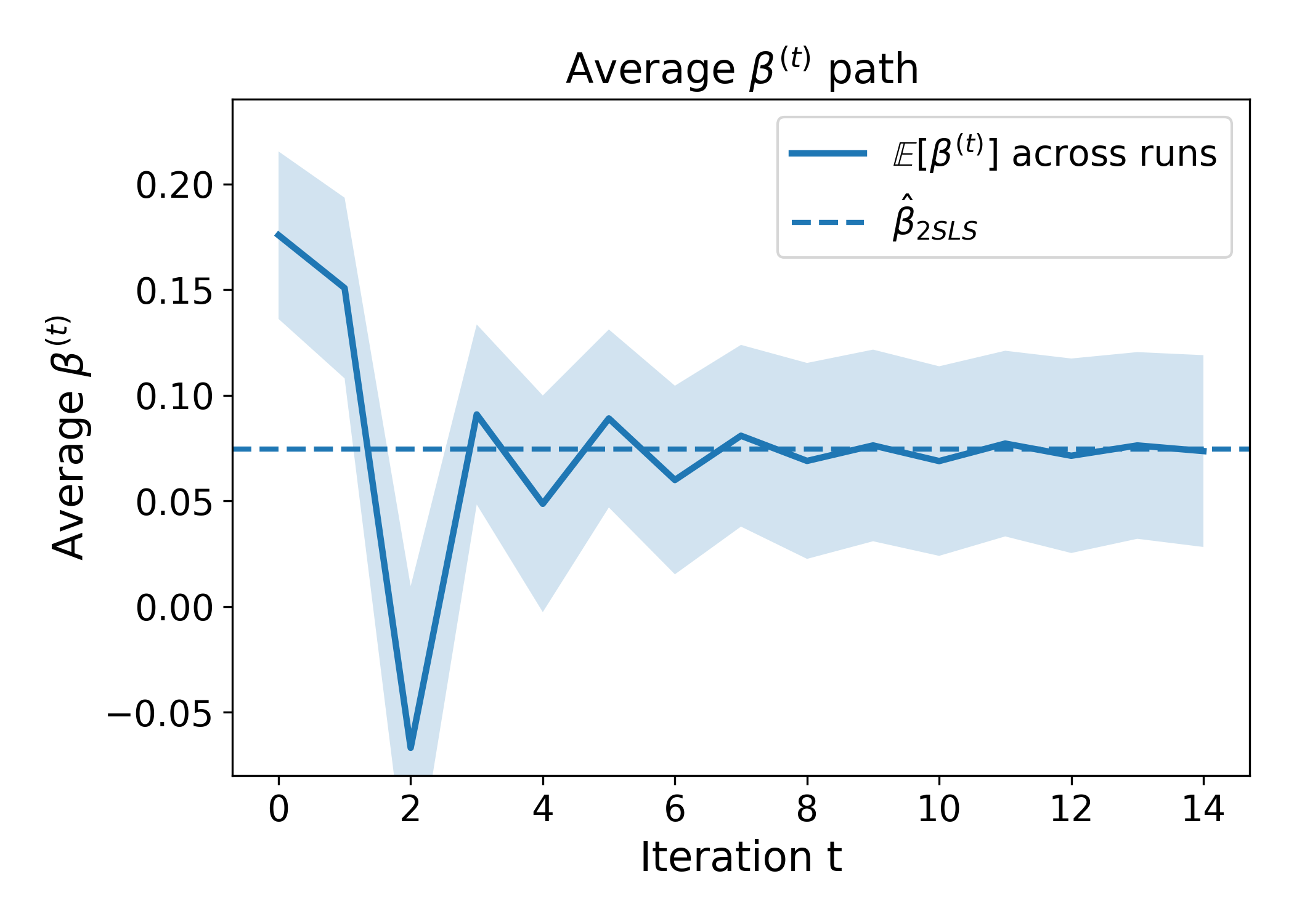}
            \subcaption{}
        \end{subfigure}
    \end{subfigure}
    \hfill
    \begin{subfigure}{0.3\linewidth}
        \centering
        \begin{subfigure}{\linewidth}
            \centering
            \includegraphics[width=\linewidth]{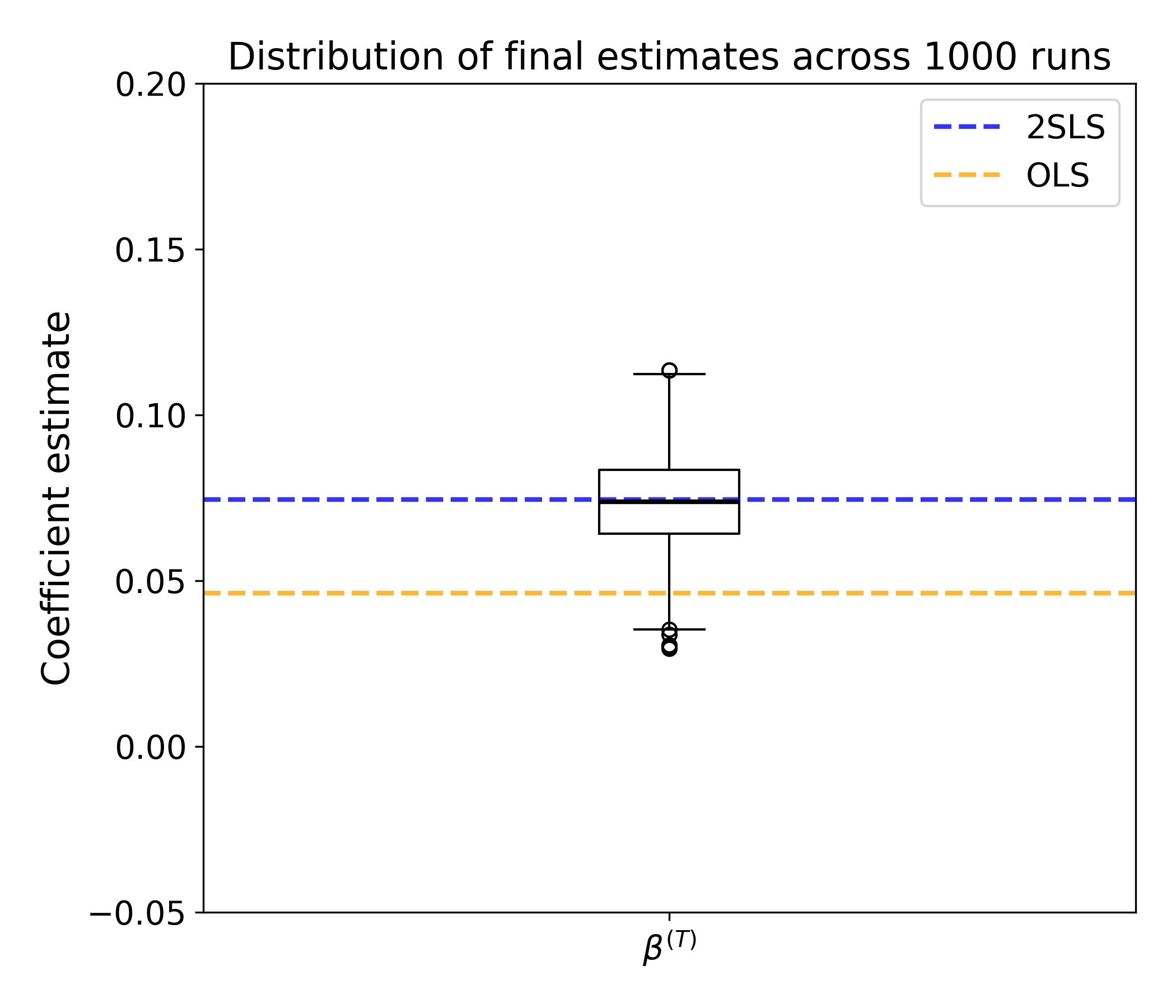}
        \end{subfigure}
        \begin{subfigure}{\linewidth}
            \centering
            \includegraphics[width=\linewidth]{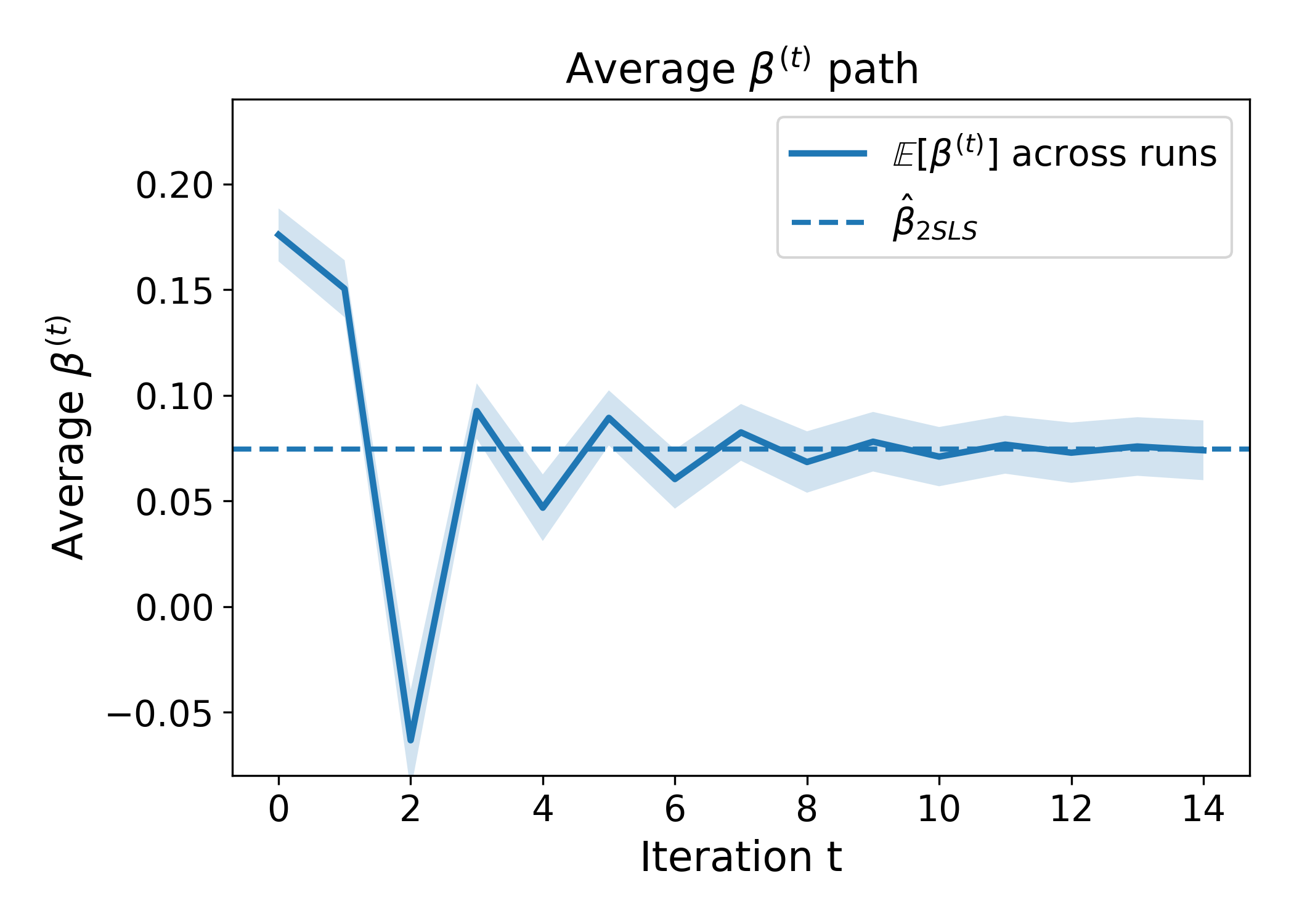}
            \subcaption{}
        \end{subfigure}
    \end{subfigure}
    \hfill
    \begin{subfigure}{0.3\linewidth}
        \centering
        \begin{subfigure}{\linewidth}
            \centering
            \includegraphics[width=\linewidth]{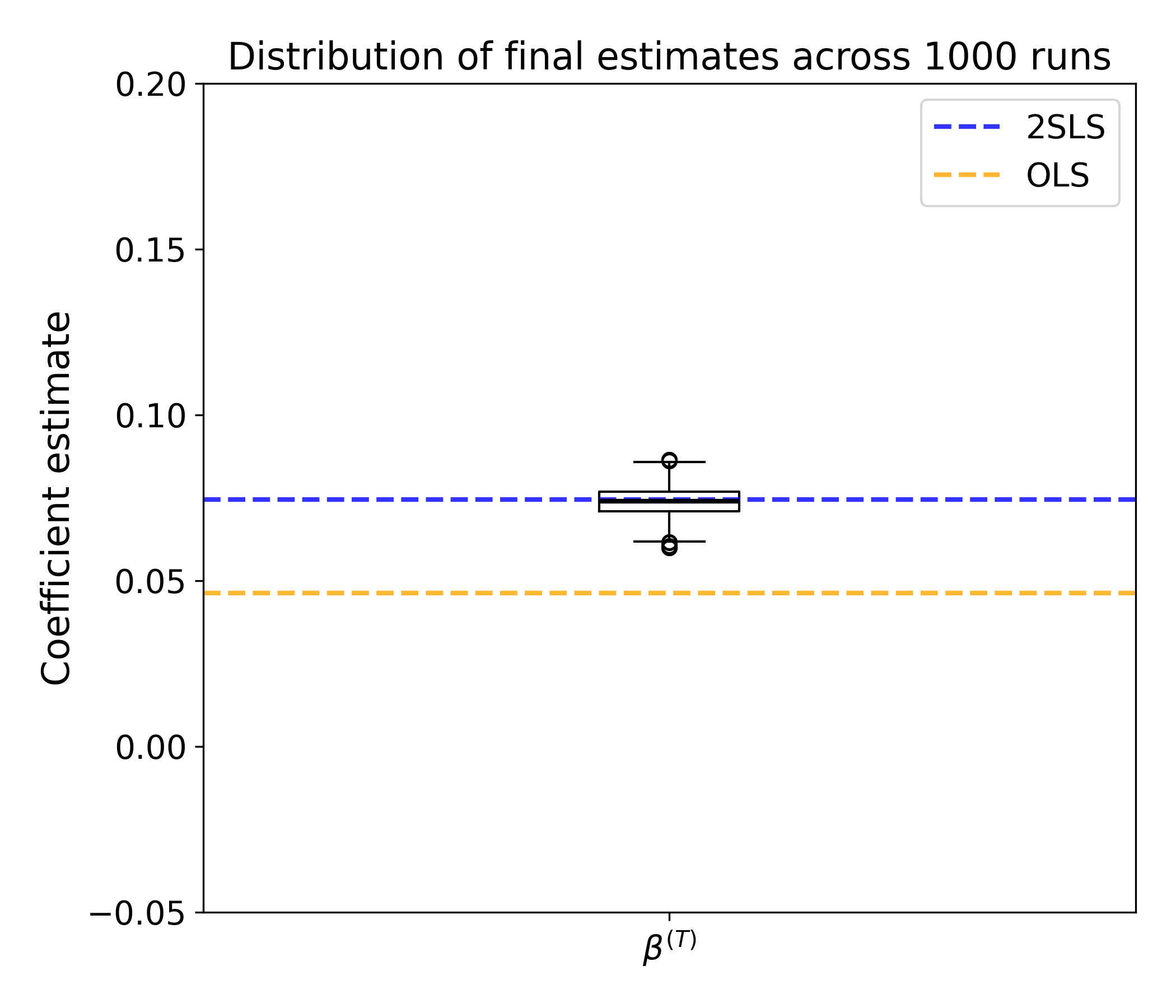}
        \end{subfigure}
        \begin{subfigure}{\linewidth}
            \centering
            \includegraphics[width=\linewidth]{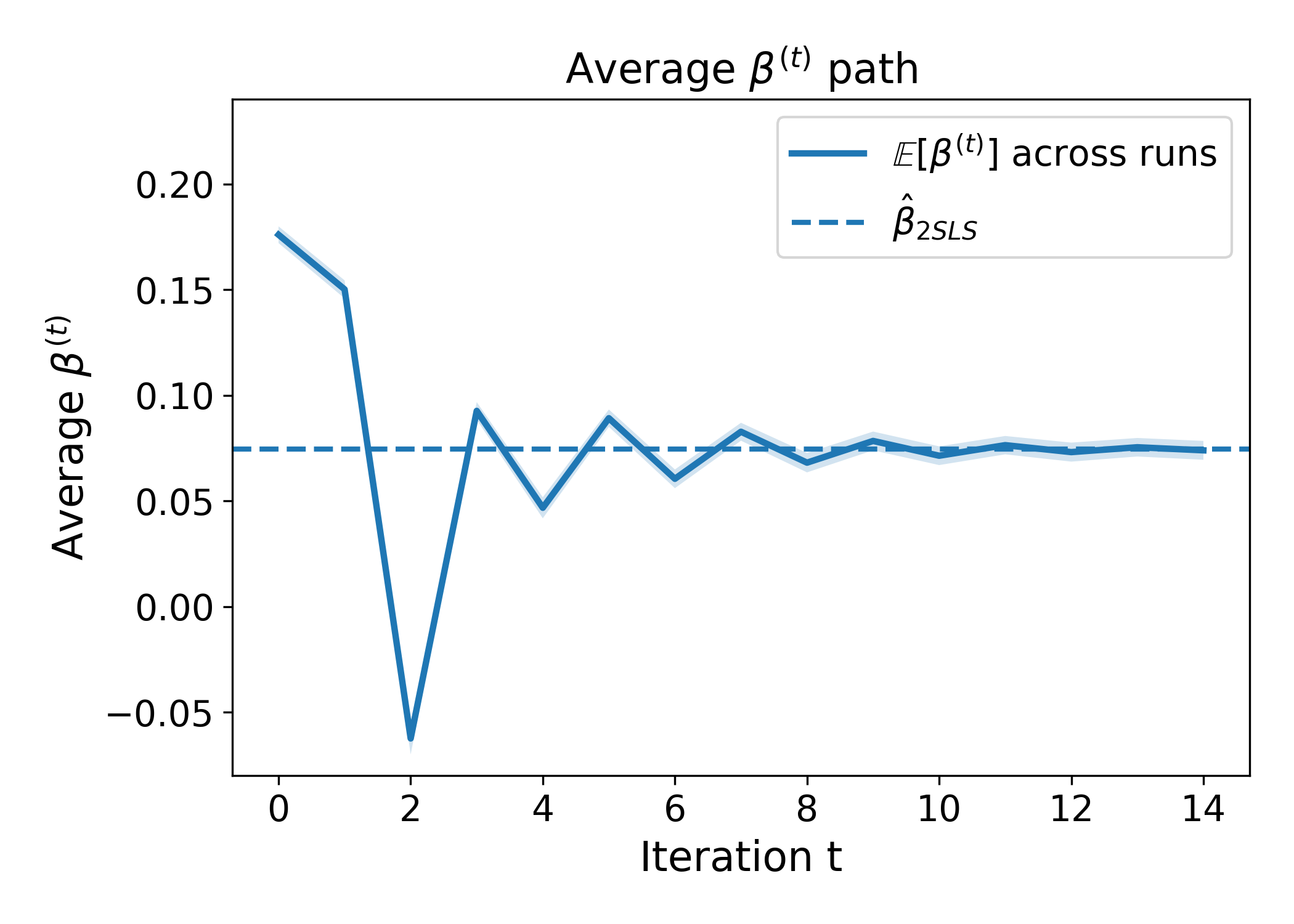}
            \subcaption{}
        \end{subfigure}
    \end{subfigure}

    \caption{Experimental results on the Card dataset with $T=15$. Each column shows the boxplot of final estimates (top) and learning path (bottom). (a)$\rho_1=\rho_2=0.1$. (b)$\rho_1=\rho_2=1$. (c)$\rho_1=\rho_2=10$. The shaded area in the learning path represents the standard error.}
    \label{fig: Card results}
\end{figure}

\end{document}